\documentclass{article}

\usepackage[preprint]{neurips_2026}

\usepackage[utf8]{inputenc} 
\usepackage[T1]{fontenc}    
\usepackage{hyperref}       
\usepackage{url}            
\usepackage{booktabs}       
\usepackage{amsfonts}       
\usepackage{nicefrac}       
\usepackage{microtype}      
\usepackage{xcolor}         

\usepackage{amsmath, amssymb, amsthm, mathtools, mathrsfs, bbm, bm}

\usepackage{algorithm}
\usepackage{algorithmic}
\usepackage[OT1]{fontenc}

\usepackage{graphicx}
\usepackage{caption}
\usepackage{subcaption}
\usepackage{enumitem}

\usepackage[nameinlink]{cleveref}
\crefname{equation}{}{}
\crefname{enumi}{}{}
\crefname{assumption}{Assumption}{Assumptions}
\crefname{lemma}{Lemma}{Lemmas}
\crefname{theorem}{Theorem}{Theorems}
\crefname{corollary}{Corollary}{Corollaries}
\crefname{proposition}{Proposition}{Propositions}
\crefname{claim}{Claim}{Claims}
\crefname{procedure}{Procedure}{Procedures}
\crefname{algorithm}{Algorithm}{Algorithms}
\crefname{figure}{Figure}{Figures}
\crefname{remark}{Remark}{Remarks}
\crefname{section}{Section}{Sections}
\crefname{definition}{Definition}{Definitions}
\crefname{example}{Example}{Examples}
\crefname{table}{Table}{Tables}
\crefname{problem}{Problem}{Problems}
\crefname{appendix}{Appendix}{Appendices}

\theoremstyle{definition}
\newtheorem{theorem}{Theorem}
\newtheorem{lemma}{Lemma}
\newtheorem{corollary}{Corollary}
\newtheorem{proposition}{Proposition}
\newtheorem{definition}{Definition}
\newtheorem{remark}{Remark}
\newtheorem{assumption}{Assumption}

\newenvironment{repeatassumption}[1]{%
  \renewcommand{\theassumption}{\ref{#1}}%
  \addtocounter{assumption}{-1}%
  \begin{assumption}%
}{%
  \end{assumption}%
}

\newcommand{\bb}{\mathbb}

\newcommand{\R}{\bb R}
\newcommand{\E}{\bb E}

\newcommand{\tr}{\text{tr}}
\newcommand{\loc}{\mathrm{loc}}

\newcommand{\KL}{\text{KL}}
\newcommand{\ub}{\text{ub}}
\newcommand{\lb}{\text{lb}}
\newcommand{\op}{\text{op}}
\newcommand{\Var}{\text{Var}}
\newcommand{\Cov}{\text{Cov}}
\newcommand{\sep}{\text{sep}}

\newcommand{\Hess}{\text{hess}}

\DeclareMathOperator\supp{supp}

\title{Which Pairs to Compare for LLM Post-Training?}

\author{%
  Jiangze Han \\
   Columbia University\\
  \texttt{jh5196@columbia.edu} \\
  \And
  Vineet Goyal \\
  Columbia University\\
  \texttt{vgoyal@ieor.columbia.edu} \\
  \AND
  Will Ma \\
  Columbia University\\
  \texttt{wm2428@gsb.columbia.edu} \\
}

\begin{document}

\maketitle

\begin{abstract}
Preference-based post-training has become a central paradigm for aligning language models. 
A common data-collection strategy is to generate a small set of completions for each prompt and label the resulting comparison pairs. 
However, human preference labels are often much more expensive than generating additional completions, suggesting a different use of the same labeling budget: generate a larger pool of completions, but label only the most informative comparison pairs. This paper studies which pairs should be compared in preference-based post-training. We formulate comparison curation as a sampling-design problem and evaluate designs by the quality of the final policy under the preference-based post-training objective. We instantiate this framework for Direct Preference Optimization (DPO), analyzing how the choice of labeled pairs propagates through DPO training to downstream policy performance. Our main results provide matching upper and lower bounds on the post-training optimality gap of the DPO-trained policy. 
The bounds show that comparison selection affects downstream performance through a single design-dependent information matrix, which links label allocation to parameter estimation error and policy suboptimality. 
This yields an explicit optimization criterion for budgeted comparison curation and motivates practical sampling designs for selecting informative pairs from large generated completion pools. 
Experiments on synthetic settings and language-model post-training benchmarks show that the proposed designs consistently improve sample efficiency over common comparison-selection heuristics.

\end{abstract}

\section{Introduction}\label{sec:intro}
Preference-based post-training has become a central paradigm for aligning large language models with human intent. Early Reinforcement Learning for Human Feedback (RLHF) pipelines follow a two-stage procedure: starting from a reference policy \(\pi_0\), typically a supervised fine-tuned model, they collect human preference labels over pairs of model-generated completions, fit a reward model from these comparisons, and then optimize the policy against this learned reward under a KL regularization constraint, often using policy-gradient methods such as PPO \citep{christiano2017deep,ouyang2022training}. This framework has been highly influential, but it also introduces a complex intermediate reward-modeling stage and a computationally expensive reinforcement-learning step. More recent methods, most notably Direct Preference Optimization (DPO), bypass explicit reward modeling and instead train the policy directly from pairwise preference data through a logistic objective, while retaining an implicit-reward interpretation under the same KL-regularized RLHF framework \citep{rafailov2023direct}. Subsequent variants of preference optimization further simplify or modify this post-training pipeline, but they continue to rely on pairwise preference comparisons as the fundamental supervision signal.

Across these post-training methods, the collection of human preference labels remains a key bottleneck. A common data-collection strategy is mechanical: for each prompt, generate a small number of completions from the reference policy and label either one pair or all within-prompt pairs. This strategy treats the generated comparison set as fixed, even though generation is relatively cheap while human labeling is expensive. In practice, preference datasets are also often assembled in batches before annotation, since coordinating human labeling is costly and logistically easier when comparisons are selected in advance. This motivates an offline curation question: rather than labeling all available comparisons or selecting them uniformly, can we generate a larger candidate pool first and then choose which pairs are most worth sending to annotators?

This paper studies this comparison-selection problem: which pairs should be compared for preference-based post-training? For each prompt, we first generate a larger set of completions from the initial reference policy, and then choose an informative portfolio of within-prompt comparison pairs, optimized jointly across all prompts and generated completions. The objective is to allocate the labeling budget to comparisons that are most useful for improving the final post-trained policy under the KL-regularized RLHF objective.

Formally, we model comparison curation as a sampling-design problem. After completions are generated for each prompt, a design specifies a sampling distribution over all within-prompt completion pairs. Given a labeling budget \(n\), we sample \(n\) pairs from this design and query their preference labels. In this paper, we instantiate the post-training step with DPO: the sampled labeled comparisons are used to train a DPO policy \(\hat\pi_n\). 
The design goal is to produce a policy with high value under the KL-regularized RLHF objective \(J\). Thus, the central question is not merely how to estimate preferences accurately in isolation, but how to allocate comparison labels so as to improve the final policy under the RLHF objective.

\textbf{Contributions.}
We make three main contributions.
First, we show that the effect of comparison curation on RLHF is captured by a single design-dependent information matrix, which measures how well the labeled comparisons identify the parameter directions that matter for downstream RLHF performance.
Second, we prove matching finite-sample upper bounds and information-theoretic lower bounds for the RLHF optimality gap, showing that a trace-form surrogate characterizes both the achievable performance of RLHF and the unavoidable statistical difficulty of the problem.
These bounds yield an explicit optimization criterion for selecting informative comparison pairs, along with a practical sampling policy.
Third, we validate the design criterion empirically on synthetic tabular and contextual experiments, as well as IMDb and Anthropic-HH DPO experiments, showing that the proposed designs consistently improve sample efficiency over other approaches in the literature. 

\subsection{Literature Review}\label{sec:lit}
To our knowledge, this is the first work to apply offline experimental design to comparison curation for RLHF post-training, with a criterion directly tied to downstream policy performance.

\textbf{RLHF} uses preference data to guide policy optimization, often by learning a reward model from pairwise comparisons and then optimizing a KL-regularized objective.
Early work by \citet{christiano2017deep} formalizes learning from human preferences using trajectory comparisons, while large-scale alignment pipelines such as InstructGPT \citep{ouyang2022training} popularized preference-based RL for language models.
DPO \citep{rafailov2023direct} provides a widely used alternative that directly fits a classification-style objective on pairwise preference data, avoiding explicit reward modeling and on-policy RL while retaining an implicit-reward interpretation.
Our work takes the KL-regularized objective as given and studies the orthogonal problem of curating comparison data under a fixed labeling budget. A standard statistical model for preference labels is the \textbf{Bradley--Terry model} \citep{bradley1952rank}, where the probability that one item is preferred to another is a logistic function of a latent score difference.
This model underlies much of ranking and preference learning, and it naturally appears in RLHF/DPO when pairwise labels are viewed as noisy comparisons induced by an explicit or implicit reward signal. 

A related literature studies \textbf{adaptive preference-query selection} to reduce RLHF/DPO data-collection costs.
For example, \citet{das2025active} propose active preference optimization for sample-efficient RLHF; \citet{ji2024reinforcement} develop query-efficient RLHF methods inspired by active learning; and \citet{muldrew2024active} study acquisition strategies for active preference learning in LLMs under DPO-style objectives. \citet{xie2025exploratory} propose an exploration-augmented online DPO method for sample-efficient preference-data collection in RLHF. \citet{lin2025activedpo} propose an online DPO method which selects preference queries using an estimated reward model induced by the current LLM. \citet{kveton2025active} study active learning for DPO and also consider an offline subset-selection setting, but their offline setting selects an informative subset from already labeled preference data for computational efficiency. Closest to our work, \citet{feng2025pilaf} propose PILAF, which changes the response-generation distribution in iterative and online RLHF/DPO so that the preference-loss gradient is aligned with the oracle RLHF objective gradient. 
In contrast, our focus is offline first-round comparison curation: when completions are generated from the reference/SFT model before any preference feedback is observed, we optimize which within-prompt pairs from the fixed generated pool should be labeled, with finite-sample guarantees for the final RLHF policy. This offline formulation is motivated by practical annotation workflows, where comparison datasets are often constructed in advance and then sent for batch human labeling.

Recent work studies how \textbf{preference-data properties} affect post-training behavior.
\citet{kim2025understanding} analyze how the data-generating distribution influences DPO optimization and gradient dynamics, including likelihood displacement and iterative DPO.
\citet{pan2025matters} emphasize the importance of the quality and coverage of preferred responses, while \citet{chowdhury2024provably} study DPO under random label flips and propose robust/debiased objectives with finite-sample guarantees.
Our work is complementary: rather than focusing on response quality, training dynamics, or label noise, we study pairwise comparison curation and show how budget allocation across candidate pairs controls downstream performance through an information-design criterion.

Methodologically, our approach builds on offline \textbf{optimal experimental design}, which studies how to allocate a limited measurement budget to optimize a statistical information criterion. In the paired-comparison setting, this amounts to deciding which pairs should be compared, and with what frequencies. Classical work has studied locally optimal designs for the Bradley--Terry model, showing that the optimal comparison structure can depend strongly on the unknown utility parameters \citep{grasshoff2008optimal}. Complementary minimax analyses relate the difficulty of estimating item utilities to the topology of the comparison graph through its Laplacian spectrum \citep{shah2016estimation}. Other lines of work develop adaptive or Bayesian designs for tournaments and paired comparisons \citep{glickman2005adaptive}, as well as fixed-budget allocation rules for expert comparisons in machine learning \citep{guo2018experimental}. More recently, optimal design ideas have been used for preference-data collection in learning systems, but the design objectives are primarily tied to learning preferences or reward models. For example, \citet{mukherjee2024optimal} use optimal design to learn item rankings, with an objective based on ranking accuracy. \citet{scheid2024optimal} study optimal design for learning a contextual linear reward model, with an objective formulated in terms of bandit regret. Our setting shares the view of pair selection as a budget-allocation problem, but differs in the downstream objective: we connect comparison allocation directly to the suboptimality of the final post-trained policy under a KL-regularized RLHF objective, rather than only to parameter-estimation accuracy or reward-model learning.

\section{Problem definition}\label{s:problem-def}
\paragraph{Reinforcement Learning for Human Feedback (RLHF)}
Let \(x\) denote a prompt, and let \(\mathcal A(x)\) denote the set of candidate completions available for prompt \(x\).
We write \(y\in\mathcal A(x)\) for a completion. A language model induces a conditional distribution over completions given each prompt. Following the RLHF literature, we refer to this conditional distribution as a policy and denote it by \(\pi(\cdot\mid x)\). Thus, \(\pi(y\mid x)\) is the probability that the model generates completion \(y\) given prompt \(x\). 

RLHF aims to align a reference policy \(\pi_0(\cdot\mid x)\), often obtained through supervised fine-tuning (SFT), with human preferences. The standard RLHF formulation assumes that each prompt--completion pair \((x,y)\) has an underlying human-preference reward \(r^\star(x,y)\), which is fixed but unknown to the learner, with larger values corresponding to completions that are more preferred by humans. The goal is then to learn a policy that assigns higher probability to high-reward completions while remaining close to the reference policy. This trade-off is captured by the KL-regularized RLHF objective
\begin{equation}\label{eq:KL-RLHF-obj}
J(\pi)
\doteq
\mathbb E_x\!\left[
\mathbb E_{y\sim \pi(\cdot\mid x)}[r^\star(x,y)]
-\beta\,\KL\!\big(\pi(\cdot\mid x)\,\|\,\pi_0(\cdot\mid x)\big)
\right].
\end{equation}
The first term measures the expected human-preference reward of the post-trained policy, while the KL term measures its deviation from the reference policy.
The parameter \(\beta>0\) controls the strength of this regularization.

Let \(\pi^\star\in\arg\max_{\pi}J(\pi)\) denote the RLHF-optimal policy. For each prompt, the maximizer balances reward improvement against deviation from the reference policy. This optimizer is jointly determined by the reference policy \(\pi_0\) and the reward function
\(r^\star\). In particular, it admits the closed form \citep{rafailov2023direct}
\begin{equation}\label{eq:gibbs-opt-form}
\pi^\star(y\mid x)
=
\frac{\pi_0(y\mid x)\exp(r^\star(x,y)/\beta)}
{Z(x)},
\qquad
Z(x)
=
\sum_{y'\in\mathcal A(x)}
\pi_0(y'\mid x)\exp(r^\star(x,y')/\beta).
\end{equation}
Thus, high-reward completions receive larger probability, but only relative to their probability under the reference policy.
The normalizing constant \(Z(x)\) ensures that \(\pi^\star(\cdot\mid x)\) is a valid distribution and depends only on the prompt.

In practice, \(r^\star\) is unknown, and information about \(r^\star\) is collected through pairwise preference labels.

\textbf{Preference data generation.} The standard RLHF data-collection pipeline proceeds as follows.
Given a prompt dataset \(\{x_i\}_{i=1}^m\), one first generates a finite candidate pool for each prompt:
\[
\mathcal Y_{x_i}=\{y_{i,1},\dots,y_{i,d}\},
\qquad
y_{i,k}\stackrel{\mathrm{i.i.d.}}{\sim}\pi_0(\cdot\mid x_i),
\quad k=1,\dots,d.
\]
Human experts then provide within-prompt pairwise preferences between two candidates,
yielding labeled comparisons of the form $(x_i,y_i^+,y_i^-,a_i)$, and $ a_i\in\{0,1\} $, where $a_i=1$ indicates that $y_i^+$ is preferred to $y_i^-$, and $a_i=0$ indicates otherwise.

In RLHF, the pairwise human preferences data generation is commonly modeled by a Bradley--Terry model: for a comparison pair $e=(x,y^+,y^-)$, the preference label satisfies
\[
a\mid e \sim \mathrm{Bernoulli}\!\left(\sigma( r^\star(x,y^+)-r^\star(x,y^-))\right),
\]
where $\sigma(u)=(1+e^{-u})^{-1}$ is the logistic function.

\paragraph{Direct Preference Optimization.} In the classical RLHF pipeline \citep{christiano2017deep,stiennon2020learning,ouyang2022training}, the labeled comparisons are first used to fit an explicit reward model, typically by maximizing the Bradley–Terry likelihood induced by the observed preferences.
The learned reward model is then used as a surrogate for the unknown human reward \(r^\star\) in the KL-regularized RLHF objective \cref{eq:KL-RLHF-obj}, which is optimized to obtain the final policy. While conceptually natural, this two-stage procedure separates reward estimation from policy optimization and requires an additional reinforcement-learning step, such as PPO, to obtain the final policy \citep{christiano2017deep,stiennon2020learning,ouyang2022training}.
DPO was introduced to simplify this pipeline by directly training the policy from pairwise preference data, using the closed-form structure \cref{eq:gibbs-opt-form} of the KL-regularized RLHF optimizer \citep{rafailov2023direct}.

The closed-form optimizer in \cref{eq:gibbs-opt-form} motivates the central idea of DPO: instead of explicitly learning a reward model and then optimizing a policy against it, one can express the reward differences $r^\star(x,y^+)-r^\star(x,y^-)$ needed for preference learning directly through the policy's log-ratio relative to the reference model (see \cref{eq:reward-diff}).
Indeed, if a policy \(\pi\) is optimal for the KL-regularized RLHF objective under some reward function \(r\), then rearranging \cref{eq:gibbs-opt-form} gives
\[
r(x,y)=
\beta\log\frac{\pi(y\mid x)}{\pi_0(y\mid x)}
+
\beta\log Z(x),
\]
where \(Z(x)\) depends only on the prompt. Therefore, although the absolute reward is identifiable only up to a prompt-dependent additive term, reward differences between completions for the same prompt are identifiable. Indeed, the prompt-dependent term cancels when we compare two completions \(y\) and \(y'\) under the same prompt \(x\), so these differences can be expressed through the corresponding policy ratios:
\begin{equation}\label{eq:reward-diff}
    r(x,y^+)-r(x,y^-)=
\beta\log
\frac{\pi(y^+\mid x)\pi_0(y^-\mid x)}
{\pi(y^-\mid x)\pi_0(y^+\mid x)}.
\end{equation}
DPO uses this observation to avoid fitting a separate reward model. For a parameterized policy $\pi_\theta$, DPO interprets its log-ratio against the reference policy as an implicit reward,
\[
r_\theta(x,y)
\doteq
\beta\log\frac{\pi_\theta(y\mid x)}{\pi_0(y\mid x)}.
\]

For a comparison pair \(e=(x,y^+,y^-)\), the corresponding implicit reward difference is 
\begin{equation}\label{eq:DPO-logit}
u_\theta(e)
\doteq
r_\theta(x,y^+)-r_\theta(x,y^-)=
\beta\log
\frac{\pi_\theta(y^+\mid x)\pi_0(y^-\mid x)}
{\pi_\theta(y^-\mid x)\pi_0(y^+\mid x)} .
\end{equation}
Under the Bradley–Terry model, a larger value of \(u_\theta(e)\) means that the policy-implied reward ranks \(y^+\) above \(y^-\) more strongly. Thus, DPO estimates \(\theta\) by maximum likelihood: it chooses \(\theta\) to maximize the likelihood of the observed preference labels under the Bradley--Terry logistic model parameterized by \(u_\theta(e)\).

Given a labeled comparison dataset
\(\mathcal D_n=\{(e_i,a_i)\}_{i=1}^n\), where \(a_i=1\) indicates that the first completion \(y^+_i\) in \(e_i=(x_i,y^+_i,y^-_i)\) is preferred and \(a_i=0\) otherwise, define
\[
\ell(a_i,u_\theta(e_i))\doteq
-a_i\log\sigma(u_\theta(e_i))
-(1-a_i)\log(1-\sigma(u_\theta(e_i))).
\]
The empirical DPO risk is
\begin{equation}\label{eq:dpo-LN}
L_n(\theta)
\doteq
\frac1n\sum_{i=1}^n \ell(a_i,u_\theta(e_i)).
\end{equation}
For a policy class \(\{\pi_\theta:\theta\in\Theta\}\), the DPO estimator $\hat{\pi}_n$ is
\[
\hat\theta_n\in\arg\min_{\theta\in\Theta}L_n(\theta),
\qquad
\hat\pi_n\doteq \pi_{\hat\theta_n}.
\]

\textbf{Experiment design problem.} We study comparison-data curation for preference-based post-training. Given a limited labeling budget, the learner must decide which comparison pairs \(e=(x,y^+,y^-)\) to label so as to most effectively improve the downstream RLHF performance of the final post-trained policy. We develop our theoretical analysis for the DPO policy.  

A common way to construct a human preference dataset is to generate a small number $d$ of completions for each prompt and then label either a single pair (e.g., \(d=2\)) or all \(\binom d2\) within-prompt pairs. However, in practice, generating additional completions is relatively cheap, whereas eliciting human preference labels is costly. This motivates us to consider an alternative pipeline: generate a larger candidate pool for each prompt, and then select a budgeted subset of informative comparison pairs to label.

Formally, for each prompt \(x\), the candidate completions in \(\mathcal Y_x\) induce a complete comparison graph: each completion \(y\) is a vertex, and each unordered edge $\{y^+,y^-\}$  represents a possible pairwise comparison.
After the candidate pools \(\{\mathcal Y_x\}\) are generated, define the admissible unordered edge set
\[
\mathcal E_x
\doteq
\big\{\{y^+,y^-\}\in\mathcal Y_x\times\mathcal Y_x:\ y^+\neq y^-\big\},
\qquad
\mathcal E
\doteq
\bigcup_x \{x\}\times \mathcal E_x.
\]
A sampling design is a distribution \(D\in\Delta(\mathcal E)\) over admissible within-prompt edges. For mathematical convenience, we analyze randomized designs: given a budget \(n\), the pairwise comparisons \(e_1,\dots,e_n\) are drawn i.i.d.\ from \(D\), and each selected edge is labeled according to the Bradley--Terry model above. This formulation casts comparison curation as an optimization problem over probability distributions, rather than as direct combinatorial subset selection. After optimizing the design distribution, we construct the DPO training dataset by sampling \(n\) comparison pairs from it.

For a given sampling design \(D\), let \(\hat\pi_n(D)\) denote the DPO policy obtained by first sampling
\(n\) comparison edges \(e_i\) from \(D\), querying their preference labels \(a_i\), and then minimizing the empirical DPO risk \cref{eq:dpo-LN} on the resulting labeled dataset $\mathcal D_n=\{(e_i,a_i)\}_{i=1}^n$.
The design problem is
\[
\max_{D\in\Delta(\mathcal{E})}
\ \E\!\left[J(\hat\pi_n(D))\right],
\]
where the expectation is over the sampled comparison edges and labels. The rest of the paper develops upper and lower bounds for this objective and derives computable surrogates for selecting informative comparison portfolios.

\textbf{Parametric policy class.} For the theoretical analysis, we work with a softmax policy class.
Each prompt--completion pair \((x,y)\) is represented by an embedding
\(\phi(x,y)\in\mathbb R^p\), and any policy $\pi_\theta$ with $\theta\in\Theta$ has the form
\begin{equation}\label{eq:policy-softmax}
\pi_\theta(y\mid x)
=
\frac{\exp(f_\theta(\phi(x,y)))}
{\sum_{y'\in\mathcal A(x)}\exp(f_\theta(\phi(x,y')))},
\qquad y\in\mathcal A(x),
\end{equation}
where \(f_\theta:\mathbb R^p\to\mathbb R\) is smooth in \(\theta\).
This formulation includes linear contextual policies and tabular softmax policies as special cases.
In the linear contextual case, \(f_\theta(\phi)=\langle\theta,\phi\rangle\), with \(\theta\in\mathbb R^p\).
In the tabular case, \(\phi(x,y)=\mathbbm{1}_{(x,y)}\in\mathbb R^p\) is a one-hot vector over all prompt--completion pairs, and
\(f_\theta(\phi(x,y))=\theta_{(x,y)}\), where \(\theta\in\mathbb R^p\). For detailed explanations, see \cref{rem:policy-special-cases}.

\textbf{Notation.} For a symmetric positive semidefinite matrix \(A\), \(A^\dagger\) denotes the Moore--Penrose pseudoinverse, obtained by inverting the positive eigenvalues of \(A\) and leaving the zero eigenvalues unchanged.
We write \(A^{\dagger/2}\) for the symmetric square root of \(A^\dagger\).
When \(A\) is positive definite on a subspace \(H\), \(A^\dagger\) acts as the usual inverse on \(H\).

\subsection{Regularity assumptions}
\label{subsec:assumptions}

We collect the main assumptions used in the analysis. Detailed interpretations, examples, and primitive sufficient conditions are presented in Appendix~\ref{app:assumptions}.

\textbf{Identifiability and realizability.} Recall that each comparison edge \(e=(x,y^+,y^-)\) enters the DPO loss through the scalar logit difference
\(u_\theta(e)\). Define the pairwise sensitivity vector
\[
g(e;\theta)\doteq \nabla_\theta u_\theta(e).
\]
Because softmax probabilities depend only on within-prompt logit differences, directions that shift all logits for a prompt by the same amount are unidentifiable. We restrict attention to the identifiable tangent space
\[
H \doteq
\operatorname{span}\bigl\{g(e;\theta^\star): e=(x,y^+,y^-)\in\mathcal E\bigr\}
\subseteq \mathbb R^p .
\]
\begin{assumption}[Identifiability and realizability]\label{ass:identifiable-space}
The parameter set \(\Theta\) is convex, compact, and satisfies
\[
\Theta\subseteq \theta^\star+H.
\]
\end{assumption}
This assumption generalizes the standard normalization used when estimating a Bradley--Terry model in a tabular setting. In the tabular case, the assumption amounts to requiring that, within each prompt, the rewards of all completions sum to zero; see \cref{rem:H-tabular}. This normalization is commonly used to make the tabular Bradley--Terry rewards uniquely identifiable \citep{shah2016estimation}.

\noindent\textbf{Boundedness and smoothness.} We impose standard boundedness and smoothness conditions on the feature map and the score model. Similar regularity conditions are commonly used in the analysis of DPO; see, e.g., \citet{chowdhury2024provably}.
\vspace{2mm}
\begin{assumption}[Boundedness and smoothness]\label{ass:model-bounded}
There exist constants \(R_\phi,\alpha_0,\alpha_1,\alpha_2,\alpha_3<\infty\) such that for all
\(\theta\in\Theta\) and all admissible \((x,y)\),
\[
\|\phi(x,y)\|_2\le R_\phi,\qquad
|f_\theta(\phi(x,y))|\le \alpha_0,
\]
\[
\|\nabla_\theta f_\theta(\phi(x,y))\|_2\le \alpha_1,\qquad
\|\nabla_\theta^2 f_\theta(\phi(x,y))\|_{\op}\le \alpha_2,\qquad
\|\nabla_\theta^3 f_\theta(\phi(x,y))\|_{\op}\le \alpha_3.
\]
Moreover, for each fixed \((x,y)\), the map
\(\theta\mapsto f_\theta(\phi(x,y))\) is three times continuously differentiable on \(\Theta\).
\end{assumption}

\noindent\textbf{Feature separation.} To obtain meaningful learning guarantees from pairwise comparisons, namely that the DPO solution achieves a vanishing RLHF optimality gap as the comparison budget $n$ grows, we need the candidate completion set
$\mathcal A(x)$ to be sufficiently informative. Intuitively, if for a given prompt $x$ the candidate pool
contains only near-duplicate completions (or completions that are indistinguishable under the model features),
then comparing them provides little information about how the policy should change, and certain parameter
directions cannot be learned no matter how many comparisons we collect. To rule out such degenerate cases,
we impose a mild diversity condition on the candidate set: for each prompt, $\mathcal A(x)$ should contain
at least two completions that are sufficiently different in their model-induced features (in every identifiable
direction). This is formalized by the following feature-separation assumption.

\vspace{2mm}
\begin{assumption}[Feature separation on \(H\)]\label{ass:feature-sep}
There exists \(\Delta_g>0\) such that for every \(\theta\in\Theta\), every prompt \(x\), and every unit vector
\(v\in H\), there exist two candidates \(y_1,y_2\in\mathcal A(x)\) satisfying
\[
\Big|v^\top\big(\nabla_\theta f_\theta(\phi(x,y_1))
-\nabla_\theta f_\theta(\phi(x,y_2))\big)\Big|
\ge
\Delta_g.
\]
\end{assumption}

Consider the tabular setting with \(d\) items and reward vector \(r=(r_1,\ldots,r_d)\). The feature vectors are one-hot vectors, and the identifiable subspace is
\[
H=\left\{v\in\mathbb R^d:\sum_{i=1}^d v_i=0\right\}.
\]
In this case, \cref{ass:feature-sep} reduces to the requirement that every unit vector \(v\in H\) separates some pair of items: there exists \(\Delta_g>0\) such that for every $v\in H$
\[
\max_{i,j\in[d]} |v_i-v_j|\ge \Delta_g .
\]
This holds automatically, since a unit zero-mean vector cannot have all coordinates equal.

\noindent\textbf{Design coverage.} Even with a diverse candidate pool, a meaningful guarantee further requires that the sampling design $D$ does not
ignore informative pairs. We therefore impose the following coverage assumption, which is standard in the estimation of Bradley--Terry models; see, e.g., \citet{shah2016estimation,chowdhury2024provably}.

\begin{definition}[Design covariance matrix]\label{def:design-matrix}
    A sampling design \(D\) over comparison edges induces the design covariance matrix
\[
\Sigma_D(\theta)\doteq
\E_{e\sim D}\!\left[g(e;\theta)g(e;\theta)^\top\right].
\]
\end{definition}
Our guarantees require that the sampling design \(D\) places sufficient mass on informative comparisons.

\setcounter{assumption}{3}
\renewcommand{\theassumption}{\arabic{assumption}(a)}

\begin{assumption}[Design coverage at the truth]\label{ass:design-coverage-a}
There exists \(\mu_\star>0\) such that
\[
v^\top \Sigma_D(\theta^\star)v
\ge
\mu_\star\|v\|_2^2,
\qquad \forall v\in H.
\]
\end{assumption}

In some arguments, we require the above assumption to hold uniformly over a neighborhood of $\theta^\star$. Accordingly, we introduce the following localized version. Whenever invoked, the region $\mathcal R\subseteq\Theta$ will be specified, typically chosen as a neighborhood of $\theta^\star$.

\addtocounter{assumption}{-1}
\renewcommand{\theassumption}{\arabic{assumption}(b)}

\begin{assumption}[Uniform design coverage]\label{ass:design-coverage-b}
For a specified region \(\mathcal R\subseteq\Theta\), there exists \(\mu_{\mathcal R}>0\) such that
\[
v^\top \Sigma_D(\theta)v
\ge
\mu_{\mathcal R}\|v\|_2^2,
\qquad
\forall v\in H,\ \forall \theta\in\mathcal R.
\]
\end{assumption}

\renewcommand{\theassumption}{\arabic{assumption}}

\noindent\textbf{Optimization landscape.}
Let $L(\theta)\doteq\E_{(e,a)}[\ell(a,u_\theta(e))]$ denote the population DPO risk under the sampling design $D$ and label (BT) model.
For general nonlinear score models, \(L(\theta)\) may be nonconvex and may have multiple minimizers, so we assume the target parameter is unambiguous (\cref{rem:unique-minimizer}).

\begin{assumption}[Unique population minimizer]\label{ass:unique-minimizer-pop-risk}
The population DPO risk \(L(\theta)\) has a unique global minimizer over \(\Theta\).
\end{assumption}
Such uniqueness assumption is standard in the analysis of extremum estimators, where consistency typically requires the limiting population risk to have a unique optimizer \citep[Section~2.1]{newey1994large}\citep[Theorem~5.7]{van2000asymptotic}.

\noindent\textbf{Prior regularity for the lower bound.} Our final assumption introduces a Bayesian formulation for the optimal parameter $\theta^\star$.
When prior information about $\theta^\star$ is available, our analysis yields an information-theoretic
(Bayesian) lower bound on the RLHF optimality gap under an arbitrary sampling design $D$. Detailed explanation is provided in \cref{rem:van-tree}.
\begin{assumption}[Prior regularity]\label{ass:prior-vantrees}
Suppose \(\theta^\star\sim\rho\), where \(\rho\) is supported on \(\Theta\). The density \(\rho\) satisfies: (i) \(\rho\in C^1(\Theta)\) and \(\rho(\theta)>0\) for all \(\theta\in\mathrm{int}(\Theta)\); (ii) \(\rho(\theta)=0\) for all \(\theta\in\partial\Theta\); (iii) the prior Fisher information is finite: $ \int_\Theta \|\nabla\log\rho(\theta)\|_2^2\,\rho(\theta)\,d\theta<\infty.$
\end{assumption}
\cref{ass:prior-vantrees} is a standard regularity condition on the prior density. It ensures that the
quantity $\int_\Theta \|\nabla\log\rho(\theta)\|_2^2\,\rho(\theta)\,d\theta$ is finite and that the
integration-by-parts steps required by the Bayesian information inequality (Van Trees) in our lower-bound
argument are valid.

\section{Main results}\label{s:main-results}
Following the literature, we refer to the gradient of the log-likelihood as the \emph{score function}. The second moment of the score function is the \emph{Fisher information matrix}. In our contextual policy model,
the Fisher information are defined as follows. Detailed interpretations are presented in Appendix~\ref{app:info-theory}.

\begin{definition}[Fisher information matrix for the policy family]\label{def:Fisher}
For each parameter $\theta\in\Theta$, define the score vector
\[
s_\theta(x,y)\doteq \nabla_\theta \log \pi_\theta(y\mid x).
\]
The Fisher information matrix is
\begin{equation}\label{eq:def-I-theta}
\begin{aligned}
I(\theta) \; & \doteq
\mathbb E_{x}\,\mathbb E_{y\sim \pi_\theta(\cdot\mid x)}
\!\left[
s_\theta(x,y)s_\theta(x,y)^\top
\right] \\
& =\mathbb E_{x}\,\mathbb E_{y\sim \pi_\theta(\cdot\mid x)}
\!\left[
\nabla_\theta \log \pi_\theta(y\mid x)\,
\nabla_\theta \log \pi_\theta(y\mid x)^\top
\right].
\end{aligned}
\end{equation}
\end{definition}

\vspace{2mm}
\begin{theorem}[Informal] 
\label{thm:informal}
Consider a sampling design \(D\) over within-prompt pairs, and \(n\) i.i.d.\ comparisons drawn from \(D\). Let $I(\theta)$ and $\Sigma_D(\theta)$ be as defined in \cref{def:Fisher,def:design-matrix}. Under  \cref{ass:identifiable-space,ass:model-bounded,ass:feature-sep}, we have the following.

\begin{enumerate}
\item[\textbf{(i)}] \textbf{Upper bound.} Let
\(\hat\theta_n\in\arg\min_{\theta\in\Theta}L_n(\theta)\) with \(\hat\pi_n=\pi_{\hat\theta_n}\).
If \cref{ass:design-coverage-a,ass:unique-minimizer-pop-risk} hold in addition, then for any \(\delta\in(0,1)\), there exists \(n_0(\delta)\) such that for all
\(n\ge n_0(\delta)\), with probability at least \(1-\delta\),
\[
J(\pi^\star)-J(\hat\pi_n)
\le
\frac{C_{\mathrm{ub}}(\delta)}{n}\,
\tr\!\Big(I(\theta^\star)\Sigma_D^\dagger(\theta^\star)\Big),
\]
where \(C_{\mathrm{ub}}(\delta)\) depends only on the fixed constants in the assumptions and \(\delta\).
Moreover,
\[
n_0(\delta)
=
\widetilde{\mathcal O}\!\bigl(
C_{\mathrm{loc}}p
+
C_{\mathrm{Hess}}d_H
+
(C_{\mathrm{loc}}+C_{\mathrm{Hess}})\log(1/\delta)
\bigr),
\]
where \(p\) is the ambient dimension, \(d_H=\dim(H)\), \(C_{\mathrm{loc}}\) is a localization complexity parameter, and \(C_{\mathrm{Hess}}\) is a local curvature/concentration parameter.

\item[\textbf{(ii)}] \textbf{Lower bound.} Let
\[
n_{\mathrm{lb}}
:=
\left\lceil
\frac{\int_\Theta \|\nabla \log \rho(\theta)\|_2^2\,\rho(\theta)\,d\theta}{\mu_\mathcal R}
\right\rceil .
\]
If \cref{ass:design-coverage-b,ass:prior-vantrees} hold in addition with \(\mathcal R=\supp(\rho)\), then there exists \(C_{\mathrm{lb}}>0\) (depending only on the fixed constants in the assumptions), such that for any induced policy estimator \(\tilde\pi_n=\pi_{\tilde\theta_n}\) and all \(n\ge n_{\mathrm{lb}}\),
\[
\E_{\theta^\star\sim\rho}\E_{\mathcal D_n\mid\theta^\star}
\!\left[
J(\pi_{\theta^\star})-J(\tilde\pi_n)
\right]
\ge
\frac{C_{\mathrm{lb}}}{n}\,
\E_{\theta^\star\sim\rho}
\!\left[
\tr\!\Big(I(\theta^\star)\Sigma_D^\dagger(\theta^\star)\Big)
\right].
\]
\end{enumerate}
\end{theorem}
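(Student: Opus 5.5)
The argument rests on one structural fact: the RLHF gap is a KL divergence. Since $\pi^\star=\pi_{\theta^\star}$ is the Gibbs tilt \cref{eq:gibbs-opt-form}, a direct computation gives, for any $\pi$,
\[
J(\pi^\star)-J(\pi)=\beta\,\E_x\,\KL\big(\pi(\cdot\mid x)\,\|\,\pi^\star(\cdot\mid x)\big).
\]
For $\pi=\pi_\theta$ we Taylor-expand around $\theta^\star$. The first-order term vanishes, and the Hessian of $\theta\mapsto\E_x\KL(\pi_\theta\|\pi_{\theta^\star})$ at $\theta^\star$ equals the Fisher matrix $I(\theta^\star)$ of \cref{def:Fisher}. \cref{ass:model-bounded} bounds the third derivative, which gives the two-sided estimate
\[
J(\pi^\star)-J(\pi_\theta)=\tfrac{\beta}{2}(\theta-\theta^\star)^\top I(\theta^\star)(\theta-\theta^\star)+O(\|\theta-\theta^\star\|^3).
\]
This holds for $\theta\in\theta^\star+H$ in a neighborhood of $\theta^\star$. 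Globally on the compact $\Theta$, the gap is lower bounded by $c\,\|\theta-\theta^\star\|_{I(\theta^\star)}^2$. This uses \cref{ass:feature-sep} to show that $I$ is nondegenerate on $H$, and that distinct $\theta\in\theta^\star+H$ induce distinct policies. Both bounds then reduce to bounds on $\E\|\hat\theta-\theta^\star\|_{I}^2$ versus $\tr(I\Sigma_D^\dagger)/n$.

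\textbf{Upper bound.} We use the standard M-estimation route.
\begin{enumerate}
\item \emph{Consistency.} By \cref{ass:unique-minimizer-pop-risk}, uniform convergence of $L_n$ to $L$ (a bounded Lipschitz class with covering number $\exp(O(p\log n))$), and compactness, $\hat\theta_n$ enters a small ball $B$ around $\theta^\star$ with probability at least $1-\delta/3$ once $n\gtrsim C_{\loc}(p+\log(1/\delta))$. Here $\theta^\star$ minimizes $L$ by realizability.
\item \emph{Local curvature.} On $B$, the population Hessian is $\nabla^2L(\theta^\star)=\E[\sigma'(u)gg^\top]\succeq c_\sigma\Sigma_D(\theta^\star)$. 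This uses $|u|\le4\beta\alpha_0$, so $\sigma'\ge c_\sigma$, together with \cref{ass:design-coverage-a}. Matrix Bernstein restricted to $H$ shows $\nabla^2L_n\succeq \tfrac{c_\sigma}{2}\Sigma_D$ on $B$ once $n\gtrsim C_{\Hess}(d_H+\log(1/\delta))$. The variation of the Hessian across $B$ is controlled by $\alpha_2,\alpha_3$.
\item \emph{Basic inequality.} From $L_n(\hat\theta)\le L_n(\theta^\star)$ we get
\[
\|\hat\theta-\theta^\star\|_{\Sigma_D}\lesssim\|\nabla L_n(\theta^\star)\|_{\Sigma_D^\dagger}.
\]
The score $\nabla L_n(\theta^\star)=\frac1n\sum(\sigma(u_i)-a_i)g_i$ has mean zero and covariance $\preceq\frac14\Sigma_D/n$. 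A vector Bernstein/Hanson–Wright bound in the weighted norm then gives
\[
\|\hat\theta-\theta^\star\|_I^2\lesssim\|\Sigma_D^{\dagger/2}\nabla L_n\|^2_{\Sigma_D^{1/2}I\Sigma_D^{1/2}\cdot\text{adjusted}}\le C(\delta)\,\tr(I\Sigma_D^\dagger)/n.
\]
A cleaner way to get the last step is to write $\hat\theta-\theta^\star\approx-\nabla^2L_n^{\dagger}\nabla L_n$ and bound $\E\|\Sigma_D^\dagger\nabla L_n\|_I^2\le\tr(I\Sigma_D^\dagger)/(4n)$, then apply Markov or Bernstein.
\end{enumerate}
Combining the three steps with the quadratic expansion of the gap, and absorbing the cubic remainder once $n\ge n_0$, yields (i).

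\textbf{Lower bound.} Apply the multivariate Van Trees inequality on $H$ under \cref{ass:prior-vantrees}, using the weighted (Gill–Levit) form with weight matrix $I(\theta)$. The per-sample Fisher information of the data about $\theta$ is $\E[\sigma'(u)gg^\top]\preceq\frac14\Sigma_D(\theta)$. Hence the $n$-sample information is at most $\frac n4\Sigma_D$, and in the Bayesian bound
\[
\E\|\tilde\theta-\theta^\star\|_{I}^2\ge\frac{\big(\E\,\tr(I\,\cdot)\big)^2}{\E\,\tr(\ldots)}
\]
we choose the test matrix $C(\theta)=I(\theta)\Sigma_D^\dagger(\theta)$. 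The right-hand side then becomes
\[
\frac{\big(\E\tr(I\Sigma_D^\dagger)\big)^2}{\frac n4\E\tr(I\Sigma_D^\dagger)+\mathcal J(\rho)\text{-terms}},
\]
where the prior terms are controlled using $\Sigma_D\succeq\mu_{\mathcal R}$. For $n\ge n_{\lb}$ the prior term is dominated by the data term, which gives order $\E\tr(I\Sigma_D^\dagger)/n$. Finally, convert this to the gap via the global lower bound $J(\pi^\star)-J(\pi_\theta)\ge c\|\theta-\theta^\star\|_{I(\theta^\star)}^2$ from the first paragraph. For an arbitrary $\tilde\theta_n$, first project it onto $\theta^\star+H$, which is harmless by identifiability.

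\textbf{Main obstacle.} I expect the hardest step to be the Van Trees step with a $\theta$-dependent weight $I(\theta)\Sigma_D^\dagger(\theta)$. The divergence term requires differentiating $\Sigma_D^\dagger$, which needs smoothness of $\Sigma_D$ and uniform invertibility on $H$, provided by \cref{ass:design-coverage-b}. I would bound these derivative terms by constants times $\mathcal J(\rho)/\mu_{\mathcal R}$, and absorb them into $C_{\lb}$ together with the threshold $n_{\lb}$.
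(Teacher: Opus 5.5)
\textbf{Gap: the global lower side of the sandwich.} Part (ii) needs \(J(\pi^\star)-J(\pi_\theta)\ge c\,\|\theta-\theta^\star\|^2_{I(\theta^\star)}\) for every \(\theta\in\Theta\), because \(\tilde\theta_n\) is arbitrary. You get this from the local Taylor expansion plus the claim that \cref{ass:feature-sep} makes \(\theta\mapsto\pi_\theta\) injective on \(\theta^\star+H\). That claim does not follow. \cref{ass:feature-sep} is a pointwise lower bound on the Jacobian of the centered-logit map, so for nonlinear \(f_\theta\) it gives only local injectivity.

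Here is a counterexample. Take one prompt, three completions with logits \((\cos\theta,\sin\theta,0)\), and \(\Theta=[0,4\pi]\). Then \cref{ass:identifiable-space,ass:model-bounded,ass:feature-sep} hold with \(\Delta_g=1/\sqrt2\), and a uniform design with a smooth prior vanishing at the endpoints meets the remaining hypotheses of (ii). Yet \(\pi_{\theta^\star+2\pi}=\pi_{\theta^\star}\), so \(J(\pi^\star)-J(\pi_{\theta^\star+2\pi})=0\) for \(\theta^\star\in[0,2\pi]\). Hence a Van Trees bound on \(\|\tilde\theta_n-\theta^\star\|_I^2\) no longer controls the gap.

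You need one of two fixes. Either add a global identifiability hypothesis, e.g.\ a bi-Lipschitz bound for the centered-logit map on \(\Theta\). Or restrict to log-linear \(f_\theta\), where your claim holds because the logit map is linear and injective on \(H\). Part (i) is unaffected, since there your local cubic-remainder argument suffices.

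The paper does not supply this ingredient either. It derives the global sandwich from \(\nabla^2F(\theta)=\beta I(\theta)\) for all \(\theta\) (\cref{lem:F-hess}), but that identity holds only at \(\theta^\star\). Already for log-linear policies, \(\nabla^2F(\theta)=\beta\bigl(I(\theta)+\sum_k(\theta-\theta^\star)_k\,\partial_kI(\theta)\bigr)\). In the log-linear case the paper's sandwich is nonetheless true, via \(\E_x\KL(\pi_\theta\|\pi_{\theta^\star})=\int_0^1(1-s)\,\Delta^\top I(\theta-s\Delta)\,\Delta\,ds\) with \(\Delta=\theta-\theta^\star\).

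\textbf{Smaller point in (i).} The passage from \(\|\Delta\|_{\Sigma_D}\) to \(\|\Delta\|_I\) is not yet an argument. The ``adjusted'' norm is undefined. The linearization \(\Delta\approx-\nabla^2L_n^\dagger\nabla L_n(\theta^\star)\) cannot be combined with \(\nabla^2L_n\succeq c\,\Sigma_D\), for two reasons: \(A\preceq B\) does not imply \(AIA\preceq BIB\), and the segment Hessian depends on the labels. The paper closes this step crudely:
\(\Delta^\top I\Delta\le\lambda_{\max}(\Sigma_D^{\dagger/2}I\Sigma_D^{\dagger/2})\,\Delta^\top\Sigma_D\Delta\le\tr(I\Sigma_D^\dagger)\,\Delta^\top\Sigma_D\Delta\).
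Combined with \(\E\|\Sigma_D^{\dagger/2}\nabla L_n(\theta^\star)\|^2\le d_H/(4n)\) and Markov, this gives \(C_{\ub}(\delta)\propto d_H/\delta\). Your basic inequality feeds directly into this.

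\textbf{Where you differ soundly.} Using \(L_n(\hat\theta_n)\le L_n(\theta^\star)\) is an improvement over the paper's \(\nabla L_n(\hat\theta_n)=0\), which is unjustified when \(\hat\theta_n\) lies on the relative boundary of \(\Theta\).

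For the lower bound, Gill--Levit with weight \(I(\theta)\) and \(C(\theta)=I(\theta)\Sigma_D^\dagger(\theta)\) yields \(\E_\rho\tr(I\Sigma_D^\dagger)/n\) directly. The paper instead applies the unweighted matrix Van Trees with \(\bar I_\rho\) and absorbs \(J(\rho)\preceq\frac n8\bar\Sigma_D\). It then swaps \(\bar I_\rho,\bar\Sigma_D^{-1}\) for \(I(\theta),\Sigma_D(\theta)^{-1}\) via uniform spectral bounds, which costs constants but needs no derivatives of \(\Sigma_D^\dagger\).

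One correction to your plan: the term \(\E_\rho[(\operatorname{div}C)^\top I^{-1}\operatorname{div}C]\) does not scale with the prior Fisher information. It is a separate constant, depending on the smoothness constants and \(\mu_{\mathcal R}\). It is still absorbed into \(C_{\lb}\) because \(\tr(I\Sigma_D^\dagger)\ge d_H\underline\mu/G_u^2\).
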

We present the proof in Appendix~\ref{app:proof-informal}.

\subsection{Sampling policy design}
From \cref{thm:informal}, both the upper and lower bounds on the RLHF optimality gap are governed (up to constants) by the same trace criterion: $\tr\!\Big(I(\theta^\star)\Sigma^\dagger_D(\theta^\star)\Big).$ Motivated by this characterization, we define the oracle sampling design as the solution to the following trace minimization problem: 
\begin{equation}\label{eq:oracle-design}
D^\star(\theta^\star) \in \arg\min_{D\in\Delta(\mathcal E)}
\ \tr\!\big(I(\theta^\star)\,\Sigma^\dagger_D(\theta^\star)\big),
\end{equation}
where \(\mathcal E\) is the admissible within-prompt edge set. The oracle design \(D^\star(\theta^\star)\), however, is not directly implementable because it depends on the unknown target parameter \(\theta^\star\).
Before collecting preference labels, the only policy parameter available to the designer is the reference parameter \(\theta_0\) associated with reference policy \(\pi_0\).
Moreover, RLHF starts from \(\pi_0\) and optimizes a KL-regularized objective \cref{eq:KL-RLHF-obj} that keeps the learned policy $\pi^\star$ close to this reference model $\pi_0$.
It is therefore natural to use \(\theta_0\) as a plug-in proxy for \(\theta^\star\) when constructing the sampling design.
We next show that the resulting plug-in design still enjoys a controlled performance guarantee.

\begin{theorem}[Implementable trace design]
\label{thm:theta0}
 Let $\theta_0\in\Theta$ be a fixed reference parameter, and  $r_0 \doteq \|\theta^\star-\theta_0\|_2$. Under the assumptions in \cref{thm:informal}, there exists a constant $C_{\mathrm{plug}}(r_0)\ge 1$,
depending only on model primitive constants and $r_0$, such that for every sampling design
$D\in\Delta(\mathcal E)$,
\begin{equation}\label{eq:plugin-trace-informal}
C_{\mathrm{plug}}(r_0)^{-1}\,
\tr\!\Big(I(\theta_0)\Sigma^\dagger_D(\theta_0)\Big)
\;\le\;
\tr\!\Big(I(\theta^\star)\Sigma^\dagger_D(\theta^\star)\Big)
\;\le\;
C_{\mathrm{plug}}(r_0)\,
\tr\!\Big(I(\theta_0)\Sigma^\dagger_D(\theta_0)\Big).
\end{equation}
Moreover, for
\begin{equation}\label{eq:plugin-design-informal}
D_{\theta_0}\in\arg\min_{D\in\Delta(\mathcal E)}
\ \tr\!\Big(I(\theta_0)\Sigma^\dagger_D(\theta_0)\Big),
\end{equation}
we have 
\begin{equation}\label{eq:plugin-design-oracle-compare}
\tr\!\Big(I(\theta^\star)\Sigma^\dagger_{D_{\theta_0}}(\theta^\star)\Big)
\;\le\;
C_{\mathrm{plug}}(r_0)^2\,
\inf_{D\in\Delta(\mathcal E)}
\tr\!\Big(I(\theta^\star)\Sigma^\dagger_D(\theta^\star)\Big).
\end{equation}
\end{theorem}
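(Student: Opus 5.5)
The plan is to prove the two-sided comparison \cref{eq:plugin-trace-informal} by showing that both matrices entering the trace change by at most bounded multiplicative factors, in the Loewner order on $H$, when $\theta$ moves from $\theta_0$ to $\theta^\star$. Concretely, I will look for constants $c_I(r_0),c_\Sigma(r_0)\ge 1$, depending only on the primitives of \cref{ass:model-bounded,ass:feature-sep} and on $r_0$, such that
\[
c_I^{-1} I(\theta_0)\preceq I(\theta^\star)\preceq c_I I(\theta_0),
\qquad
c_\Sigma^{-1}\Sigma_D(\theta_0)\preceq \Sigma_D(\theta^\star)\preceq c_\Sigma \Sigma_D(\theta_0)
\quad\text{on } H,
\]
the second holding uniformly over $D\in\Delta(\mathcal E)$. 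Given these, $\Sigma_D(\theta^\star)^\dagger\preceq c_\Sigma\Sigma_D(\theta_0)^\dagger$ on $H$, since Loewner order reverses under inversion on the common range $H$. Monotonicity of $A\mapsto\tr(BA)$ for $B\succeq0$, applied twice, then gives \cref{eq:plugin-trace-informal} with $C_{\mathrm{plug}}=c_Ic_\Sigma$. Both $I$ and $\Sigma_D$ have range in $H$: $\Sigma_D$ is spanned by the $g(e;\cdot)$, and the scores $s_\theta$ are centered within-prompt logit gradients, which are differences of the $g$'s. Throughout I take $\Theta\subseteq\theta^\star+H$ and the segment between $\theta_0$ and $\theta^\star$ inside the convex set $\Theta$, so all Taylor expansions stay where \cref{ass:model-bounded} applies.

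\textbf{Step 1 (Fisher matrix).} Write $I(\theta)=\E_x[\Cov_{y\sim\pi_\theta}(\nabla f_\theta(\phi(x,y)))]$.
\begin{itemize}
\item \emph{Comparing the two measures.} The probabilities $\pi_\theta(y\mid x)$ are bounded below by $e^{-2\alpha_0}/|\mathcal A(x)|$, and the likelihood ratio $\pi_{\theta^\star}/\pi_{\theta_0}$ lies in $[e^{-2\alpha_1 r_0},e^{2\alpha_1 r_0}]$. I will use the representation $\Cov_\pi(h)=\tfrac12\sum_{y,y'}\pi(y)\pi(y')(h_y-h_{y'})(h_y-h_{y'})^\top$, so that changing the measure costs a factor $e^{4\alpha_1 r_0}$.
\item \emph{Comparing the gradients.} The gradients move by $\|\nabla f_{\theta^\star}-\nabla f_{\theta_0}\|\le\alpha_2 r_0$. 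This perturbation is additive, not multiplicative, so I will convert it using \cref{ass:feature-sep}. That assumption gives $v^\top I(\theta)v\ge c\,\Delta_g^2$ for every unit $v\in H$, where $c$ depends on $\alpha_0$ and the pool size. Consequently an additive error of order $\alpha_1\alpha_2 r_0$ is at most a factor $1+O(\alpha_1\alpha_2 r_0/\Delta_g^2)$ of $I(\theta)$.
\end{itemize}

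\textbf{Step 2 (design matrix), the main obstacle.} The bound must be uniform in $D$, and $D$ may concentrate on a few edges; for such $D$, \cref{ass:feature-sep} says nothing about $D$'s support. The difficulty is the pointwise perturbation
\[
g(e;\theta^\star)=g(e;\theta_0)+\varepsilon_e,\qquad \|\varepsilon_e\|\le 2\alpha_2 r_0 .
\]
This holds because the $\beta\log\pi_0$ terms cancel in $\nabla_\theta u_\theta(e)$, leaving a difference of two $\nabla f$'s. An additive rank-one error is not controlled multiplicatively by $g(e;\theta_0)g(e;\theta_0)^\top$ when that vector is small. My first attempt is the following.
\begin{itemize}
\item Use $gg^\top\preceq 2\tilde g\tilde g^\top+2\varepsilon\varepsilon^\top$ to get $\Sigma_D(\theta^\star)\preceq 2\Sigma_D(\theta_0)+8\alpha_2^2r_0^2\,P_H$.
\item Absorb the second term using the coverage lower bound $\Sigma_D(\theta_0)\succeq\mu\,P_H$ on $H$. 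This needs \cref{ass:design-coverage-b} on a region containing $\theta_0$ and $\theta^\star$, or it can be restricted to designs with $\Sigma_D$ nondegenerate on $H$, since otherwise the trace is infinite on both sides.
\item The result is $c_\Sigma=2+8\alpha_2^2r_0^2/\mu$; the reverse inequality is symmetric.
\end{itemize}
I expect the honest constant to depend on a coverage level as well as on $r_0$ and the primitives. If the statement truly demands independence from $D$, I would instead try to show that $g(e;\theta^\star)$ and $g(e;\theta_0)$ are comparable directionwise, by exploiting structure such as the linear case where $g$ is $\theta$-independent and $c_\Sigma=1$. I would flag this dependence explicitly.

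\textbf{Step 3 (plug-in design).} This is immediate from \cref{eq:plugin-trace-informal}. Let $\Phi_\theta(D)=\tr(I(\theta)\Sigma_D^\dagger(\theta))$. For any $D$,
\[
\Phi_{\theta^\star}(D_{\theta_0})\le C\,\Phi_{\theta_0}(D_{\theta_0})\le C\,\Phi_{\theta_0}(D)\le C^2\,\Phi_{\theta^\star}(D).
\]
Taking the infimum over $D$ yields \cref{eq:plugin-design-oracle-compare}.
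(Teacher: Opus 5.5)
Your architecture is the paper's. Both arguments use two-sided Loewner comparisons of $I$ and of $\Sigma_D$ on $H$, inversion on $H$, and trace monotonicity to get $C_{\mathrm{plug}}=C_IC_\Sigma$, followed by the same three-link chain for \eqref{eq:plugin-design-oracle-compare}. The sub-arguments differ.

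\textbf{Fisher matrix.} The paper does no perturbation here. \cref{lem:Fisher-sandwich} gives $\underline\mu\,P_H\preceq I(\theta)\preceq 4\alpha_1^2P_H$ for every $\theta\in\Theta$, so $C_I=4\alpha_1^2/\underline\mu$, independent of $r_0$. Your Step~1 already invokes that lower bound, so the likelihood-ratio and gradient-perturbation steps are optional. What they buy is a constant $e^{4\alpha_1r_0}+O\bigl((\alpha_1\alpha_2r_0+\alpha_2^2r_0^2)/\underline\mu\bigr)$ that tends to $1$ as $r_0\to0$, which is the regime motivating the plug-in; for large $r_0$ take the minimum with the global constant.

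\textbf{Design matrix.} The paper uses continuity of $(\theta,D)\mapsto\Sigma_D(\theta)$ on the compact set $\mathcal R_{r_0}\times\Delta(\mathcal E)$, where $\mathcal R_{r_0}=\Theta\cap\mathbb B(\theta_0,r_0)$, and gets a non-explicit $C_\Sigma(r_0)$. Your rank-one bound with absorption by coverage is explicit: $c_\Sigma=2+8\beta^2\alpha_2^2r_0^2/\mu$. Note that you dropped a $\beta$, since $\|\varepsilon_e\|_2\le 2\beta\alpha_2r_0$.

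Your suspicion about uniformity in $D$ is correct, and it marks a real hole in the paper's proof. The paper reads \cref{ass:design-coverage-b} on $\mathcal R_{r_0}$ as coverage $\Sigma_D(\theta)\succeq\mu_{r_0}$ on $H$ for all $\theta\in\mathcal R_{r_0}$ \emph{and all} $D\in\Delta(\mathcal E)$. That assumption concerns a single design, and point masses have rank-one $\Sigma_D$, so uniform coverage over $\Delta(\mathcal E)$ is impossible once $\dim H\ge 2$. The compactness argument is valid only on the closed set of designs with coverage at least $\mu$, which yields a $\mu$-dependent constant just like yours.

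In fact no $D$-free constant exists for nonlinear $f_\theta$. Take $p=1$ and one prompt whose candidates $y_1,y_2,y_3$ have logits $\theta$, $0$, $h(\theta)$, with $h'(\theta_0)=1\ne h'(\theta^\star)$. The edge $e'=\{y_1,y_2\}$ has $g\equiv\beta$, which supplies feature separation. The edge $e=\{y_1,y_3\}$ has $g(e;\theta_0)=0\ne g(e;\theta^\star)=:c$. For $D_t=(1-t)\delta_e+t\delta_{e'}$,
\[
\frac{\tr\bigl(I(\theta_0)\Sigma_{D_t}^\dagger(\theta_0)\bigr)}{\tr\bigl(I(\theta^\star)\Sigma_{D_t}^\dagger(\theta^\star)\bigr)}\asymp\frac{(1-t)c^2+t\beta^2}{t\beta^2}\to\infty\quad(t\downarrow0),
\]
even though every $D_t$ with $t>0$ is nondegenerate at both points. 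So drop your fallback of proving directionwise comparability of $g(e;\theta^\star)$ and $g(e;\theta_0)$. It holds only in the log-linear case, where $g$ is $\theta$-free, $c_\Sigma=1$, and the theorem is true for every $D$.

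Two smaller corrections.
\begin{enumerate}
\item Your reason for discarding degenerate designs is wrong. Under the Moore--Penrose convention a singular $\Sigma_D$ gives a finite trace, even an artificially small one. Also, singularity at $\theta_0$ need not persist at $\theta^\star$ (take $t=0$ above). State the result for the class $\mathcal D_\mu$ of designs with coverage $\mu$ at both $\theta_0$ and $\theta^\star$. Your Step~3 then goes through verbatim, provided the argmin defining $D_{\theta_0}$ and the infimum in \eqref{eq:plugin-design-oracle-compare} are both taken over $\mathcal D_\mu$.
\item Your claim that $I$ and $\Sigma_D$ have range in $H$ is true at $\theta^\star$, and everywhere in the log-linear case. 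It can fail at $\theta_0$ for general $f_\theta$, because $H$ is spanned by gradients at $\theta^\star$. Read $I(\theta_0)$ and $\Sigma_D(\theta_0)$ as compressions $P_H(\cdot)P_H$, as the paper tacitly does; your bound $\varepsilon\varepsilon^\top\preceq\|\varepsilon\|_2^2P_H$ needs this reading too.
\end{enumerate}
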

We present a detailed proof in Appendix~\ref{app:proof-theta0}.

\subsection{Road map for proving \cref{thm:informal}}\label{ss:road-map}
Here we present a high level sketch for the proof of \cref{thm:informal}. The main idea is to reduce downstream policy suboptimality to a weighted parameter-estimation error, and then control this error from above for DPO and from below for any estimator.

\textbf{Step 1: RLHF gap as a weighted parameter error.} We first show that the RLHF optimality gap is locally equivalent to a quadratic form in the parameter error. Specifically, under the regularity assumptions, for all \(\theta\in\Theta\),
\[
c_-\,(\theta-\theta^\star)^\top I(\theta^\star)(\theta-\theta^\star)
\le
J(\pi^\star)-J(\pi_\theta)
\le
c_+\,(\theta-\theta^\star)^\top I(\theta^\star)(\theta-\theta^\star),
\]
where \(I(\theta^\star)\) is the curvature/Fisher matrix of the policy around the RLHF optimum. Thus, controlling the downstream RLHF gap is equivalent, up to constants, to controlling the estimation error in the \(I(\theta^\star)\)-weighted norm:
\[
\mathbb E[J(\pi^\star)-J(\pi_{\tilde\theta_n})]
\asymp
\mathbb E\!\left[
(\tilde\theta_n-\theta^\star)^\top I(\theta^\star)(\tilde\theta_n-\theta^\star)
\right].
\]
We present a detailed proof in Appendix~\ref{ss:step1}.

\textbf{Step 2: Upper bound for DPO.} Let \(\hat\theta_n\) be the empirical DPO minimizer and \(\Delta=\hat\theta_n-\theta^\star\).
The main challenge is that, in the nonlinear case, the empirical Hessian varies with \(\theta\).
We handle this by showing that the average Hessian along the path from \(\theta^\star\) to \(\hat\theta_n\), $H_n\doteq \int_0^1 \nabla^2L_n(\theta^\star+t\Delta)\,dt,$ is uniformly lower bounded on \(H\) by the design covariance: \(H_n\succeq c\,\Sigma_D(\theta^\star)\).
Combined with first-order optimality, this controls the DPO estimation error in the \(\Sigma_D\)-norm by the score noise at \(\theta^\star\).
Since the score covariance is of order \(\Sigma_D(\theta^\star)/n\), we obtain
\[
\mathbb E\!\left[\Delta^\top I(\theta^\star)\Delta\right]
\lesssim
\frac{1}{n}\,
\tr\!\big(I(\theta^\star)\Sigma_D(\theta^\star)^\dagger\big).
\]
Step~1 then gives the stated RLHF-gap upper bound. See Appendix~\ref{s:upper-bound} for the full proof.

\textbf{Step 3: Lower bound for any estimator.} For the converse, we apply the Van Trees inequality to the pairwise comparison model.
Since the Fisher information contributed by \(n\) comparisons under design \(D\) is controlled by \(n\Sigma_D(\theta)\), any estimator \(\tilde\theta_n\) must incur Bayes risk at least of order
\[
\frac{1}{n}\,
\mathbb E_{\theta^\star\sim\rho}
\!\left[
\tr\!\big(I(\theta^\star)\Sigma_D(\theta^\star)^\dagger\big)
\right].
\]
Combining this with the quadratic lower bound from Step~1 yields
\[
\inf_{\tilde\pi_n}
\mathbb E_{\theta^\star\sim\rho}\mathbb E_{\mathcal D_n\mid\theta^\star}
\!\left[
J(\pi_{\theta^\star})-J(\tilde\pi_n)
\right]
\ge
\frac{C_{\lb}}{n}\,
\mathbb E_{\theta^\star\sim\rho}
\!\left[
\tr\!\big(I(\theta^\star)\Sigma_D(\theta^\star)^\dagger\big)
\right].
\]
Thus the same trace functional controls both the achievable DPO guarantee and the information-theoretic limit, identifying it as the natural design criterion for comparison curation. We present a detailed proof in Appendix~\ref{s:lower-bound}.

\section{Numerical experiments}

Our experiments proceed from controlled model-based settings to realistic language-model post-training benchmarks.
The synthetic experiments isolate the theoretical mechanism in realizable tabular and contextual models.
The IMDb experiment introduces real LLM fine-tuning while retaining a scalar proxy reward, allowing us to evaluate the reward--KL trade-off under model mismatch.
The Anthropic-HH experiment further evaluates preference quality without relying on an explicit reward function, using GPT-4.1 as an automatic judge.

\textbf{Synthetic setting.} The synthetic experiments test the method in settings that exactly follow the theoretical model.
Here the ground-truth reward is known to the experimenter, allowing us to directly evaluate whether the proposed design learns the unknown reward and the corresponding RLHF-optimal policy more efficiently.

In the tabular setting, each item corresponds to a candidate completion, so a policy is simply a probability distribution over the \(d\) items.
We generate a non-uniform reference policy \(\pi_0\in\Delta^d\) by applying a softmax transformation to Gaussian logits, mimicking the highly non-uniform output distribution of a pre-trained or supervised-fine-tuned language model.
The true reward vector \(r^\star\in\mathbb R^d\) is generated independently with small variance, creating a low-signal regime in which item rewards are relatively close. We compare four pairwise-comparison designs under a fixed sample budget \(n\): the oracle design \(D^\star(\theta^\star)\), the plug-in design \(D^\star(\theta_0)\), uniform sampling over all item pairs, and a heuristic that samples pairs by drawing two items without replacement according to \(\pi_0\).
For each design, we collect \(n\) noisy Bradley--Terry comparisons between item pairs and solve the empirical DPO objective in the tabular policy class.
This produces an estimated tabular policy \(\hat\pi_n\in\Delta^d\), which we evaluate by the RLHF optimality gap \(J(\pi^\star)-J(\hat\pi_n)\). The results in \cref{fig:tabular} show that \(D^\star(\theta^\star)\) and \(D^\star(\theta_0)\) achieve nearly identical and consistently small gaps, substantially outperforming both baselines. Uniform sampling improves as the budget grows but remains less sample-efficient, while the \(\pi_0\)-based heuristic performs poorly and shows little improvement.
These results suggest that, when the reference policy is highly non-uniform and reward differences are weak, concentrating comparisons on high-\(\pi_0\) items is ineffective; instead, information-guided comparison design yields much better downstream policy quality. 

Next, we study offline comparison curation in a \textbf{linear contextual setting}.
Each prompt \(x\in\mathbb R^p\) has a finite candidate set \(\mathcal A(x)=\{y_1,\ldots,y_d\}\), and each candidate has an action embedding \(a\in\mathbb R^p\).
We use the softmax policy class
\[
\pi_\theta(y\mid x)\propto \exp\!\big(\theta^\top\phi(x,y)\big),
\qquad
\phi(x,y)=[x;\,a;\,x\odot a]\in\mathbb R^{3p},
\]
where \(\odot\) denotes elementwise multiplication.
We generate a concentrated reference policy \(\pi_0=\pi_{\theta_0}\) using a large \(\|\theta_0\|\), and define a latent linear reward
\(r^\star(x,y)=\eta^\top\phi(x,y)+C(x)\), where the prompt-only term \(C(x)\) cancels in within-prompt comparisons.
Preference labels are generated from the Bradley--Terry model induced by \(\theta^\star\). For each labeling budget \(n\), we sample \(n\) comparison pairs using one of four rules:
the oracle design \(D^\star(\theta^\star)\), the plug-in design \(D^\star(\theta_0)\), uniform sampling over candidate pairs, and a \(\pi_0\)-weighted heuristic that samples candidates according to the reference policy.
We fit the DPO estimator and evaluate the held-out RLHF optimality gap \(J(\pi^\star)-J(\hat\pi_n)\), reporting mean and variability over Monte Carlo repetitions. \cref{fig:linear_contextual} shows that the plug-in design \(D^\star(\theta_0)\) closely tracks the oracle design and achieves small RLHF gaps across budgets.
Uniform sampling is less sample-efficient at small budgets, while the \(\pi_0\)-weighted heuristic performs poorly and has larger variance.
This indicates that high-probability candidates under the reference policy are not necessarily the most informative comparisons; explicitly optimizing the comparison design better targets the directions that matter for downstream RLHF performance.

\begin{figure}[t]
    \centering
    \begin{subfigure}[t]{0.48\linewidth}
        \centering
        \includegraphics[width=\linewidth]{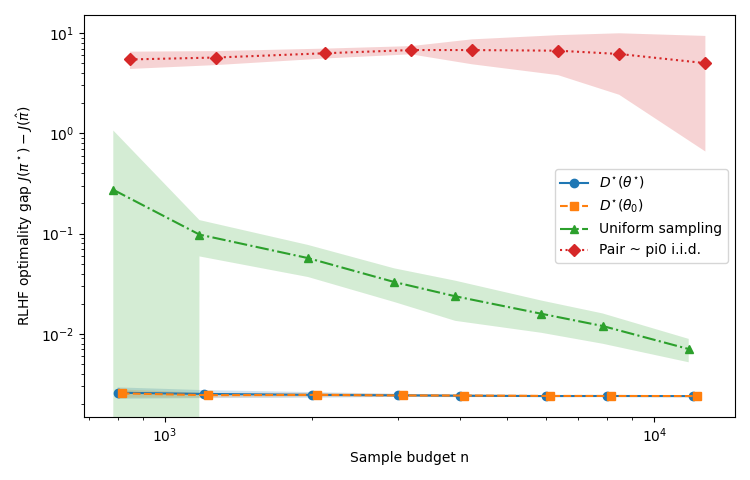}
        \caption{Tabular setting}
        \label{fig:tabular}
    \end{subfigure}\hfill
    \begin{subfigure}[t]{0.48\linewidth}
        \centering
        \includegraphics[width=\linewidth]{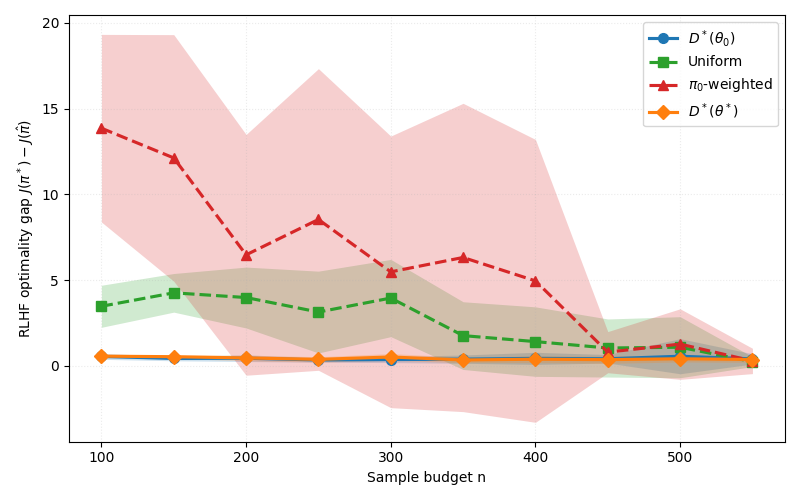}
        \caption{Linear contextual setting}
        \label{fig:linear_contextual}
    \end{subfigure}
    \caption{Synthetic experiments}
    \label{fig:synthetic}
\end{figure}

\textbf{IMDb experiment.} The IMDb experiment provides a more realistic language-model post-training testbed while retaining a well-defined proxy reward given by a sentiment classifier. This setting allows us to examine whether the design remains effective when the theoretical model is only an approximation of the actual LLM training process, and whether it improves the reward--KL trade-off relative to baseline comparison-selection rules.

In IMDb experiment, we follow the DPO pipeline in \citet{rafailov2023direct}. We fine-tune GPT-2-large on the IMDb training split using SFT, and then use the resulting SFT model as the reference policy for DPO. We compare preference datasets constructed by our \(D^\ast\)-based method with benchmark selection rules in two curation tasks. These two tasks capture two natural decisions in preference-data collection: which prompts should be annotated, and, given a prompt, which candidate responses should be compared. In the prompt selection task, we generate two responses for each prompt in a candidate pool of \(1{,}000\) prompts and select \(n=175\) prompt-level comparisons for DPO training. The small budget \(n=175\) places the experiment in a low-annotation regime, where the value of selecting informative comparisons is most pronounced. The benchmark selects the first \(175\) comparisons, whereas \(D^\ast\) computes design weights over all candidate comparisons and samples without replacement. In the response selection task, we consider a candidate pool of $175$ prompts. We generate \(d=8\) candidate responses for each prompt and select one within-prompt response pair. The benchmark compares two arbitrary responses, whereas \(D^\ast\) samples a pair according to the normalized within-prompt design weights. We evaluate the trained policies using the reward--KL frontier as in \citet{rafailov2023direct}. Each point corresponds to the final checkpoint of a DPO run under a specific value of \(\beta\), and error bars report Monte Carlo standard errors. As shown in \Cref{fig:prompt-selection-frontier,fig:response-selection-frontier}, \(D^\ast\)-curated data consistently improves the reward--KL tradeoff relative to the benchmark in both curation tasks.

\begin{figure}[t]
    \centering
    \begin{subfigure}[t]{0.48\linewidth}
        \centering
        \includegraphics[width=\linewidth]{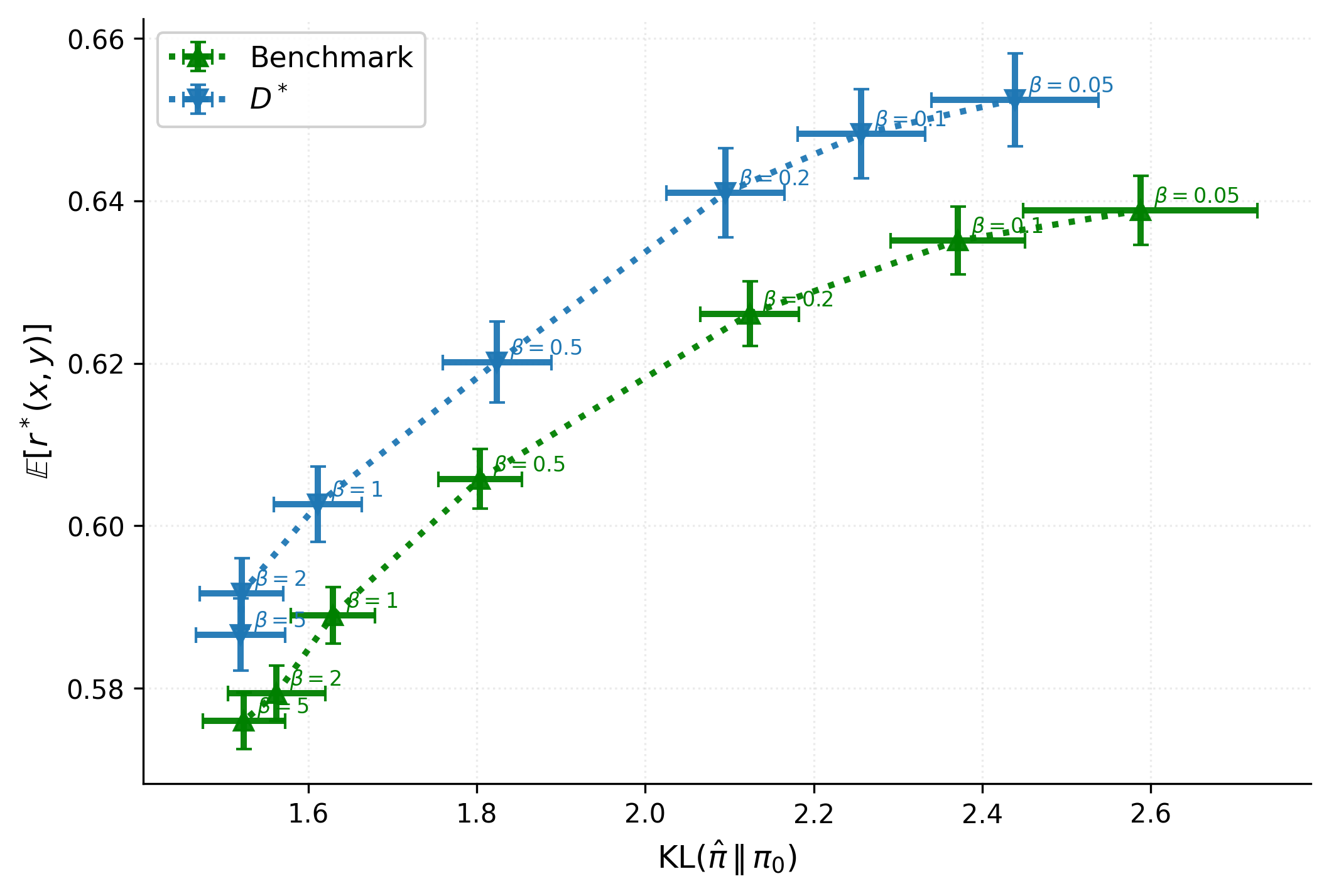}
        \caption{Response selection task}
        \label{fig:response-selection-frontier}
    \end{subfigure}\hfill
    \begin{subfigure}[t]{0.48\linewidth}
        \centering
        \includegraphics[width=\linewidth]{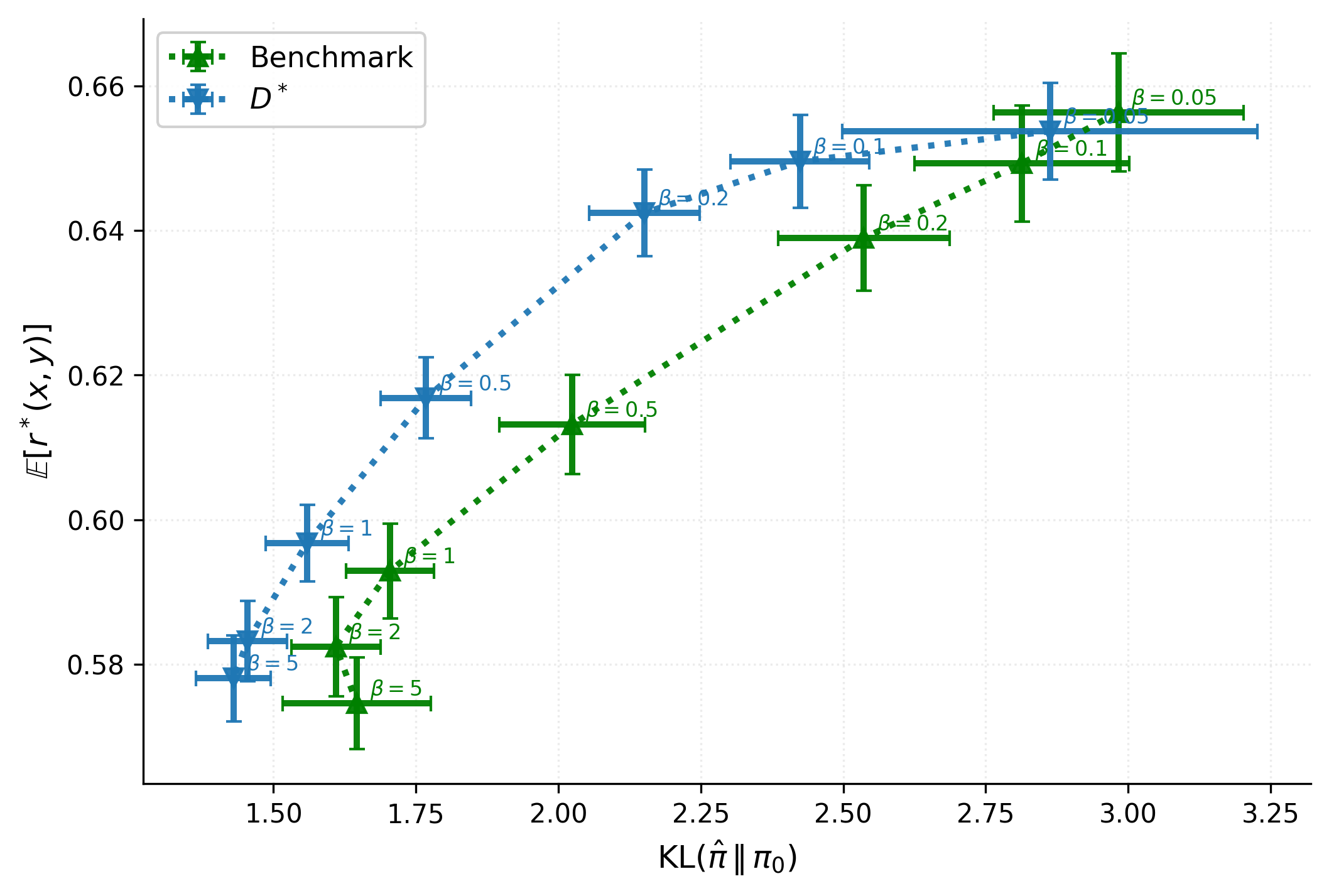}
        \caption{Prompt selection task}
        \label{fig:prompt-selection-frontier}
    \end{subfigure}
    \caption{IMDb experiments with GPT2-Large}
    \label{fig:imdb}
\end{figure}

\textbf{Anthropic HH experiment.} The Anthropic-HH experiment further removes the availability of a clear scalar reward function.
Instead of evaluating against an explicit reward model, we use GPT-4.1 as an automatic judge to compare model outputs and report pairwise win rates.
This setting tests whether, under the same labeling budget, the proposed comparison design leads to responses that are more often preferred in a direct preference evaluation.

In this Anthropic HH experiment, we follow the DPO pipeline of \citet{rafailov2023direct}. We use the default Anthropic-HH train/test splits and first train a Pythia-2.8B SFT model on the chosen responses in the training split. Starting from this SFT model, we train DPO models using preference datasets constructed either by our \(D^\ast\)-based method or by a benchmark rule. The candidate pool consists of \(160{,}800\) preference pairs from the HH training split. For each budget \(n\), the benchmark uses the \(n\) arbitrary preference pairs, while our method samples \(n\) pairs without replacement from the optimized \(D^\ast\) design distribution. Details on feature construction, the trace-design optimization, and hyperparameters are provided in Appendix~\ref{app:hh-implementation}. We evaluate the trained policies on prompts from the Anthropic-HH test split. Following the evaluation setup of \citet{rafailov2023direct}, we generate responses at sampling temperatures \(0.25\), \(0.7\), and \(1.0\), and use GPT-4.1 to compare each generated response against the corresponding HH chosen response. \Cref{fig:hh} reports the win rate for two representative budgets, \(n=80{,}400\) and \(n=96{,}480\), corresponding to \(50\%\) and \(60\%\) of the candidate pool. Across these budgets and all sampling temperatures, the \(D^\ast\)-curated datasets outperform the benchmark. Similar improvements are observed across the other budgets we tested.

    \begin{figure}[t]
    \centering
    \begin{subfigure}[t]{0.48\linewidth}
        \centering
        \includegraphics[width=\linewidth]{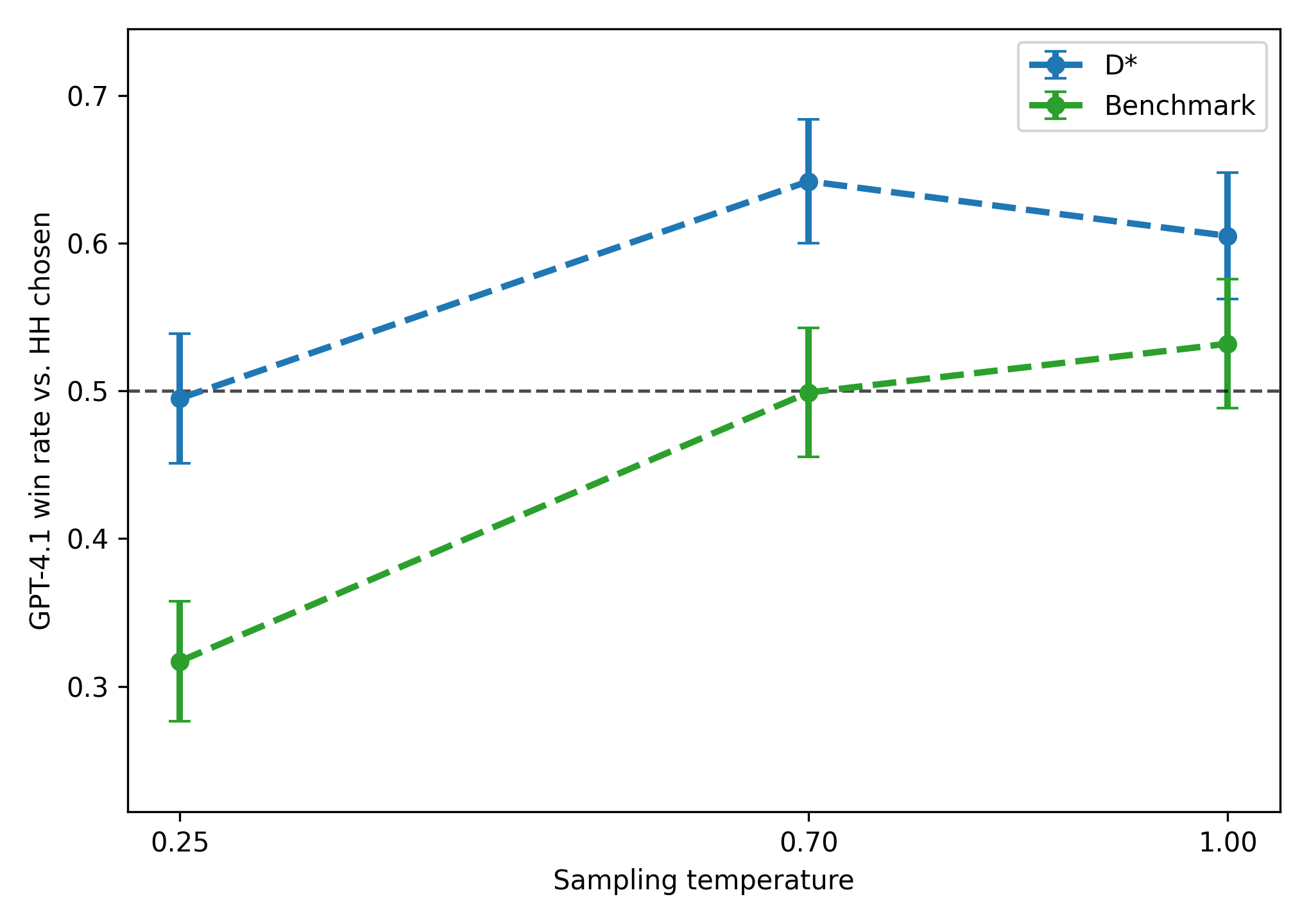}
        \caption{Sample budget $N=80,400$}
        \label{fig:hh-80400}
    \end{subfigure}\hfill
    \begin{subfigure}[t]{0.48\linewidth}
        \centering
        \includegraphics[width=\linewidth]{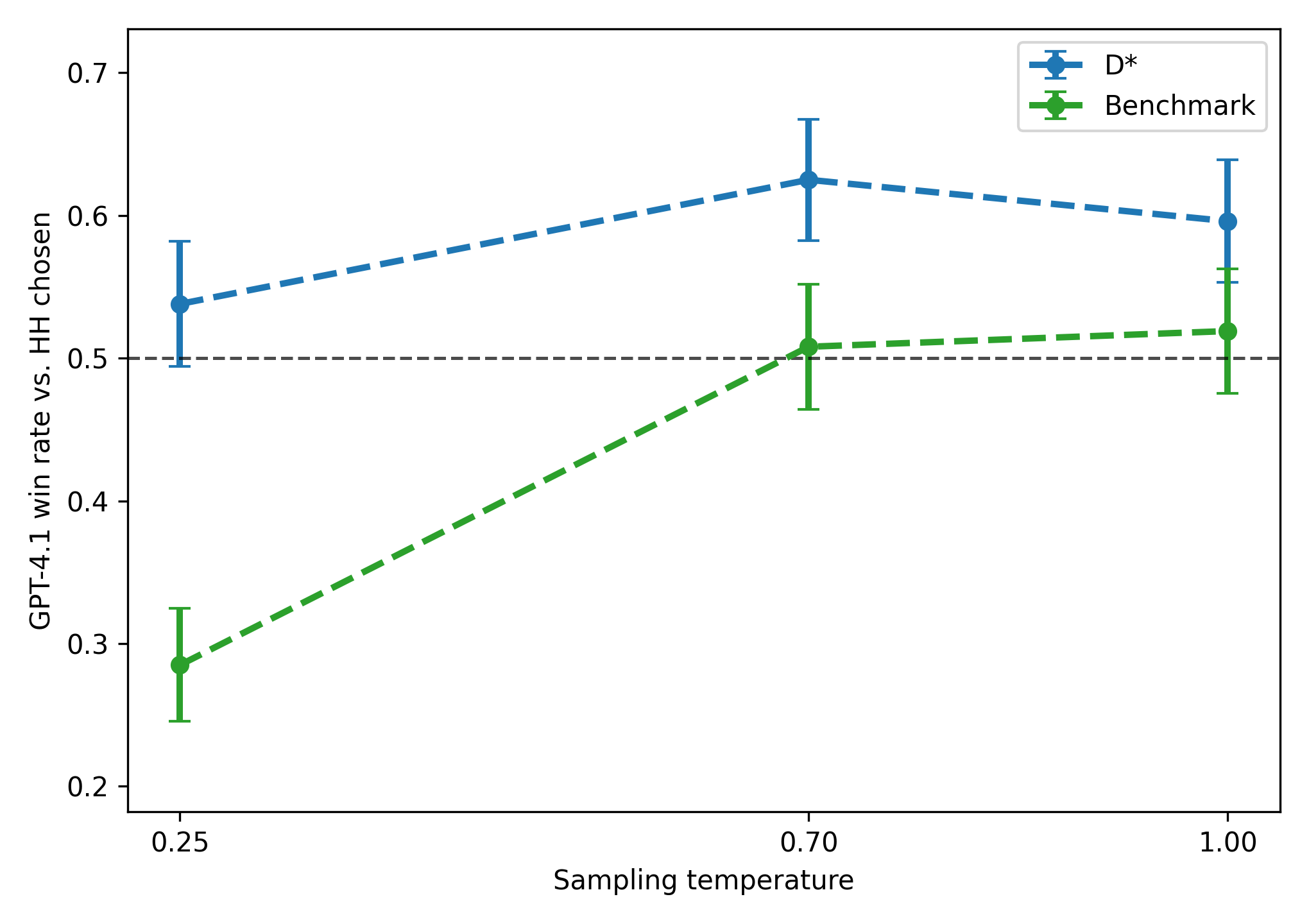}
        \caption{Sample budget $N=96,480$}
        \label{fig:hh-96480}
    \end{subfigure}
    \caption{Anthropic HH experiment}
    \label{fig:hh}
\end{figure}

\section{Discussion and Limitations}\label{s:discussion}
In this paper, we study offline comparison curation for DPO under a fixed labeling budget. Our analysis characterizes the effect of curation on downstream RLHF performance through a single design-dependent information object, yielding an explicit optimization criterion for selecting informative comparisons. We prove finite-sample upper bounds and complementary information-theoretic lower bounds, and our synthetic and LLM experiments show that the resulting plug-in designs improve sample efficiency over common heuristics. Our work has several limitations. The theory relies on realizability, regularity, and coverage assumptions that may only hold approximately for large neural policies. We also focus on offline randomized designs, where candidate completions are generated before labeling and the design does not adapt to observed feedback. 

\newpage
\bibliographystyle{plainnat} 
\bibliography{reference}

\newpage 
\appendix

\section{Technical appendices and supplementary material}

\subsection{Motivation and interpretation for \cref{s:problem-def}}
\begin{remark}[Special cases: linear contextual and tabular policies]\label{rem:policy-special-cases}
The parametric policy class given in equation \cref{eq:policy-softmax} covers two important special cases:
\begin{enumerate}[label=(\roman*)]
    \item \textbf{Linear contextual policy.} Suppose $f_\theta$ is a linear function of the embedding vector $\phi$:
\[
f_\theta(\phi(x,y))=\langle \theta,\phi(x,y)\rangle,\qquad \theta\in\mathbb R^p.
\]
Then equation \cref{eq:policy-softmax} becomes the standard softmax policy with linear logits.
\item \textbf{Tabular policy over prompt--completion pairs.}
Consider a finite collection of prompt--completion pairs
\[
\mathcal U \doteq \{(x,y): x\in\mathcal X,\ y\in\mathcal A(x)\},\qquad |\mathcal U|=d,
\]
and let the embedding be one-hot over pairs:
\[
\phi(x,y)=\mathbbm{1}_{(x,y)}\in\mathbb R^d .
\]
If $f_\theta$ is linear in $\phi$, then
\[
f_\theta(\phi(x,y))=\langle \theta, e_{(x,y)}\rangle=\theta_{(x,y)},
\qquad \theta\in\mathbb R^d,
\]
so each prompt--completion pair $(x,y)$ has its own tabular logit parameter. Substituting into equation \cref{eq:policy-softmax} yields the tabular softmax policy: for each prompt $x$,
\[
\pi_\theta(y\mid x)
=
\frac{\exp(\theta_{(x,y)})}{\sum_{y'\in\mathcal A(x)}\exp(\theta_{(x,y')})},
\qquad y\in\mathcal A(x).
\]
\end{enumerate}
\end{remark}

\subsection{Motivation and interpretation of regularity assumptions}\label{app:assumptions}
In this subsection, we list all necessary assumptions in our analysis. 

\paragraph{Identifiability and realizability.}
Our first assumption concerns identifiability of the optimal policy parameter $\theta^\star$ from pairwise
comparison data. In our softmax parameterization \cref{eq:policy-softmax}, the map $\theta\mapsto \pi_\theta$
is not necessarily injective: different parameter vectors may induce the same policy. For example, adding a
prompt-dependent constant to all logits $\{f_\theta(\phi(x,y))\}_{y\in\mathcal A(x)}$ leaves
$\pi_\theta(\cdot\mid x)$ unchanged, so directions that only shift logits uniformly within each prompt are
intrinsically unidentifiable.

More generally, each labeled edge $e=(x,y^+,y^-)$ enters DPO only through the scalar logit
$u_\theta(e)$, i.e., the implicit-reward difference between $y^+$ and $y^-$. This motivates the
\emph{pairwise sensitivity vector}
\[
g(e;\theta)\doteq \nabla_\theta u_\theta(e),
\]
which captures how a comparison changes under infinitesimal perturbations of $\theta$.
If a direction $v$ satisfies $v^\top g(e;\theta^\star)=0$ for all queried edges $e$, then perturbing $\theta$
along $v$ does not change any logit difference that appears in the data (to first order). Since the DPO
objective depends on $\theta$ only through these logit differences, pairwise comparisons cannot identify such
directions.

To focus on directions that are learnable from comparisons, we restrict our analysis to the identifiable
subspace $H$ spanned by $\{g(e;\theta^\star)\}$ (defined formally in \cref{def:tangent-space-H}), where the
queried comparisons are informative.

\begin{remark}\label{rem:g-special-cases}
This pairwise sensitivity vector $g(e;\theta)$ has a natural form in the linear contextual setting, where
\[
f_\theta(\phi(x,y))=\langle \theta,\phi(x,y)\rangle,
\qquad \theta\in\mathbb R^p.
\]
In this case, $\nabla_\theta f_\theta(\phi(x,y))=\phi(x,y)$, and the softmax-score identity yields
\[
\nabla_\theta\log\pi_\theta(y\mid x)
=
\phi(x,y)-\E_{y'\sim\pi_\theta(\cdot\mid x)}[\phi(x,y')].
\]
Hence, for an edge $e=(x,y^+,y^-)$,
\[
g(e;\theta)
=
\beta\Big(\phi(x,y^+)-\phi(x,y^-)\Big),
\]
since the within-prompt centering terms cancel in the difference.
Thus in the linear contextual case, $g(e;\theta)$ is (up to $\beta$) the difference of the two completion
embeddings being compared. 

Particularly, in the tabular setting where $\phi(x,y)$ is a one-hot vector $\mathbbm{1}_{(x,y)}$ of the pair $(x,y)$ and $f_\theta(\phi(x,y))=\langle \theta,\mathbbm{1}_{(x,y)}\rangle$. Therefore, for an edge $e=(x,y^+,y^-)$,
\[
g(e;\theta)
=
\beta\,(\mathbbm{1}_{(x,y^+)}-\mathbbm{1}_{(x,y^-)}),
\]
i.e., $g(e;\theta)$ is (up to the factor $\beta$) the signed incidence vector of the compared within-prompt pair,
embedded in the global tabular parameterization.

\end{remark}

\begin{definition}[Identifiable tangent space $H$]\label{def:tangent-space-H}
We define the identifiable tangent space at $\theta^\star$ by
\[
H \doteq
\operatorname{span}\bigl\{g(e;\theta^\star): e=(x,y^+,y^-)\in\mathcal{E}\bigr\}
\;\subseteq\;\R^p.
\]
\end{definition}

\begin{remark}[Tabular special case: sum-to-zero reward subspace]\label{rem:H-tabular}
In the tabular setting, 
\[
f_\theta(\phi(x,y))=\theta_{(x,y)},\qquad
\pi_\theta(y\mid x)
=\frac{\exp(\beta\,\theta_{(x,y)})}{\sum_{y'\in\mathcal Y_x}\exp(\beta\,\theta_{(x,y')})}.
\]
For each fixed context $x$, adding a constant $c_x$ to all $\{\theta_{(x,y)}:y\in\mathcal Y_x\}$
does not change $\pi_\theta(\cdot\mid x)$, hence does not change $u_\theta(e)$.
Thus the identifiable directions are exactly those that are orthogonal to these
``all-ones'' shifts within each context. In this case the tangent space $H$ in \cref{def:tangent-space-H} coincides with the within-context sum-to-zero subspace
\[
H
=
\Bigl\{v:\ \sum_{y\in\mathcal Y_x} v_{(x,y)}=0
\ \ \text{for every }x\in\mathcal X\Bigr\}.
\]
\end{remark}

Throughout the analysis, we restrict attention to the identifiable subspace $H$. We also assume optimal realizability: there exists $\theta^\star\in\Theta$ such that $\pi_{\theta^\star}=\pi^\star$. Under realizability, the implicit reward $r_{\theta^\star}$ induced by the optimal policy $\pi^\star$ matches the true reward up to a prompt-only offset. That is, there exists a function $c:\mathcal X\to\mathbb R$ such that
\[
r_{\theta^\star}(x,y)=r^\star(x,y)+c(x),
\qquad \forall (x,y).
\]
Consequently, under realizability, in the BT model, using the implicit reward $r_{\theta^\star}$ to generate pairwise preference labels is equivalent to using the true reward $r^\star$: for any within-prompt comparison $e=(x,y^+,y^-)$,
\[
u_{\theta^\star}(e)
=
r_{\theta^\star}(x,y^+)-r_{\theta^\star}(x,y^-)
=
r^\star(x,y^+)-r^\star(x,y^-),
\]
since the prompt-only offset $c(x)$ cancels in the difference.

We summarize our requirements about the parameter set $\Theta$ in the following assumption.

\begin{repeatassumption}{ass:identifiable-space} [Identifiability and realizability]
    Suppose $\Theta$ is convex, compact, and satisfies \[
\Theta\;\subseteq\;\theta^\star+H.
\]
\end{repeatassumption}

\paragraph{Boundedness and smoothness.} As for the embedding vector $\phi$ and feature function $f_\theta$, we impose the following boundedness and smoothness assumption.

\begin{repeatassumption}{ass:model-bounded}[Boundedness and smoothness]
There exist constants $R_\phi,\alpha_0,\alpha_1,\alpha_2,\alpha_3<\infty$ such that for all
$\theta\in\Theta$ and all admissible $(x,y)$,
\[
\|\phi(x,y)\|_2\le R_\phi,\qquad
|f_\theta(\phi(x,y))|\le \alpha_0,
\]
\[
\|\nabla_\theta f_\theta(\phi(x,y))\|_2\le \alpha_1,\qquad
\|\nabla_\theta^2 f_\theta(\phi(x,y))\|_{\op}\le \alpha_2,\qquad
\|\nabla_\theta^3 f_\theta(\phi(x,y))\|_{\op}\le \alpha_3.
\]
Moreover, for each fixed $(x,y)$, the map $\theta\mapsto f_\theta(\phi(x,y))$ is three times
continuously differentiable on $\Theta$.
\end{repeatassumption}

\paragraph{Feature separation.} To obtain meaningful learning guarantees from pairwise comparisons, namely that the DPO solution achieves a vanishing RLHF optimality gap as the comparison budget $n$ grows, we need the candidate completion set
$\mathcal A(x)$ to be sufficiently informative. Intuitively, if for a given prompt $x$ the candidate pool
contains only near-duplicate completions (or completions that are indistinguishable under the model features),
then comparing them provides little information about how the policy should change, and certain parameter
directions cannot be learned no matter how many comparisons we collect. To rule out such degenerate cases,
we impose a mild diversity condition on the candidate set: for each prompt, $\mathcal A(x)$ should contain
at least two completions that are sufficiently different in their model-induced features (in every identifiable
direction). This is formalized by the following feature-separation assumption.

\begin{repeatassumption}{ass:feature-sep}[Feature separation on $H$]
There exists $\Delta_g>0$ such that for every $\theta\in\Theta$, every prompt $x$, and every unit vector
$v\in H$, there exist two candidates $y_1,y_2\in\mathcal A(x)$ satisfying
\[
\Big|v^\top\big(\nabla_\theta f_\theta(\phi(x,y_1))-\nabla_\theta f_\theta(\phi(x,y_2))\big)\Big|
\;\ge\;
\Delta_g.
\]
\end{repeatassumption}

\begin{remark}[Feature separation in linear contextual and tabular settings]\label{rem:feature-sep-special}
If $f_\theta(\phi)=\langle\theta,\phi\rangle$, then $\nabla_\theta f_\theta(\phi(x,y))=\phi(x,y)$ and
\cref{ass:feature-sep} reduces to a separation condition on the embeddings:
for every prompt $x$ and every unit $v\in H$, there exist $y_1,y_2\in\mathcal A(x)$ such that
\[
\big|v^\top(\phi(x,y_1)-\phi(x,y_2))\big|\ \ge\ \Delta_g.
\]
Intuitively, this means the candidate pool $\mathcal A(x)$ contains at least two completions that are well
separated in embedding space (in every identifiable direction), so within-prompt comparisons are informative.

In particular, if $\phi(x,y)=\mathbbm{1}_{(x,y)}$ and $f_\theta(\phi(x,y))=\theta_{(x,y)}$, then
$\nabla_\theta f_\theta(\phi(x,y))=\mathbbm{1}_{(x,y)}$. Hence for any two distinct candidates
$y_1\neq y_2$ in the same prompt,
\[
v^\top\big(\nabla_\theta f_\theta(\phi(x,y_1))-\nabla_\theta f_\theta(\phi(x,y_2))\big)
=
v_{(x,y_1)}-v_{(x,y_2)}.
\]
Therefore, \cref{ass:feature-sep} holds (with a constant of order one) as soon as, for each prompt,
the candidate set $\mathcal A(x)$ contains at least two distinct completions and the identifiable space $H$
includes the corresponding pair-difference directions. In this sense, feature separation is automatic in the
tabular model: distinct candidates correspond to orthogonal one-hot features, so any nontrivial comparison
produces a nonzero feature difference.
\end{remark}

\paragraph{Coverage condition.}Even with a diverse candidate pool, a meaningful guarantee further requires that the sampling design $D$ does not
ignore informative pairs. Intuitively, if $D$ concentrates on only a small subset of within-prompt pairs (or repeatedly compares near-duplicate completions), then the resulting data provide little information about some identifiable directions, and the optimality gap cannot be driven down no matter how large $n$ is.

Once each comparison edge $e$ is mapped to a pairwise sensitivity vector $g(e;\theta)$, a sampling design
$D$ over edges naturally induces a second-moment matrix that summarizes how informative the chosen comparisons
are about the parameter. In particular, the outer product $g(e;\theta)g(e;\theta)^\top$ captures the rank-one
curvature/information contribution of a single queried pair, and averaging this contribution over the edge
distribution $D$ yields the design covariance matrix. We define, at the any $\theta\in\Theta$,
\[
\Sigma_D(\theta)\doteq
\E_{e\sim D}\!\left[g(e;\theta)\,g(e;\theta)^\top\right],
\]
which will be the key object connecting pair selection to estimation error and ultimately to our
design objective.

\begin{repeatassumption}{ass:design-coverage-a}[Design coverage at the truth]
We assume that $\Sigma_D(\theta^\star)$ has a spectral gap on the identifiable subspace $H$, i.e., there exists
$\mu_\star>0$ such that
\[
v^\top \Sigma_D(\theta^\star) v \;\ge\; \mu_\star \,\|v\|_2^2,
\qquad \forall\,v\in H.
\]
\end{repeatassumption}

In some arguments, we require the above assumption to hold uniformly over a neighborhood of $\theta^\star$. Accordingly, we introduce the following localized version. Whenever invoked, the region $\mathcal R\subseteq\Theta$ will be specified (and may depend on the application), typically chosen as a neighborhood of $\theta^\star$.

\begin{repeatassumption}{ass:design-coverage-b}[Uniform design coverage]
Fix a region $\mathcal R\subseteq \Theta$. We assume that $\Sigma_D(\theta)$ has a uniform spectral gap on $H$ over $\mathcal R$, i.e., there exists
$\mu_{\mathcal R}>0$ such that
\[
v^\top \Sigma_D(\theta) v \;\ge\; \mu_{\mathcal R}\,\|v\|_2^2,
\qquad \forall\,v\in H,\ \forall\,\theta\in\mathcal R.
\]
\end{repeatassumption}

\begin{remark}\label{rem:design-coverage}
For any direction $v\in\mathbb R^p$,
\[
v^\top \Sigma_D(\theta)\, v
=
\E_{e\sim D}\!\big[(v^\top g(e;\theta))^2\big],
\]
so $\Sigma_D(\theta)$ summarizes the second-moment geometry of the pairwise sensitivity vectors under the
sampling design $D$.

In the linear contextual setting, $g(e;\theta)=\beta(\phi(x,y^+)-\phi(x,y^-))$ does not depend on $\theta$,
so $\Sigma_D(\theta)$ is independent of $\theta$ and the coverage condition reduces to a standard
full-rank/nondegeneracy condition on the second moment of feature differences under the design $D$.
Concretely, for $v\in H$,
\[
v^\top \Sigma_D v
=
\beta^2\,\E_{e\sim D}\!\Big[(v^\top(\phi(x,y^+)-\phi(x,y^-)))^2\Big].
\]
Thus coverage holds whenever, under the sampling design $D$, the pairwise feature differences
$\phi(x,y^+)-\phi(x,y^-)$ span the identifiable subspace $H$ and are not concentrated in a lower-dimensional
subspace. For example, it suffices that for every unit direction $v\in H$ there is a non-negligible probability
of drawing an edge $e$ such that $|v^\top(\phi(x,y^+)-\phi(x,y^-))|$ is bounded away from zero; in this case
the second moment above is uniformly positive and $\Sigma_D$ is positive definite on $H$.

In the tabular setting over prompt--completion pairs, $g(e;\theta)=\beta(\mathbbm{1}_{(x,y^+)}-\mathbbm{1}_{(x,y^-)})$ and
$\Sigma_D(\theta)=\beta^2 L_D$ is (up to $\beta^2$) the weighted graph Laplacian induced by the edge distribution $D$.
Thus Assumption~\ref{ass:design-coverage-a} requires that the Laplacian has a positive spectral gap on the identifiable
subspace, i.e., its second-smallest eigenvalue satisfies $\lambda_2(L_D)>0$, equivalently, the comparison graph is connected (within each connected component determined by the prompt structure). This is a common assumption needed for estimating a BT model in the tabular setting \citep{shah2016estimation}.

Assumptions~\ref{ass:design-coverage-a} and~\ref{ass:design-coverage-b} are two versions of the same requirement, but at different levels of uniformity. Assumption~\ref{ass:design-coverage-a}
only requires a spectral gap at the truth $\theta^\star$; this is sufficient for our upper bound, since
the trace criterion and estimation error control are evaluated at $\theta^\star$.
Assumption~\ref{ass:design-coverage-b} strengthens this to a uniform spectral gap over a region
$\mathcal R$ (e.g., $\mathrm{supp}(\rho)$); this is needed for the Bayesian/Van Trees lower bound, where
we average over $\theta\sim\rho$ and therefore must rule out degeneracy throughout the relevant parameter region.
\end{remark}

\paragraph{Optimization landscape.}Our next assumption concerns the optimization landscape of the DPO risk.
When $f_\theta$ is linear in the embedding (e.g., the linear contextual case), the DPO objective is convex in
$\theta$; moreover, under standard design coverage conditions the population and empirical risks are strongly convex
on the identifiable space and admit a unique minimizer. In contrast, for general nonlinear $f_\theta$ (e.g., a
neural network score model), both the population risk $L(\theta)$ and the empirical risk $L_n(\theta)$ can be
nonconvex, and multiple global minimizers may exist.

In our setting, the target parameter $\theta^\star$ is defined a priori by the RLHF objective via
realizability, i.e., $\pi_{\theta^\star}=\pi^\star$ where $\pi^\star\in\arg\max_\pi J(\pi)$.
In later analysis we show that, under realizability and well-specified preference labels, this same $\theta^\star$
is also a global minimizer of the population DPO risk. To avoid ambiguity when the population DPO risk admits
multiple minimizers, we impose the following uniqueness assumption.

\begin{repeatassumption}{ass:unique-minimizer-pop-risk}[Unique population minimizer]
The population DPO risk $L(\theta)$ has a unique global minimizer over $\Theta$.
\end{repeatassumption}

\begin{remark}\label{rem:unique-minimizer}
When $L(\theta)$ has multiple global minimizers, the empirical DPO estimator $\hat\theta_n$ may converge to
different minimizers depending on optimization and initialization, while all such limits induce the same minimum
population risk. Since our performance bounds are stated in terms of the estimation error relative to the RLHF
target $\theta^\star$, we impose \cref{ass:unique-minimizer-pop-risk} to ensure that the population
DPO minimizer coincides with the RLHF-optimal parameter and to eliminate ambiguity in the definition of the
target.
\end{remark}

\paragraph{Prior regularity.}
Our final assumption introduces a Bayesian formulation for the optimal parameter $\theta^\star$.
When prior information about $\theta^\star$ is available, our analysis yields an information-theoretic
(Bayesian) lower bound on the RLHF optimality gap under an arbitrary sampling design $D$.

\begin{repeatassumption}{ass:prior-vantrees}[Prior regularity]
Suppose the optimal parameter $\theta^\star$ follows a prior density $\rho$ supported on $\Theta$.
Moreover, $\rho$ satisfies:
\begin{enumerate}
    \item $\rho\in C^1(\Theta)$ and $\rho(\theta)>0$ for all $\theta\in \mathrm{int}(\Theta)$;
    \item $\rho(\theta)=0$ for all $\theta\in \partial\Theta$;
    \item The prior Fisher information is finite:
    \[
    \int_\Theta \|\nabla \log \rho(\theta)\|_2^2\,\rho(\theta)\,d\theta < \infty .
    \]
\end{enumerate}
\end{repeatassumption}

\begin{remark}\label{rem:van-tree}
\cref{ass:prior-vantrees} is a standard regularity condition on the prior density. It ensures that the
quantity $\int_\Theta \|\nabla\log\rho(\theta)\|_2^2\,\rho(\theta)\,d\theta$ is finite and that the
integration-by-parts steps required by the Bayesian information inequality (Van Trees) in our lower-bound
argument are valid.

Recall that under realizability, $\theta^\star$ parameterizes the RLHF-optimal policy $\pi^\star$. Moreover, by
the Gibbs closed form \cref{eq:gibbs-opt-form}, $\pi^\star$ is uniquely determined by the reference policy
$\pi_0$ and the reward function $r^\star$. Hence one may equivalently place a prior on $r^\star$ and view the
resulting distribution as an induced (push-forward) prior on $\theta^\star$; our analysis only uses the prior on
$\theta^\star$ itself.
\end{remark}

\section{Preliminary on information theory}\label{app:info-theory}

In this section, we provide more interpretations for \cref{def:Fisher}.

\begin{remark}[Fisher information under the softmax policy parameterization]
Under the softmax policy \cref{eq:policy-softmax},
the score $\nabla_\theta\log\pi_\theta(y\mid x)$ can be written explicitly in terms of
$f_\theta$ and its gradient:
\begin{equation}\label{eq:softmax-score}
\nabla_\theta\log\pi_\theta(y\mid x)
=
\nabla_\theta f_\theta(\phi(x,y))
-
\E_{y'\sim\pi_\theta(\cdot\mid x)}\!\big[\nabla_\theta f_\theta(\phi(x,y'))\big].
\end{equation}
Therefore, the Fisher information matrix admits the covariance form
\begin{equation}\label{eq:fisher-softmax-cov}
I(\theta)
=
\E_x\!\left[
\Cov_{y\sim\pi_\theta(\cdot\mid x)}
\big(\nabla_\theta f_\theta(\phi(x,y))\big)
\right].
\end{equation}

If $f_\theta(\phi)=\langle\theta,\phi\rangle$, then $\psi_\theta(x,y)=\phi(x,y)$ and $I(\theta)$ is the covariance matrix of the embedding vector $\phi(x,y)$
\[
I(\theta)
=
\E_x\!\left[\Cov_{y\sim\pi_\theta(\cdot\mid x)}\big(\phi(x,y)\big)\right].
\]
In particular, in the tabular setting where $\phi(x,y)=e_{(x,y)}$ is an one-hot vector, 
\[
I(\theta)
=
\E_x\!\left[\Cov_{y\sim\pi_\theta(\cdot\mid x)}\big(e_{(x,y)}\big)\right]
=
\E_x\!\left[\text{diag}\!\big(\pi_\theta(\cdot\mid x)\big)-\pi_\theta(\cdot\mid x)\pi_\theta(\cdot\mid x)^\top\right].
\]
\end{remark}

\begin{remark}[Interpretation of Fisher information in our setting]\label{rem:fisher-interpretation}
The Fisher information matrix $I(\theta)$ measures how sensitively the policy distribution
$\pi_\theta(\cdot\mid x)$ changes with the parameter $\theta$. The score $s_\theta(x,y)=\nabla_\theta\log\pi_\theta(y\mid x)$
is the infinitesimal change of the log-probability of outcome $y$ under a perturbation of $\theta$; taking the
second moment and averaging over $(x,y)$ therefore quantifies the amount of ``statistical signal'' carried by a
single draw $y\sim\pi_\theta(\cdot\mid x)$ about the parameter.

In particular, for any direction $v\in\mathbb R^p$,
\[
v^\top I(\theta)v
=
\mathbb E_x\,\mathbb E_{y\sim\pi_\theta(\cdot\mid x)}
\!\left[(v^\top s_\theta(x,y))^2\right]
\]
is the expected squared sensitivity of $\log\pi_\theta(y\mid x)$ along $v$.
Equivalently, under standard regularity conditions, $I(\theta)$ coincides with the local curvature of the KL
divergence: for small $\Delta$ in the identifiable directions,
\[
\mathbb E_x\,\KL\!\big(\pi_{\theta+\Delta}(\cdot\mid x)\,\|\,\pi_\theta(\cdot\mid x)\big)
=
\frac12\,\Delta^\top I(\theta)\Delta + o(\|\Delta\|_2^2).
\]
Thus, in our analysis, $I(\theta^\star)$ plays the role of a local geometry/curvature matrix around the optimal
policy: it specifies which parameter directions change the policy the most, and it is the natural weight matrix
that appears in the trace-form bounds and the resulting sampling-design objective.
\end{remark}

\begin{remark}[Equivalent curvature parametrizations]\label{rem:I-W-H}
Define
\begin{equation}\label{eq:W-main}
W(\theta^\star)
\doteq
\mathbb E_x\!\left[
\operatorname{Cov}_{y\sim\pi_{\theta^\star}(\cdot\mid x)}
\big(\nabla_\theta r_{\theta^\star}(x,y)\big)
\right].
\end{equation}
For statistical clarity, we express our main results using the policy Fisher information \(I(\theta^\star)\).
Under the softmax policy parametrization considered in this work, the implicit reward is aligned with the
policy log-likelihood gradient in the sense that
\[
\nabla_\theta r_{\theta^\star}(x,y)=\beta\,\nabla_\theta \log\pi_{\theta^\star}(y\mid x),
\qquad \text{for all \(x,y\)}.
\]
Consequently,
\[
W(\theta^\star)=\beta^2\,I(\theta^\star).
\]
The matrix \(W(\theta^\star)\) quantifies how the implicit reward changes with the policy parameter along different
directions: for any direction \(v\), the quadratic form \(v^\top W(\theta^\star)v\) equals the expected (over prompts \(x\))
variance, under \(y\sim\pi_{\theta^\star}(\cdot\mid x)\), of the directional derivative
\(\langle v,\nabla_\theta r_{\theta^\star}(x,y)\rangle\). In this sense, \(I(\theta^\star)\) (and $W(\theta^\star)$) captures the sensitivity of the
implicit reward to perturbations of \(\theta\), measured under the on-policy distribution.
\end{remark}

\section{Proofs of \cref{thm:informal}}\label{app:proof-informal}
In this section, we prove our main result \cref{thm:informal} following the road map in \cref{ss:road-map}. Specifically, the statement (i) in \cref{thm:informal} is proved in \cref{thm:dpo-upper-general-final}, and the statement (ii) in \cref{thm:informal} is proved in \cref{prop:vt-lb-trace}.

\subsection{Step 1: Quadratic sandwich of the RLHF optimality gap}\label{ss:step1}

We first express the RLHF optimality gap exactly as a reverse KL divergence to the optimal policy.

\begin{lemma}[RLHF gap as reverse KL]\label{lem:gap-kl-contextual}
Under \cref{ass:identifiable-space}, for any policy $\pi$,
\[
J(\pi^\star)-J(\pi)
=
\beta\,\E_x\!\left[
\KL\!\big(\pi(\cdot\mid x)\,\|\,\pi^\star(\cdot\mid x)\big)
\right].
\]
\end{lemma}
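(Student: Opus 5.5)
The plan is to prove the identity prompt by prompt using the Gibbs closed form \cref{eq:gibbs-opt-form}, and then average over $x$. Fix a prompt $x$ and write $J_x(\pi)\doteq \E_{y\sim\pi(\cdot\mid x)}[r^\star(x,y)]-\beta\,\KL(\pi(\cdot\mid x)\,\|\,\pi_0(\cdot\mid x))$, so that $J(\pi)=\E_x[J_x(\pi)]$. It therefore suffices to show
\[
J_x(\pi^\star)-J_x(\pi)=\beta\,\KL\big(\pi(\cdot\mid x)\,\|\,\pi^\star(\cdot\mid x)\big)
\]
for every $x$. The role of \cref{ass:identifiable-space} (with realizability) is only to guarantee that $\pi^\star=\pi_{\theta^\star}$ is the Gibbs policy. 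In particular, $\pi^\star$ has full support wherever $\pi_0$ does, so all the log-ratios below are finite. The identity itself holds for any $\pi$ that is absolutely continuous with respect to $\pi_0$. If $\pi$ is not absolutely continuous with respect to $\pi_0$, both sides equal $+\infty$ and the identity still holds.

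The first key step is to take logarithms in \cref{eq:gibbs-opt-form}, which gives
\[
r^\star(x,y)=\beta\log\frac{\pi^\star(y\mid x)}{\pi_0(y\mid x)}+\beta\log Z(x).
\]
Substituting this into $J_x(\pi)$ yields
\[
J_x(\pi)=\beta\,\E_{y\sim\pi}\Big[\log\tfrac{\pi^\star(y\mid x)}{\pi_0(y\mid x)}-\log\tfrac{\pi(y\mid x)}{\pi_0(y\mid x)}\Big]+\beta\log Z(x).
\]
The $\pi_0$ terms cancel, leaving
\[
J_x(\pi)=\beta\log Z(x)-\beta\,\KL\big(\pi(\cdot\mid x)\,\|\,\pi^\star(\cdot\mid x)\big).
\]
Applying this formula with $\pi=\pi^\star$ gives $J_x(\pi^\star)=\beta\log Z(x)$. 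Subtracting the two expressions gives the per-prompt identity, and taking $\E_x$ completes the proof. Here one uses that $Z(x)$ is finite and positive; this holds because the candidate sets are finite and $r^\star$ is finite.

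There is no real obstacle in this argument. The only point needing care is the bookkeeping of supports and finiteness, that is, justifying the cancellation of the $\pi_0$ log-ratios when $\pi$ puts mass where $\pi_0$ does not. I would handle this as the separate $+\infty$ case noted above.
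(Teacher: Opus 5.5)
Your proposal is correct and follows the paper's route. Both arguments take the logarithm of the Gibbs form \cref{eq:gibbs-opt-form} to write the per-prompt objective as $\beta\log Z(x)-\beta\,\KL(\pi(\cdot\mid x)\,\|\,\pi^\star(\cdot\mid x))$, evaluate it at $\pi=\pi^\star$ to get $\beta\log Z(x)$, and subtract; the paper expands $\KL(\pi\,\|\,\pi^\star)$ rather than substituting for $r^\star$, which is the same algebra. Your support bookkeeping is fine, but the Gibbs form holds for any maximizer of $J$, so \cref{ass:identifiable-space} plays no real role in this lemma.
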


\begin{proof}[Proof of \cref{lem:gap-kl-contextual}]
Fix $x$. By \eqref{eq:gibbs-opt-form},
\[
\log \pi^\star(y\mid x)
=
\log \pi_0(y\mid x)+\frac{r^\star(x,y)}{\beta}-\log Z(x).
\]
Hence
\[
\begin{aligned}
\KL\!\big(\pi(\cdot\mid x)\,\|\,\pi^\star(\cdot\mid x)\big)
&=
\sum_{y}\pi(y\mid x)\log\frac{\pi(y\mid x)}{\pi^\star(y\mid x)} \\
&=
\sum_y \pi(y\mid x)\log\frac{\pi(y\mid x)}{\pi_0(y\mid x)}
-\frac{1}{\beta}\sum_y \pi(y\mid x)r^\star(x,y)
+\log Z(x) \\
&=
\KL\!\big(\pi(\cdot\mid x)\,\|\,\pi_0(\cdot\mid x)\big)
-\frac{1}{\beta}\,\E_{y\sim\pi(\cdot\mid x)}[r^\star(x,y)]
+\log Z(x).
\end{aligned}
\]
Rearranging gives
\[
\E_{y\sim\pi(\cdot\mid x)}[r^\star(x,y)]
-\beta\,\KL\!\big(\pi(\cdot\mid x)\,\|\,\pi_0(\cdot\mid x)\big)
=
\beta\log Z(x)
-\beta\,\KL\!\big(\pi(\cdot\mid x)\,\|\,\pi^\star(\cdot\mid x)\big).
\]
Now take expectation over $x$:
\[
J(\pi)
=
\beta\,\E_x[\log Z(x)]
-\beta\,\E_x\!\left[\KL\!\big(\pi(\cdot\mid x)\,\|\,\pi^\star(\cdot\mid x)\big)\right].
\]
Setting $\pi=\pi^\star$ yields
\[
J(\pi^\star)=\beta\,\E_x[\log Z(x)].
\]
Subtracting gives
\[
J(\pi^\star)-J(\pi)
=
\beta\,\E_x\!\left[\KL\!\big(\pi(\cdot\mid x)\,\|\,\pi^\star(\cdot\mid x)\big)\right].
\]
\end{proof}

Hence, bounding the optimality gap reduces to bounding a KL divergence.  
Define
\begin{equation}\label{eq:def-F}
F(\theta)
\doteq
\beta\,\E_x\!\left[
\KL\!\big(\pi_\theta(\cdot\mid x)\,\|\,\pi_{\theta^\star}(\cdot\mid x)\big)
\right].
\end{equation}

\begin{lemma}[Smoothness and Hessian identity]\label{lem:F-hess}
Suppose \cref{ass:model-bounded} holds. Then $F\in C^2(\Theta)$, $\nabla F(\theta^\star)=0$, and
\begin{equation}\label{eq:F-hess}
\nabla^2 F(\theta)=\beta I(\theta),\qquad \forall \theta\in\Theta.
\end{equation}
In particular,
\[
\nabla^2F(\theta^\star)=\beta I(\theta^\star).
\]
\end{lemma}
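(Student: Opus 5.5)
The plan is to differentiate $F$ under the expectation, write its Hessian as the Fisher matrix plus a correction term, and then show the correction term vanishes. Fix a prompt $x$ and write $\pi_\theta=\pi_\theta(\cdot\mid x)$ and $s_\theta(y)=\nabla_\theta\log\pi_\theta(y\mid x)$. Set
\[
k_x(\theta)\doteq\KL(\pi_\theta\,\|\,\pi_{\theta^\star})=\sum_{y\in\mathcal A(x)}\pi_\theta(y)\,\ell_\theta(y),
\qquad \ell_\theta(y)\doteq\log\pi_\theta(y)-\log\pi_{\theta^\star}(y).
\]

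\textbf{Regularity.} By \cref{ass:model-bounded}, each logit $f_\theta(\phi(x,y))$ is $C^3$ with derivatives bounded uniformly in $(x,y,\theta)$, and $|f_\theta|\le\alpha_0$. The softmax \cref{eq:policy-softmax} therefore gives $\pi_\theta(y)\ge e^{-2\alpha_0}/|\mathcal A(x)|>0$, so $\log\pi_\theta$ and $\pi_\theta$ are $C^3$ on $\Theta$ with bounds uniform in $x$. Dominated convergence then lets me exchange $\nabla_\theta,\nabla^2_\theta$ with $\E_x$, which gives $F=\beta\,\E_x k_x\in C^2(\Theta)$.

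\textbf{Gradient.} Using $\nabla\pi_\theta=\pi_\theta s_\theta$ and $\sum_y\pi_\theta(y)s_\theta(y)=0$,
\[
\nabla k_x(\theta)=\sum_y\pi_\theta(y)\,s_\theta(y)\,\ell_\theta(y).
\]
This vanishes at $\theta=\theta^\star$ because $\ell_{\theta^\star}\equiv0$, so $\nabla F(\theta^\star)=0$.

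\textbf{Hessian.} Differentiating once more gives
\[
\nabla^2k_x(\theta)=\sum_y\pi_\theta(y)\,s_\theta(y)s_\theta(y)^\top
+\sum_y\pi_\theta(y)\,\ell_\theta(y)\big(s_\theta(y)s_\theta(y)^\top+\nabla s_\theta(y)\big).
\]
After taking $\beta\,\E_x$, the first sum is exactly $\beta I(\theta)$ by \cref{def:Fisher}. Call the second sum $\beta R_x(\theta)$; the identity \cref{eq:F-hess} then amounts to $\E_x R_x(\theta)=0$ on $\Theta$. At $\theta^\star$ this holds trivially since $\ell_{\theta^\star}\equiv0$, which yields the displayed consequence $\nabla^2F(\theta^\star)=\beta I(\theta^\star)$ with no further work.

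\textbf{Main obstacle: $R_x(\theta)=0$ away from $\theta^\star$.} My first attempt would use the softmax structure \cref{eq:softmax-score}. Write $\psi_\theta(y)=\nabla_\theta f_\theta(\phi(x,y))$, so $s_\theta=\psi_\theta-\E_{\pi_\theta}\psi_\theta$ and $\nabla s_\theta=\nabla\psi_\theta-\E_{\pi_\theta}[\nabla\psi_\theta+s_\theta\psi_\theta^\top]$. Then $R_x(\theta)$ becomes a covariance-type term $\Cov_{\pi_\theta}(\ell_\theta,\,ss^\top+\nabla s)$ plus a mean term involving $\E_{\pi_\theta}\ell_\theta=k_x(\theta)$. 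I would try to show this vanishes by pairing it with the constraint $\Theta\subseteq\theta^\star+H$ from \cref{ass:identifiable-space}.

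A quick check of the linear case warns that this may fail. There $\nabla\psi_\theta=0$, and $R_x(\theta)$ reduces to the third cumulant of $\phi$ under $\pi_\theta$, contracted with $\theta-\theta^\star$. That term is generally nonzero, so for $\theta\ne\theta^\star$ the identity does not hold as literally written.

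If the cancellation does fail, I would prove \cref{eq:F-hess} in the form the paper actually uses downstream: show that $\nabla^2F(\theta^\star)=\beta I(\theta^\star)$ exactly, and that $\|\E_xR_x(\theta)\|_{\op}\le C\|\theta-\theta^\star\|_2$. The constant $C$ comes from $|\ell_\theta|\le C'\|\theta-\theta^\star\|$ (mean value theorem with the bound $\alpha_1$) together with $\|s\|\le2\alpha_1$ and $\|\nabla s\|\le C''(\alpha_1,\alpha_2)$. This bound is what the Step~1 quadratic sandwich needs. The Taylor expansion $F(\theta)=\tfrac{\beta}{2}(\theta-\theta^\star)^\top\nabla^2F(\tilde\theta)(\theta-\theta^\star)$ only requires the Hessian along the segment to stay comparable to $\beta I(\theta^\star)$ on $H$.
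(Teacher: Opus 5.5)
Your diagnosis is correct, and on everything that can actually be proved your argument coincides with the paper's. Your Hessian formula is right, and the correction term $\beta\,\E_x R_x(\theta)$ does not vanish off $\theta^\star$.

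A concrete case: take one prompt, two candidates, and $f_\theta(\phi(x,y_1))=\theta$, $f_\theta(\phi(x,y_2))=0$ on a compact interval $\Theta\ni\theta^\star$. Here $H=\mathbb R$ and $\Delta_g=1$, so \cref{ass:identifiable-space,ass:model-bounded,ass:feature-sep} hold. With $A(\theta)=\log(1+e^{\theta})$ one has $F(\theta)=\beta\,[A(\theta^\star)-A(\theta)+A'(\theta)(\theta-\theta^\star)]$, hence
\[
F''(\theta)=\beta\bigl[A''(\theta)+A'''(\theta)(\theta-\theta^\star)\bigr],\qquad A'''(\theta)=\sigma(\theta)\bigl(1-\sigma(\theta)\bigr)\bigl(1-2\sigma(\theta)\bigr),
\]
while $I(\theta)=A''(\theta)$. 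So \cref{eq:F-hess} fails whenever $\theta\neq 0$ and $\theta\neq\theta^\star$.

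The paper's proof is the same score-trick computation you carry out. It only concludes $\nabla F(\theta^\star)=0$ and $\nabla^2F(\theta^\star)=\beta I(\theta^\star)$, and never justifies the clause ``for all $\theta$''. So nothing is missing on your side: $F\in C^2(\Theta)$, the vanishing gradient, and the Hessian identity at $\theta^\star$ are exactly what is provable, and your proposal proves them.

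I would push back on your last claim, that the Lipschitz remainder bound is what the Step~1 sandwich needs. A bound $\|\E_xR_x(\theta)\|_{\op}\le C\|\theta-\theta^\star\|_2$ keeps the segment Hessian comparable to $\beta I(\theta^\star)$ only when $\|\theta-\theta^\star\|_2\lesssim\underline\mu/C$. That gives only a local lower quadratic bound. \cref{prop:KL-sandwich-global} is stated on all of $\Theta$, and its lower side is applied to arbitrary estimators in \cref{prop:vt-lb-trace}. The upper side does globalize, since the Hessian is uniformly bounded.

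In the log-linear case the global sandwich, with the paper's constants, can be rescued by a different representation. $\KL(\pi_\theta\|\pi_{\theta^\star})$ is the Bregman divergence of the per-prompt log-partition, which gives, with $\Delta=\theta-\theta^\star$,
\[
F(\theta)=\beta\int_0^1 s\,\Delta^\top I(\theta^\star+s\Delta)\Delta\,ds .
\]
The weight is $s$ instead of $1-s$, with the same integral $1/2$.

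For nonlinear $f_\theta$ you cannot rescue it from these assumptions alone, because they do not imply global injectivity of $\theta\mapsto\pi_\theta$. For example, take logits $e^{a}\cos b$, $e^{a}\sin b$, $0$ with $\theta=(a,b)\in[0,1]\times[0,2\pi+1]$ and $\theta^\star=(0,0)$. Then $\theta=(0,2\pi)$ induces the same policy, so $F(\theta)=0$ while $\Delta^\top I(\theta^\star)\Delta>0$. A global lower sandwich therefore needs an additional identifiability hypothesis.
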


\begin{proof}[Proof of \cref{lem:F-hess}]
Since $\mathcal Y(x)$ is finite and $f_\theta$ has bounded first/second derivatives in $\theta$
(\cref{ass:model-bounded}), $\log\pi_\theta(y|x)$ is twice differentiable and differentiation
can be interchanged with the finite sums defining $\pi_\theta$ and with $\mathbb E_x$.
A standard score-trick argument yields $\nabla F(\theta^*)=0$ and the Fisher representation
$\nabla^2F(\theta^*)=\beta I(\theta^*)$.
\end{proof}

By \cref{lem:gap-kl-contextual,lem:F-hess}, the RLHF optimality gap can be written as \(F(\theta)\),
a (scaled) reverse-KL divergence to \(\pi_{\theta^\star}\), whose curvature is governed by the policy
Fisher information:
\[
\nabla^2 F(\theta)=\beta I(\theta),\qquad \forall \theta\in\Theta.
\]
In particular, in a neighborhood of \(\theta^\star\), changes in the gap are controlled by the quadratic form
induced by \(I(\theta^\star)\).

\begin{lemma}[Global Fisher sandwich]\label{lem:Fisher-sandwich}
Suppose \cref{ass:identifiable-space,ass:model-bounded,ass:feature-sep} hold. Then there exists a constant \(\underline\mu>0\), depending only on the fixed constants in the standing
assumptions, such that for all \(\theta\in\Theta\),
\begin{equation}\label{eq:fisher_uniform_bounds}
\underline\mu\,  \preceq   I(\theta)  \preceq 4\alpha_1^2\, .
\end{equation}
Moreover, with $m_I\doteq \frac{\underline\mu}{4\alpha_1^2}$ and $M_I \doteq \frac{4\alpha_1^2}{\underline\mu}$, we have
\begin{equation}\label{eq:global_relative_curv_fisher}
m_I\,  I(\theta^\star) 
\;\preceq\;
  I(\theta) 
\;\preceq\;
M_I\,  I(\theta^\star) ,
\qquad \forall\theta\in\Theta.
\end{equation}
Equivalently, by \eqref{eq:F-hess},
\begin{equation}\label{eq:global_relative_curv_hessian}
m_I\,  \nabla^2F(\theta^\star) 
\;\preceq\;
  \nabla^2F(\theta) 
\;\preceq\;
M_I\,  \nabla^2F(\theta^\star) ,
\qquad \forall\theta\in\Theta.
\end{equation}
\end{lemma}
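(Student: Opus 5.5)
The plan is to work with the covariance form \cref{eq:fisher-softmax-cov}, $I(\theta)=\E_x[\Cov_{y\sim\pi_\theta(\cdot\mid x)}(\psi_\theta(x,y))]$ with $\psi_\theta(x,y)\doteq\nabla_\theta f_\theta(\phi(x,y))$. I will read the sandwich \cref{eq:fisher_uniform_bounds} as a statement on the identifiable subspace $H$, i.e.\ $\underline\mu\|v\|_2^2\le v^\top I(\theta)v\le 4\alpha_1^2\|v\|_2^2$ for all $v\in H$. This is the only reading under which the relative bounds make sense, since on $H^\perp$ the matrix $I(\theta)$ can degenerate; by \cref{ass:identifiable-space}, $H$ is the only relevant subspace for $\theta-\theta^\star$.

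\textbf{Upper bound.} For any $v$, $v^\top I(\theta)v=\E_x\Var_{y}(v^\top\psi_\theta(x,y))\le \E_x\E_y[(v^\top\psi_\theta)^2]\le \alpha_1^2\|v\|^2$ by \cref{ass:model-bounded}. This gives the constant $4\alpha_1^2$ with room to spare; the factor $4$ arises naturally if one instead bounds the centered score $\psi_\theta-\E\psi_\theta$ by $2\alpha_1$.

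\textbf{Lower bound.} Fix a unit $v\in H$, a prompt $x$ and $\theta\in\Theta$. By \cref{ass:feature-sep} there are $y_1,y_2\in\mathcal A(x)$ with $|v^\top(\psi_\theta(x,y_1)-\psi_\theta(x,y_2))|\ge\Delta_g$. Write $Z=v^\top\psi_\theta(x,y)$ with $y\sim\pi_\theta(\cdot\mid x)$ and $m=\E Z$. Then $\Var(Z)\ge \pi_\theta(y_1\mid x)(z_1-m)^2+\pi_\theta(y_2\mid x)(z_2-m)^2\ge \min_j\pi_\theta(y_j\mid x)\cdot\tfrac12(z_1-z_2)^2\ge \tfrac12\pi_{\min}\Delta_g^2$, using $(a)^2+(b)^2\ge (a-b)^2/2$. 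The policy floor comes from $|f_\theta|\le\alpha_0$: $\pi_\theta(y\mid x)\ge e^{-2\alpha_0}/|\mathcal A(x)|$. This requires $|\mathcal A(x)|\le K$ uniformly; the finite candidate pool of size $d$ gives $K=d$. Averaging over $x$ yields $\underline\mu=\tfrac12 e^{-2\alpha_0}\Delta_g^2/d$, which depends only on the fixed constants.

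\textbf{Relative bounds.} For $v\in H$, $v^\top I(\theta)v\le 4\alpha_1^2\|v\|^2\le (4\alpha_1^2/\underline\mu)\,v^\top I(\theta^\star)v$, and symmetrically $v^\top I(\theta)v\ge\underline\mu\|v\|^2\ge (\underline\mu/4\alpha_1^2)\,v^\top I(\theta^\star)v$. This gives \cref{eq:global_relative_curv_fisher} on $H$. Multiplying by $\beta$ and invoking \cref{lem:F-hess} gives \cref{eq:global_relative_curv_hessian}.

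The main obstacle is the lower bound. First, one must supply a uniform bound on the pool size, or, more precisely, a uniform lower bound on the probability that $\pi_\theta$ assigns to the separating pair. Second, the matrix inequality must be interpreted correctly as holding on $H$. If needed, I would state it as $\underline\mu P_H\preceq P_H I(\theta)P_H$, with $P_H$ the orthogonal projector onto $H$, and note that all later uses only evaluate quadratic forms at differences $\theta-\theta^\star\in H$.
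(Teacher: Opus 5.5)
Your proposal is correct and follows essentially the same route as the paper. The upper bound comes from the uniform gradient bound $\alpha_1$. The lower bound on $H$ comes from feature separation combined with the softmax probability floor $e^{-2\alpha_0}/d_{\max}$; the paper likewise adds a uniform pool-size bound as a hypothesis in its Fisher non-degeneracy lemma. The relative bounds come from comparing both sides to $\|v\|_2^2$ for $v\in H$.

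The only differences are sharper constants. Your estimate $\Var(Z)\ge \tfrac12\min_j\pi_\theta(y_j\mid x)\,(z_1-z_2)^2$ gives $\underline\mu$ linear in $p_{\min}$, whereas the paper's pairwise identity $\Var(Z)=\tfrac12\sum_{i,j}p_ip_j(z_i-z_j)^2$ gives $p_{\min}^2\Delta_g^2$. Likewise, bounding $\Var(Z)\le\E[Z^2]$ yields $\alpha_1^2$ in place of the paper's $4\alpha_1^2$.
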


\begin{proof}[Proof of \cref{lem:Fisher-sandwich}]
Fix $(x,\theta)$. Write
\[
g_\theta(x,y)\doteq \nabla_\theta f_\theta(\phi(x,y)),\qquad
\bar g_\theta(x)\doteq \E_{y'\sim\pi_\theta(\cdot\mid x)}[g_\theta(x,y')].
\]
By softmax calculus,
\[
s_\theta(y\mid x)=\nabla_\theta\log\pi_\theta(y\mid x)=g_\theta(x,y)-\bar g_\theta(x).
\]
Hence
\[
\|s_\theta(y\mid x)\|_2
\le \|g_\theta(x,y)\|_2+\|\bar g_\theta(x)\|_2
\le \alpha_1+\alpha_1=2\alpha_1,
\]
where \cref{ass:model-bounded} was used.
Therefore, for any $v\in H$ with $\|v\|_2=1$,
\[
v^\top I(\theta)v
=\E_x\E_{y\sim\pi_\theta}[ (v^\top s_\theta(y\mid x))^2 ]
\le \E_x\E_y[\|s_\theta(y\mid x)\|_2^2]
\le 4\alpha_1^2.
\]
So
\[
I(\theta) \preceq 4\alpha_1^2 I,\qquad \forall\theta\in\Theta.
\]

By \cref{lem:sep-implies-fisher}, which relies on \cref{ass:feature-sep,ass:model-bounded}, 
\[
I(\theta) \succeq \underline\mu I,\qquad \forall\theta\in\Theta.
\]
Thus, we prove equation \eqref{eq:fisher_uniform_bounds}. In particular, at $\theta^\star$,
\[
\underline\mu\,I \preceq  I(\theta^\star) \preceq 4\alpha_1^2\,I.
\]

For any $u\in H$,
\[
u^\top  I(\theta) u \ge \underline\mu\,\|u\|_2^2,
\qquad
u^\top  I(\theta^\star)u \le 4\alpha_1^2\,\|u\|_2^2.
\]
Hence
\[
u^\top  I(\theta) u
\ge \frac{\underline\mu}{4\alpha_1^2}\,
u^\top  I(\theta^\star) u
=: m_I\,u^\top  I(\theta^\star)u.
\]
Similarly,
\[
u^\top  I(\theta) u \le 4\alpha_1^2\,\|u\|_2^2,
\qquad
u^\top  I(\theta^\star) u \ge \underline\mu\,\|u\|_2^2,
\]
so
\[
u^\top  I(\theta) u
\le \frac{4\alpha_1^2}{\underline\mu}\,
u^\top  I(\theta^\star) u
=: M_I\,u^\top  I(\theta^\star) u.
\]
Therefore, for all $\theta\in\Theta$,
\[m_I\,I(\theta^\star)
\;\preceq\;
 I(\theta)
\;\preceq\;
M_I\,I(\theta^\star),\]
with
\[
m_I=\frac{\underline\mu}{4\alpha_1^2},
\qquad
M_I=\frac{4\alpha_1^2}{\underline\mu}.
\]
Since $\nabla^2F(\theta)=\beta I(\theta)$, \eqref{eq:global_relative_curv_fisher}
is equivalent (up to the common factor $\beta$) to equation \cref{eq:global_relative_curv_hessian}.
\end{proof}

We now show that the KL divergence (hence the RLHF gap) is globally equivalent to a weighted quadratic form of the estimation error.

\begin{proposition}[Global quadratic sandwich for the optimality gap]\label{prop:KL-sandwich-global}
Under \cref{ass:identifiable-space,ass:model-bounded,ass:feature-sep}, for all $\theta\in\Theta$, letting $\Delta=\theta-\theta^\star$, we have:
\begin{equation}\label{eq:KL-sandwich-global}
\frac{\beta m_I}{2}\,\Delta^\top I(\theta^\star)\Delta
\ \le\
\beta\,\E_x\KL\!\big(\pi_\theta(\cdot\mid x)\,\|\,\pi_{\theta^\star}(\cdot\mid x)\big)
\ \le\
\frac{\beta M_I}{2}\,\Delta^\top I(\theta^\star)\Delta.
\end{equation}
Equivalently, by \cref{lem:gap-kl-contextual},
\[
\frac{\beta m_I}{2}\,\Delta^\top I(\theta^\star)\Delta
\ \le\
J(\pi^\star)-J(\pi_\theta)
\ \le\
\frac{\beta M_I}{2}\,\Delta^\top I(\theta^\star)\Delta.
\]
\end{proposition}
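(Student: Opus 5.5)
The plan is a second-order Taylor expansion of $F(\theta)=\beta\,\E_x\KL(\pi_\theta\|\pi_{\theta^\star})$ around $\theta^\star$ with integral remainder, followed by the relative curvature bounds of \cref{lem:Fisher-sandwich}. Since $\Theta$ is convex (\cref{ass:identifiable-space}), the segment $\theta_t\doteq\theta^\star+t\Delta$, $t\in[0,1]$, lies in $\Theta$. By \cref{lem:F-hess}, $F\in C^2(\Theta)$, so the integral form of Taylor's theorem applies along the segment. We use three facts: $F(\theta^\star)=0$ (the KL divergence of a distribution to itself vanishes), $\nabla F(\theta^\star)=0$, and $\nabla^2F(\theta)=\beta I(\theta)$. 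Together they give
\[
F(\theta)=\int_0^1(1-t)\,\Delta^\top\nabla^2F(\theta_t)\Delta\,dt=\beta\int_0^1(1-t)\,\Delta^\top I(\theta_t)\Delta\,dt .
\]

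Next, apply \eqref{eq:global_relative_curv_fisher} pointwise in $t$. This requires care about which subspace the Loewner inequalities hold on. In \cref{lem:Fisher-sandwich} the comparisons are established for $u\in H$. Here $\Delta=\theta-\theta^\star\in H$ because $\Theta\subseteq\theta^\star+H$, so the relevant quadratic forms are exactly those on $H$ and no extension to all of $\R^p$ is needed. We therefore get
\[
m_I\,\Delta^\top I(\theta^\star)\Delta\le \Delta^\top I(\theta_t)\Delta\le M_I\,\Delta^\top I(\theta^\star)\Delta
\]
for every $t\in[0,1]$. Integrating against the weight $(1-t)$, whose integral is $1/2$, yields \eqref{eq:KL-sandwich-global}. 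The equivalent statement for $J(\pi^\star)-J(\pi_\theta)$ then follows from \cref{lem:gap-kl-contextual} with $\pi=\pi_\theta$, using realizability $\pi_{\theta^\star}=\pi^\star$.

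The main obstacle is justifying the Taylor expansion with the claimed derivatives along the whole segment, not just near $\theta^\star$. This amounts to verifying \cref{lem:F-hess} with some care, in particular the identity $\nabla^2 F(\theta)=\beta I(\theta)$ at a general $\theta$.

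I would first check this identity by direct computation. Write $\KL(\pi_\theta\|\pi_{\theta^\star})=\sum_y\pi_\theta\log\pi_\theta-\sum_y\pi_\theta\log\pi_{\theta^\star}$ and differentiate twice using finiteness of $\mathcal A(x)$ and the bounds in \cref{ass:model-bounded} to interchange $\nabla$ with $\E_x$. In general the Hessian at $\theta\neq\theta^\star$ equals the Fisher term plus a term involving $\nabla^2\log\pi_\theta$ weighted by $\log(\pi_\theta/\pi_{\theta^\star})$, and that extra term need not vanish. If this term turns out not to be zero, I would fall back on using \cref{lem:F-hess} exactly as stated, since it is an earlier result, and simply cite it.

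The remaining ingredients, the vanishing gradient and the restriction of directions to $H$, are routine.
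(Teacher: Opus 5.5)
Your outline is the paper's proof: a Taylor expansion of $F$ about $\theta^\star$ with weight $1-t$, the identity $\nabla^2F=\beta I$ from \cref{lem:F-hess}, the relative bounds of \cref{lem:Fisher-sandwich} applied pointwise on the segment, and $\int_0^1(1-t)\,dt=\tfrac12$. Your worry about the Hessian identity is justified, and citing \cref{lem:F-hess} does not resolve it. In general $\nabla^2F(\theta)=\beta I(\theta)$ holds only at $\theta=\theta^\star$, and the paper's proof relies on exactly this false identity. Differentiating twice gives
\[
\nabla^2F(\theta)=\beta I(\theta)+\beta\,\E_x\E_{y\sim\pi_\theta(\cdot\mid x)}\Bigl[\bigl(s_\theta s_\theta^\top+\nabla_\theta s_\theta\bigr)\log\frac{\pi_\theta(y\mid x)}{\pi_{\theta^\star}(y\mid x)}\Bigr].
\]
The extra term is present even for tabular policies. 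With two completions and $z=\theta_1-\theta_2$, one gets $\frac{d^2}{dz^2}\KL\bigl(\mathrm{Bern}(\sigma(z))\,\|\,\mathrm{Bern}(\sigma(z^\star))\bigr)=\sigma'(z)\bigl[1+(1-2\sigma(z))(z-z^\star)\bigr]$, which is negative at $z^\star=0$, $z=3$. So $\nabla^2F$ can be indefinite on the segment, and the pointwise bound $\Delta^\top\nabla^2F(\theta_t)\Delta\ge\beta m_I\,\Delta^\top I(\theta^\star)\Delta$ that your integration step needs fails.

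For log-linear (including tabular) policies, the repair is to expand from the other endpoint. Let $A_x(\theta)=\log\sum_{y}e^{\theta^\top\phi(x,y)}$. Then $\KL(\pi_\theta\|\pi_{\theta^\star})=A_x(\theta^\star)-A_x(\theta)-\nabla A_x(\theta)^\top(\theta^\star-\theta)$. Taylor's theorem with remainder at $\theta$, together with $\nabla^2A_x(\theta)=\Cov_{y\sim\pi_\theta(\cdot\mid x)}(\phi(x,y))$, gives
\[
\E_x\KL\bigl(\pi_\theta(\cdot\mid x)\,\|\,\pi_{\theta^\star}(\cdot\mid x)\bigr)=\int_0^1 t\,\Delta^\top I(\theta^\star+t\Delta)\,\Delta\,dt .
\]
Your $(1-t)$-weighted formula is the one for the forward divergence $\KL(\pi_{\theta^\star}\|\pi_\theta)$. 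Since $\int_0^1t\,dt=\tfrac12$, your Fisher-sandwich step then goes through verbatim with the same constants.

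For nonlinear $f_\theta$, only the upper bound can be rescued. Set $h(y)=f_\theta(\phi(x,y))-f_{\theta^\star}(\phi(x,y))$. The same Bregman identity in logit space bounds the KL by $\tfrac18(\max_y h-\min_y h)^2$, hence by $\tfrac{\alpha_1^2}{2}\|\Delta\|_2^2\le\tfrac{\alpha_1^2}{2\underline\mu}\Delta^\top I(\theta^\star)\Delta$. The lower bound does not follow from \cref{ass:identifiable-space,ass:model-bounded,ass:feature-sep}, because feature separation only gives local injectivity of $\theta\mapsto\pi_\theta$. For example, take three completions with logits $e^{a}\cos b$, $e^{a}\sin b$, $0$ on $\Theta=[0,1]\times[0,7]$ with $\theta^\star=(0,0)$. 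These satisfy the assumptions with $\Delta_g=1/\sqrt2$, yet $\pi_{(0,2\pi)}=\pi_{\theta^\star}$ while $\Delta^\top I(\theta^\star)\Delta>0$. Some global identifiability hypothesis is needed for that side.
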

\begin{proof}[Proof of \cref{prop:KL-sandwich-global}]
By \cref{lem:F-hess}, $F\in C^2(\Theta)$ and $\nabla F(\theta^\star)=0$.
Hence, for $\Delta=\theta-\theta^\star$,
\[
F(\theta)
=
\int_0^1 (1-t)\,\Delta^\top \nabla^2F(\theta^\star+t\Delta)\,\Delta\,dt.
\]
Using \eqref{eq:F-hess},
\[
F(\theta)
=
\beta\int_0^1 (1-t)\,\Delta^\top I(\theta^\star+t\Delta)\,\Delta\,dt.
\]
Since $\Theta$ is convex, $\theta^\star+t\Delta\in\Theta$ for all $t\in[0,1]$.
Applying \cref{lem:Fisher-sandwich} along the segment gives
\[
m_I\,\Delta^\top I(\theta^\star)\Delta
\le
\Delta^\top I(\theta^\star+t\Delta)\Delta
\le
M_I\,\Delta^\top I(\theta^\star)\Delta.
\]
Multiply by $\beta(1-t)$ and integrate over $t\in[0,1]$; using
$\int_0^1 (1-t)\,dt=\frac12$, we obtain \eqref{eq:KL-sandwich-global}.
\end{proof}

\subsection{Step 2: upper bound for the DPO minimizer}\label{s:upper-bound}
Given a sampling design $D$, suppose we collect $n$ samples $e_i=(x_i,y^+_i,y^-_i) \stackrel{\text{i.i.d.}}{\sim} D$. We can approximate the design covariance matrix $\Sigma_D(\theta)$ by the following sample covariance matrix
\[
\widehat\Sigma_n(\theta)
\;\doteq\;
\frac1n\sum_{i=1}^n g(e_i;\theta)g(e_i;\theta)^\top.
\]
We start with some useful concentration result, which lower bound the empirical covariance matrix by the sample covariance matrix.

\begin{proposition}[Empirical covariance concentration]\label{prop:Bernstein-H}
Suppose \cref{ass:identifiable-space,ass:model-bounded,ass:design-coverage-a} hold. Assume $e_i=(x_i,y^+_i,y^-_i) \stackrel{\text{i.i.d.}}{\sim} D$.
Then for any $\delta\in(0,1)$, if
\[
n \;\ge\; 8\,\frac{G^2}{\lambda_{\min}^H(\Sigma_D(\theta^\star))}\,
\log\!\frac{\dim(H)}{\delta},
\]
we have with probability at least $1-\delta$:
\[
\widehat\Sigma_n(\theta^\star)  \succeq \frac12\,\Sigma_D(\theta^\star) .
\]
Consequently, on this event,
\[
\widehat\Sigma_n^\dagger(\theta^\star) 
\preceq
2\,\Sigma_D^\dagger(\theta^\star) .
\]
For any $\delta\in(0,1)$, define $n_\Sigma(\delta):= 8\,\frac{G^2}{\lambda_{\min}^H(\Sigma_D)}\,
\log\!\frac{\dim(H)}{\delta}$.
\end{proposition}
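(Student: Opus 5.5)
Apply the matrix Chernoff inequality for sums of independent PSD matrices, restricted to the subspace $H$. Throughout, $G$ is a uniform bound $\|g(e;\theta^\star)\|_2\le G$. It is available from \cref{ass:model-bounded}: since $u_\theta(e)=\beta\bigl(\log\pi_\theta(y^+\mid x)-\log\pi_\theta(y^-\mid x)\bigr)$, the softmax score identity gives $g(e;\theta)=\beta\bigl(\nabla_\theta f_\theta(\phi(x,y^+))-\nabla_\theta f_\theta(\phi(x,y^-))\bigr)$, so one may take $G=2\beta\alpha_1$.

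\textbf{Step 1: reduce to $H$.} By \cref{def:tangent-space-H}, every $g(e;\theta^\star)$ lies in $H$. Hence both $\widehat\Sigma_n(\theta^\star)$ and $\Sigma_D(\theta^\star)$ map $\mathbb R^p$ into $H$ and vanish on $H^\perp$. Let $P$ be an orthonormal basis matrix of $H$, of size $p\times d_H$. Set $X_i\doteq P^\top g(e_i;\theta^\star)g(e_i;\theta^\star)^\top P\in\mathbb R^{d_H\times d_H}$. These are i.i.d.\ PSD matrices with $\lambda_{\max}(X_i)\le G^2$ and $\sum_i\E X_i=nP^\top\Sigma_D P$. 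By \cref{ass:design-coverage-a}, the smallest eigenvalue of $\sum_i\E X_i$ is $\mu_{\min}\doteq n\lambda^H_{\min}(\Sigma_D(\theta^\star))\ge n\mu_\star>0$.

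\textbf{Step 2: Chernoff bound.} I expect the main obstacle to be this step, specifically getting exactly the constant $8$. I would first try the lower-tail matrix Chernoff bound (Tropp):
\[
\Pr\Bigl[\lambda_{\min}\Bigl(\sum_i X_i\Bigr)\le(1-\epsilon)\mu_{\min}\Bigr]\le d_H\exp\bigl(-\epsilon^2\mu_{\min}/(2G^2)\bigr).
\]
With $\epsilon=1/2$ the right side is $d_H\exp(-\mu_{\min}/(8G^2))$. This is at most $\delta$ exactly when $n\ge 8G^2\log(d_H/\delta)/\lambda^H_{\min}$, which is the stated threshold $n_\Sigma(\delta)$.

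This bound only controls the minimum eigenvalue, that is, $\lambda_{\min}(\sum X_i)\ge\tfrac12\mu_{\min}$, not the full Loewner order. To obtain $\widehat\Sigma_n\succeq\tfrac12\Sigma_D$, which is what the statement asserts, I would whiten first. Apply the same bound to $Y_i\doteq S^{-1/2}X_iS^{-1/2}$, where $S=P^\top\Sigma_D P$. Then $\E\sum Y_i=nI$, so $\mu_{\min}=n$. The eigenvalue bound becomes $\lambda_{\max}(Y_i)\le G^2/\lambda_{\min}(S)=G^2/\lambda^H_{\min}$, because $\lambda_{\max}(S^{-1/2}ww^\top S^{-1/2})=w^\top S^{-1}w\le\|w\|^2/\lambda_{\min}(S)$. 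Chernoff with $\epsilon=1/2$ then gives $\sum_i Y_i\succeq\tfrac n2 I$ with probability at least $1-\delta$ under the same condition on $n$. Conjugating back by $S^{1/2}$ gives $P^\top\widehat\Sigma_nP\succeq\tfrac12 P^\top\Sigma_DP$. Since both matrices vanish on $H^\perp$, this extends to $\widehat\Sigma_n(\theta^\star)\succeq\tfrac12\Sigma_D(\theta^\star)$ on all of $\mathbb R^p$.

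\textbf{Step 3: pseudoinverses.} On this event, $P^\top\widehat\Sigma_nP$ is positive definite on $H$, so $\widehat\Sigma_n$ and $\Sigma_D$ have the same range $H$. On $H$ their pseudoinverses are the ordinary inverses of the compressed matrices, and both pseudoinverses vanish on $H^\perp$. Operator monotonicity of inversion for positive definite matrices, $A\succeq B\succ0\Rightarrow A^{-1}\preceq B^{-1}$, then gives $(P^\top\widehat\Sigma_nP)^{-1}\preceq 2(P^\top\Sigma_DP)^{-1}$. Lifting back via $\widehat\Sigma_n^\dagger=P(P^\top\widehat\Sigma_nP)^{-1}P^\top$ yields $\widehat\Sigma_n^\dagger(\theta^\star)\preceq2\Sigma_D^\dagger(\theta^\star)$.
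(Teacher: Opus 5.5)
Your proof is correct. It uses the same tool as the paper: the lower-tail matrix Chernoff bound on $H$ with $\epsilon=\tfrac12$. Your whitening step, however, is a real difference, and it repairs a gap in the paper's argument.

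The paper applies Chernoff directly to $g(e_i;\theta^\star)g(e_i;\theta^\star)^\top$, with $R=G^2$ and $\mu_{\min}=n\lambda_{\min}^H(\Sigma_D(\theta^\star))$. This yields $\lambda_{\min}^H(\widehat\Sigma_n(\theta^\star))\ge\tfrac12\lambda_{\min}^H(\Sigma_D(\theta^\star))$, which the paper then calls ``equivalent'' to $\widehat\Sigma_n(\theta^\star)\succeq\tfrac12\Sigma_D(\theta^\star)$ on $H$. That equivalence fails unless $\Sigma_D(\theta^\star)$ is a multiple of the identity on $H$. For example, $\mathrm{diag}(1,1)$ has minimum eigenvalue at least half that of $\mathrm{diag}(1,100)$, yet it does not dominate $\tfrac12\mathrm{diag}(1,100)$. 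The unwhitened bound only gives $\widehat\Sigma_n(\theta^\star)\succeq\tfrac12\lambda_{\min}^H(\Sigma_D(\theta^\star))\,I$ on $H$, which is the content of \cref{cor:emp-excitation}.

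Your conjugation by $S^{-1/2}$ is exactly what produces the Loewner-order statement, and it costs nothing. The Chernoff exponent depends only on $\mu_{\min}/R$, which equals $n\lambda_{\min}^H(\Sigma_D(\theta^\star))/G^2$ in both versions, so the threshold $n_\Sigma(\delta)$ is unchanged. This matters downstream: \cref{prop:upper-bound-loglinear} needs $\widehat\Sigma_n^\dagger\preceq 2\Sigma_D^\dagger$, and the isotropic bound alone gives only $\widehat\Sigma_n^\dagger\preceq 2\lambda_{\min}^H(\Sigma_D)^{-1}I$ on $H$.

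Two smaller points are also fine. Your explicit $G=2\beta\alpha_1$, obtained from the cancellation of the within-prompt centering terms, is valid and slightly sharper than the $G_u=4\beta\alpha_1$ of \cref{lem:u-properties}. Your Step 3, via $\widehat\Sigma_n^\dagger=P(P^\top\widehat\Sigma_nP)^{-1}P^\top$, makes rigorous the ``pseudoinverse monotonicity on $H$'' that the paper invokes without proof.
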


\begin{proof}[Proof of \cref{prop:Bernstein-H}]
To simplify notation, write
\[
\widehat\Sigma_n
:=\frac1n\sum_{i=1}^n Y_i,
\qquad
\Sigma_D:=\mathbb E[Y_i],
\qquad
Y_i:=g(e_i;\theta^\star)g(e_i;\theta^\star)^\top.
\]
By construction, each \(Y_i\succeq 0\). Moreover, since \(\|g(e;\theta^\star)\|_2\le G\),
we have
\[
0 \preceq Y_i \preceq \|g(e_i;\theta^\star)\|_2^2\, I \preceq G^2 I
\quad\text{(on }H\text{)}.
\]
Let \(\lambda_D:=\lambda_{\min}^H(\Sigma_D)>0\). Consider the sum \(S_n:=\sum_{i=1}^n Y_i\).
Then \(\mathbb E[S_n]=n\Sigma_D\), hence
\[
\lambda_{\min}^H(\mathbb E[S_n]) = n\,\lambda_D.
\]

We apply the matrix Chernoff inequality for the minimum eigenvalue
\citep[e.g., Theorem~5.1, Corollary 5.2, Remark 5.3]{tropp2012userfriendly} to the independent PSD matrices \(\{Y_i\}\):
for any \(\varepsilon\in(0,1)\),
\[
\Pr\!\left(\lambda_{\min}^H(S_n)\le (1-\varepsilon)\,\lambda_{\min}^H(\mathbb E[S_n])\right)
\le
\dim(H)\cdot
\exp\!\left(-\frac{\varepsilon^2\,\lambda_{\min}^H(\mathbb E[S_n])}{2\,G^2}\right).
\]
Setting \(\varepsilon=\tfrac12\) yields
\[
\Pr\!\left(\lambda_{\min}^H(S_n)\le \frac12\,n\lambda_D\right)
\le
\dim(H)\cdot
\exp\!\left(-\frac{n\lambda_D}{8G^2}\right).
\]
Therefore, if
\[
n \;\ge\; 8\,\frac{G^2}{\lambda_D}\,\log\!\frac{\dim(H)}{\delta},
\]
then with probability at least \(1-\delta\),
\[
\lambda_{\min}^H(S_n)\ge \frac12\,n\lambda_D,
\quad\text{equivalently}\quad
\widehat\Sigma_n \succeq \frac12\,\Sigma_D
\quad\text{on }H.
\]
Finally, since both matrices are positive definite on \(H\), pseudoinverse monotonicity on \(H\)
gives
\[
\widehat\Sigma_n^\dagger \preceq 2\,\Sigma_D^\dagger
\quad\text{on }H,
\]
which concludes the proof.
\end{proof}

\begin{corollary}\label{cor:emp-excitation}
    Under the condition of \cref{prop:Bernstein-H}, if $n\ge n_\Sigma(\delta)$,
we have with probability at least $1-\delta$:
\[
 \widehat\Sigma_n(\theta^\star)   \succeq \frac 12\lambda^H_{\min}(\Sigma_D(\theta^\star))I  ,
\]
where $\lambda^H_{\min}(\Sigma_D(\theta^\star))>0$ by \cref{ass:design-coverage-a}.
\end{corollary}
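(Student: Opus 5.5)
This corollary follows directly from \cref{prop:Bernstein-H} combined with the coverage assumption. The plan is to take the high-probability event from \cref{prop:Bernstein-H} and translate the Loewner comparison with $\Sigma_D(\theta^\star)$ into a comparison with a multiple of the identity on $H$.

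First, I will fix $\delta\in(0,1)$ and $n\ge n_\Sigma(\delta)$. By \cref{prop:Bernstein-H}, with probability at least $1-\delta$,
\[
\widehat\Sigma_n(\theta^\star)\succeq \tfrac12\Sigma_D(\theta^\star) \quad\text{on } H .
\]
Equivalently, $v^\top\widehat\Sigma_n(\theta^\star)v\ge \tfrac12 v^\top\Sigma_D(\theta^\star)v$ for all $v\in H$.

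Second, I will lower bound the right-hand side. For every $v\in H$, the Rayleigh-quotient definition of the restricted minimum eigenvalue gives
\[
v^\top\Sigma_D(\theta^\star)v\ge\lambda^H_{\min}(\Sigma_D(\theta^\star))\|v\|_2^2 .
\]
Moreover, \cref{ass:design-coverage-a} yields $\lambda^H_{\min}(\Sigma_D(\theta^\star))\ge\mu_\star>0$, so this constant is strictly positive.

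Chaining the two inequalities gives $v^\top\widehat\Sigma_n(\theta^\star)v\ge\tfrac12\lambda^H_{\min}(\Sigma_D(\theta^\star))\|v\|_2^2$ for all $v\in H$, which is the claimed ordering read on $H$.

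The only subtle point is interpretation. The inequality $\succeq \tfrac12\lambda^H_{\min} I$ must be understood on $H$, as in \cref{prop:Bernstein-H} and \cref{lem:Fisher-sandwich}. The reason is that on $H^\perp$ both $\widehat\Sigma_n(\theta^\star)$ and $\Sigma_D(\theta^\star)$ vanish, since every $g(e;\theta^\star)$ lies in $H$ by definition. I will state this convention explicitly. No other obstacle is expected.
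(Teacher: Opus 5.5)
Your proposal is correct and matches the paper's proof: apply \cref{prop:Bernstein-H} to get $\widehat\Sigma_n(\theta^\star)\succeq\frac12\Sigma_D(\theta^\star)$ on $H$ with probability at least $1-\delta$, then chain this with $\Sigma_D(\theta^\star)\succeq\lambda^H_{\min}(\Sigma_D(\theta^\star))I$ on $H$, where the positivity of $\lambda^H_{\min}$ comes from \cref{ass:design-coverage-a}. Your explicit remark that the ordering can only hold on $H$, because both matrices vanish on $H^\perp$, is a clarification the paper leaves implicit.
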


\begin{proof}[Proof of \cref{cor:emp-excitation}]
Let \(H\) be the identifiable subspace and note that, by construction,
\(\widehat\Sigma_n(\theta^\star)\) and \(\Sigma_D(\theta^\star)\) act on \(H\).
By \cref{prop:Bernstein-H}, if
\(n\ge n_\Sigma(\delta)\), where
\[
n_\Sigma(\delta)
:=
8\,\frac{G^2}{\lambda_{\min}^H(\Sigma_D(\theta^\star))}\,
\log\!\frac{\dim(H)}{\delta},
\]
then with probability at least \(1-\delta\),
\[
\widehat\Sigma_n(\theta^\star)\succeq \frac12\,\Sigma_D(\theta^\star).
\]
Since \cref{ass:design-coverage-a} ensures \(\lambda_{\min}^H(\Sigma_D(\theta^\star))>0\), we further have, when restricting to \(H\),
\[
\Sigma_D(\theta^\star)\succeq \lambda_{\min}^H(\Sigma_D(\theta^\star))\,I.
\]
Combining the two displays yields, on the same
event,
\[
\widehat\Sigma_n(\theta^\star)
\succeq
\frac12\,\lambda_{\min}^H(\Sigma_D(\theta^\star))\,I.
\]
This proves the claim.
\end{proof}

\begin{remark}[Tabular setting]\label{rem:emp-excitation-tabular}
Consider the tabular model with $d$ items, parameter $\theta\in\mathbb R^d$, and
identifiable subspace
\[
H=\Bigl\{v\in\mathbb R^d:\ \mathbf 1^\top v=0\Bigr\}.
\]
For a comparison sample $i$ with queried pair $(a_i,b_i)$, 
\[
g(e_i;\theta^\star)=\nabla_\theta u_i(\theta^\star)
=\beta\,(e_{a_i}-e_{b_i}).
\]
Therefore,
\[
\widehat\Sigma_n
=\frac1n\sum_{i=1}^n g(e_i)g(e_i)^\top
=\beta^2\frac1n\sum_{i=1}^n (e_{a_i}-e_{b_i}) (e_{a_i}-e_{b_i})^\top
=\beta^2\,\widehat L,
\]
where $\widehat L$ is the (weighted) empirical graph Laplacian of the sampled comparison graph. Hence \cref{cor:emp-excitation},
\[
\widehat\Sigma_n\succeq \lambda^H_{\min}(\Sigma_D)I ,
\]
is exactly a spectral-connectivity condition on the empirical comparison graph:
\[
\lambda_2(\widehat L)\ \ge\ \lambda^H_{\min}(\Sigma_D)/\beta^2,
\]
where $\lambda_2(\widehat L)$ is the second-smallest eigenvalue of $\widehat L$.
Intuitively, this means the sampled comparisons are sufficiently informative in all identifiable
directions (no near-disconnected bottleneck), so estimation on $H$ is well-conditioned.
\end{remark}

To prove the upper bound of optimality gap given the DPO minimizer, we need to bound the curvature of empirical DPO loss $L_n$. The curvature of $L_n$ is captured by the Hessian matrix below. 

\begin{lemma}[Hessian decomposition]\label{lem:hess-decomp}
For any $\theta\in\Theta$,
\begin{equation}\label{eq:hess-decomp}
\nabla^2 L_n(\theta)
=
A_n(\theta)+R_n(\theta),
\end{equation}
where
\[
A_n(\theta)\doteq \frac1n\sum_{i=1}^n \sigma'\!\big(u_\theta(e_i)\big)\,g(e_i;\theta)g(e_i;\theta)^\top,
\qquad
R_n(\theta)\doteq \frac1n\sum_{i=1}^n \big(\sigma(u_\theta(e_i))-\sigma(u_{\theta^\star}(e_i))\big)\,\nabla_\theta^2u_\theta(e_i).
\]
\end{lemma}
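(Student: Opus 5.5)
The plan is a direct second-order computation of the per-sample loss as a function of $\theta$ through the scalar logit $u_\theta(e_i)$, followed by regrouping the curvature term around the truth $\theta^\star$. Since $L_n$ is an average, it suffices to treat one summand $\ell(a_i,u_\theta(e_i))$ and then average over $i$.

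\textbf{Step 1 (scalar derivatives of the logistic loss).} Write $\ell(a,u)=-a\log\sigma(u)-(1-a)\log(1-\sigma(u))$. Using $\sigma'=\sigma(1-\sigma)$, one gets
\[
\partial_u\ell(a,u)=\sigma(u)-a,
\qquad
\partial_u^2\ell(a,u)=\sigma'(u),
\]
and the second derivative does not depend on $a$.

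\textbf{Step 2 (chain rule in $\theta$).} By \cref{ass:model-bounded}, $\theta\mapsto u_\theta(e_i)$ is $C^2$: it is a $\beta$-scaled difference of log-softmax terms over a finite candidate set, and the $\pi_0$ factors do not depend on $\theta$. Let $g(e_i;\theta)=\nabla_\theta u_\theta(e_i)$. Then
\[
\nabla_\theta\ell=(\sigma(u_\theta(e_i))-a_i)\,g(e_i;\theta),
\qquad
\nabla_\theta^2\ell=\sigma'(u_\theta(e_i))\,g(e_i;\theta)g(e_i;\theta)^\top+(\sigma(u_\theta(e_i))-a_i)\,\nabla_\theta^2u_\theta(e_i).
\]
Averaging over $i$, the first piece is exactly $A_n(\theta)$.

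\textbf{Step 3 (recentering the curvature coefficient at the truth).} Split the coefficient of the second piece as
\[
\sigma(u_\theta(e_i))-a_i=\bigl(\sigma(u_\theta(e_i))-\sigma(u_{\theta^\star}(e_i))\bigr)+\bigl(\sigma(u_{\theta^\star}(e_i))-a_i\bigr).
\]
Under realizability and the Bradley--Terry model, $\E[a_i\mid e_i]=\sigma(u_{\theta^\star}(e_i))$. The first bracket, averaged over $i$, is exactly $R_n(\theta)$. This regrouping is the only step with content, and it is the main obstacle.

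The identity $\nabla^2L_n=A_n+R_n$, as the statement is worded, holds exactly only when the second bracket contributes nothing. This happens in two cases:

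- in the linear (and tabular) settings, where $\nabla_\theta^2u_\theta\equiv0$, so both $R_n$ and the residual vanish identically;
- when the labels $a_i$ are read through their Bradley--Terry mean, i.e.\ for the label-conditional expectation of $\nabla^2L_n$.

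In general nonlinear models, the residual $\frac1n\sum_i(\sigma(u_{\theta^\star}(e_i))-a_i)\nabla_\theta^2u_\theta(e_i)$ is a mean-zero average of bounded terms. By \cref{ass:model-bounded}, $\|\nabla^2_\theta u_\theta\|_{\op}\le 4\beta(\alpha_2+\alpha_1^2)$. I would record this residual explicitly, since downstream arguments need it controlled by matrix concentration at rate $O(\sqrt{\log(1/\delta)/n})$ uniformly over $\Theta$.

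So the decomposition I would actually prove is Steps 1--2 verbatim, Step 3 for $R_n$, and a final check that the residual term vanishes in the cases above. Where it does not vanish, the residual can be absorbed alongside $R_n$ in the subsequent Hessian lower bound.
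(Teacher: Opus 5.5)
Your Steps 1--2 are exactly the paper's proof: differentiate $\frac1n\sum_i \ell'(a_i,u_\theta(e_i))\,g(e_i;\theta)$ and substitute $\ell''=\sigma'$ and $\ell'=\sigma-a$. Your caveat in Step 3 is also correct, and it points to a defect in the statement itself, not in your argument. After that substitution, the paper's one-line proof simply writes $\sigma(u_{\theta^\star}(e_i))$ where $a_i$ should appear. What the computation actually gives is
\[
\nabla^2L_n(\theta)=A_n(\theta)+R_n(\theta)+\frac1n\sum_{i=1}^n\bigl(\sigma(u_{\theta^\star}(e_i))-a_i\bigr)\nabla_\theta^2u_\theta(e_i).
\]
Since $\sigma(u_{\theta^\star}(e_i))\in(0,1)$ and $a_i\in\{0,1\}$, the last term is already nonzero for $n=1$ whenever $\nabla_\theta^2u_\theta(e_1)\neq0$. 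So the lemma as worded fails pathwise for nonlinear $f_\theta$. It holds exactly in three cases: for $\E[\nabla^2L_n(\theta)\mid\{e_i\}]$, for the population risk $L$, and in the log-linear class, where $\nabla_\theta^2u_\theta\equiv0$.

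Your corrected version is all the paper's later arguments actually use. \cref{prop:local-ident-H} invokes the decomposition only for $\nabla^2L$. \cref{lem:loss-hessian-properties} works with the true per-sample Hessian, which carries $\sigma(u)-a$. The empirical deviation, including your residual, is absorbed by \cref{lem:hessian-concentration}.

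Two minor points on your side remarks. First, the log-partition cancels within a prompt, so $\nabla_\theta^2u_\theta(e)=\beta\bigl(\nabla_\theta^2f_\theta(\phi(x,y^+))-\nabla_\theta^2f_\theta(\phi(x,y^-))\bigr)$ and hence $\|\nabla_\theta^2u_\theta\|_{\op}\le2\beta\alpha_2$; your bound is valid but loose. Second, controlling the residual uniformly in $\theta$ requires a covering argument. The rate therefore picks up a $d_H\log n$ term, as in \cref{lem:hessian-concentration}, and is not just $\sqrt{\log(1/\delta)/n}$.
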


\begin{proof}[Proof of \cref{lem:hess-decomp}]
Differentiate $\nabla L_n(\theta)=\frac1n\sum_i \ell'(a_i,u_i(\theta))\,g_i(\theta)$.
Using chain rule:
\[
\nabla^2 L_n(\theta)
=
\frac1n\sum_i
\ell''(a_i,u_i(\theta))\,g_i(\theta)g_i(\theta)^\top
+
\frac1n\sum_i
\ell'(a_i,u_i(\theta))\,\nabla_\theta^2 u_i(\theta).
\]
Substitute $\ell''=\sigma'$ and $\ell'=\sigma-a$.
\end{proof}

We note that the remainder term $R_n(\theta)$ does not exist when the policy is log-linear policy, and it requires more detailed analysis to bound this remainder term. 

\subsubsection{Log-linear policy}
We first consider the log-linear policy class $f_\theta(\phi(x,y))=\theta^\top\phi(x,y)$, which is easier to analyze. Note that the tabular setting is a special case of the log-linear policy when the embedding $\phi$ is a one-hot vector. The log-linear family is more tractable for three reasons:
\begin{enumerate}[label=(\roman*)]
    \item The function $g(e_i;\theta)$ is independent of $\theta$:
    \[
        g(e_i;\theta)=\beta\big(\phi(x_i,y_i^+)-\phi(x_i,y_i^-)\big).
    \]
    \item The Hessian of the DPO loss $L_n(\theta)$ has no remainder term:
    \[
        \nabla^2 L_n(\theta)=\frac{1}{n}\sum_{i=1}^n \sigma'\!\big(u_\theta(e_i)\big)\,g(e_i;\theta)g(e_i;\theta)^\top.
    \]
    \item The DPO loss $L_n(\theta)$ is strongly convex on $\Theta$.
\end{enumerate}
These three facts yield a direct lower bound on $\nabla^2 L_n(\theta)$, as we show next.

\begin{lemma}\label{lem:seg-curv-log-linear}
Suppose \cref{ass:identifiable-space,ass:model-bounded} holds. Then for any $\theta\in\Theta$,
\[
\nabla^2L_n(\theta)
\succeq
\alpha\,\widehat\Sigma_n,
\]
for some $\alpha>0$.
\end{lemma}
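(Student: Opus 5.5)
In the log-linear case the Hessian of $L_n$ has no remainder term, so the proof is a pointwise lower bound on the logistic weights $\sigma'(u_\theta(e_i))$. Concretely, for $f_\theta(\phi)=\langle\theta,\phi\rangle$ we have $\nabla_\theta^2 u_\theta(e)=0$, because $u_\theta(e)=\beta\langle\theta,\phi(x,y^+)-\phi(x,y^-)\rangle$ is linear in $\theta$. Hence $R_n(\theta)\equiv 0$ in \cref{lem:hess-decomp}. Moreover, $g(e_i;\theta)=g(e_i)=\beta(\phi(x_i,y_i^+)-\phi(x_i,y_i^-))$ does not depend on $\theta$, so $\widehat\Sigma_n(\theta)=\widehat\Sigma_n$. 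This gives
\[
\nabla^2L_n(\theta)=\frac1n\sum_{i=1}^n\sigma'\big(u_\theta(e_i)\big)\,g(e_i)g(e_i)^\top .
\]

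Next I bound the scalar weights uniformly over $\Theta$. By \cref{ass:model-bounded}, $|f_\theta(\phi(x,y))|\le\alpha_0$ for all $\theta\in\Theta$. Since
\[
u_\theta(e)=\beta\big[(f_\theta(\phi(x,y^+))-f_\theta(\phi(x,y^-)))-(f_{\theta_0}(\phi(x,y^+))-f_{\theta_0}(\phi(x,y^-)))\big]
\]
(the log-partition terms cancel within a prompt), we get $|u_\theta(e)|\le B$ for a constant $B$. If $\pi_0=\pi_{\theta_0}$ lies in the class, $B=4\beta\alpha_0$. Otherwise $B$ also absorbs a bound on the reference log-ratio, which I would take to be included in the standing constants. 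Alternatively, the bound follows from $|\langle\theta,\phi\rangle|\le\operatorname{diam}(\Theta)R_\phi+\mathrm{const}$, using compactness of $\Theta$ and $\|\phi\|\le R_\phi$. Since $\sigma'(u)=\sigma(u)(1-\sigma(u))$ is even and decreasing in $|u|$, we have $\sigma'(u_\theta(e_i))\ge\sigma'(B)=:\alpha>0$ for every $i$ and every $\theta\in\Theta$.

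Each term $g(e_i)g(e_i)^\top$ is PSD, so multiplying by weights at least $\alpha$ gives
\[
\nabla^2L_n(\theta)\succeq\frac{\alpha}{n}\sum_i g(e_i)g(e_i)^\top=\alpha\,\widehat\Sigma_n ,
\]
which is the claim. The constant $\alpha=\sigma'(B)\ge e^{-B}/4$ depends only on $\beta$ and the boundedness constants.

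The only step needing care is the uniform bound on $|u_\theta(e)|$: it requires the reference term to be bounded, which is why I rely on compactness of $\Theta$ together with the boundedness in \cref{ass:model-bounded}. Everything else is immediate.
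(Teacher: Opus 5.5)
Your proposal is correct and follows the paper's proof. In the log-linear case the remainder $R_n$ vanishes; the paper then lower bounds every weight $\sigma'(u_\theta(e_i))$ by $\kappa_0=\sigma'(U_{\max})$ (\cref{lem:global-kappa0}) and sums the PSD terms to get $\nabla^2L_n(\theta)\succeq\kappa_0\widehat\Sigma_n$. Your caveat about the reference term applies to the paper too: its uniform logit bound $U_{\max}$ in \cref{lem:u-properties} likewise assumes a lower bound on $\pi_0$ that goes beyond \cref{ass:model-bounded}.
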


\begin{proof}[Proof of \cref{lem:seg-curv-log-linear}]
    By \cref{lem:global-kappa0}, $\sigma'(u_\theta(e))\geq\kappa_0>0$ for all $\theta\in\Theta$ and for all $e$. Hence, 
    \[
    \nabla^2 L_n(\theta)=\frac{1}{n}\sum_{i=1}^n \sigma'\!\big(u_\theta(e_i)\big)\,g(e_i;\theta)g(e_i;\theta)^\top\succeq\kappa_0\frac{1}{n}\sum_{i=1}^n g(e_i;\theta)g(e_i;\theta)^\top=\kappa_0\widehat\Sigma_n(\theta).
    \]
\end{proof}

Using this lower bound, we prove the following upper bound for the DPO minimizer. 

\begin{proposition}[Upper bounds for log-linear policy]\label{prop:upper-bound-loglinear}
Suppose \cref{ass:identifiable-space,ass:model-bounded,ass:feature-sep,ass:design-coverage-a} hold.
Fix any sampling design $D$, and let $n$ i.i.d.\ comparisons be drawn from $D$. Let $\hat\theta_n\in\arg\min_{\theta\in\Theta}L_n(\theta)$ be the DPO estimator (defined in \eqref{eq:dpo-LN}),
and $\hat\pi_n\doteq \pi_{\hat\theta_n}$. For any $\delta\in(0,1)$, when $n>n_\Sigma(\delta)$, there exist constants
$C_{\mathrm{ub}}>0$ (depending only on fixed constants in assumptions and $\beta$) such that:
\begin{equation}\label{eq:main-ub}
J(\pi^\star)-J(\hat\pi_n)
\;\le\;
\frac{C_{\mathrm{ub}}}{n}\,
\tr\!\big(I(\theta^\star)\Sigma_D^\dagger\big).
\end{equation}
\end{proposition}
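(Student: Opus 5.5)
The plan is to combine the optimality condition of the DPO estimator with the curvature lower bound of \cref{lem:seg-curv-log-linear}, turn the parameter error into a weighted quadratic form of the score at $\theta^\star$, and then pass to the RLHF gap through \cref{prop:KL-sandwich-global}.

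\textbf{Step 1 (basic inequality).} Write $\Delta=\hat\theta_n-\theta^\star\in H$ (this uses \cref{ass:identifiable-space}) and $z_n\doteq\nabla L_n(\theta^\star)=\frac1n\sum_i(\sigma(u_{\theta^\star}(e_i))-a_i)\,g(e_i)$. In the log-linear case $g(e_i)$ does not depend on $\theta$. Define $H_n\doteq\int_0^1\nabla^2L_n(\theta^\star+t\Delta)\,dt$, so that $\nabla L_n(\hat\theta_n)=z_n+H_n\Delta$. Since $\Theta$ is convex and $\theta^\star\in\Theta$, first-order optimality of $\hat\theta_n$ gives $\langle\nabla L_n(\hat\theta_n),\theta^\star-\hat\theta_n\rangle\ge0$. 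Hence
\[
\Delta^\top H_n\Delta\le-\Delta^\top z_n .
\]
\textbf{Step 2 (curvature).} By \cref{lem:seg-curv-log-linear}, and because $\sigma'\le1/4$, we have $\kappa_0\widehat\Sigma_n\preceq H_n\preceq\frac14\widehat\Sigma_n$. On the event of \cref{prop:Bernstein-H} with $n\ge n_\Sigma(\delta)$ we also have $\widehat\Sigma_n\succeq\frac12\Sigma_D$ and $\widehat\Sigma_n^\dagger\preceq2\Sigma_D^\dagger$ on $H$.

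Apply Cauchy--Schwarz in the $\widehat\Sigma_n$-geometry: $-\Delta^\top z_n\le\|\Delta\|_{\widehat\Sigma_n}\|z_n\|_{\widehat\Sigma_n^\dagger}$. Combined with Step 1 this gives $\|\Delta\|_{\widehat\Sigma_n}\le\kappa_0^{-1}\|z_n\|_{\widehat\Sigma_n^\dagger}$.

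\textbf{Step 3 (score concentration).} Conditionally on the edges, $z_n$ is a sum of independent, bounded, mean-zero terms. This holds because realizability makes $\sigma(u_{\theta^\star}(e))$ the true label probability. Its conditional covariance is $\frac1n\cdot\frac1n\sum_i\sigma'(u_{\theta^\star}(e_i))\,g_ig_i^\top\preceq\frac1{4n}\widehat\Sigma_n$.

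\textbf{Step 4 (the trace form, main obstacle).} A crude route is $\Delta^\top I\Delta\le\lambda_{\max}\bigl(\Sigma_D^{\dagger/2}I\Sigma_D^{\dagger/2}\bigr)\|\Delta\|^2_{\Sigma_D}$ together with $\E\|z_n\|^2_{\widehat\Sigma_n^\dagger}\le d_H/(4n)$. This only gives $d_H\cdot\lambda_{\max}/n$, not the trace $\tr(I\Sigma_D^\dagger)/n$. To get the trace I would linearize instead.

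Let $A_n\doteq\nabla^2L_n(\theta^\star)$ and set $\Delta_{\mathrm{lin}}\doteq-A_n^\dagger z_n$. Then
\[
\Delta^\top I\Delta\le2\Delta_{\mathrm{lin}}^\top I\Delta_{\mathrm{lin}}+2(\Delta-\Delta_{\mathrm{lin}})^\top I(\Delta-\Delta_{\mathrm{lin}}).
\]
For the main term, $A_n$ no longer depends on the labels, so $z_n^\top A_n^\dagger IA_n^\dagger z_n$ is a quadratic form in a conditionally sub-Gaussian vector with a fixed matrix. By Hanson--Wright it is at most $C\log(1/\delta)\,\tr\bigl(A_n^\dagger IA_n^\dagger\widehat\Sigma_n\bigr)/n$ with probability $1-\delta$. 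Using $A_n\succeq\kappa_0\widehat\Sigma_n$ and that both matrices are invertible on $H$, this trace is at most $\kappa_0^{-2}\tr(I\widehat\Sigma_n^\dagger)\le2\kappa_0^{-2}\tr(I\Sigma_D^\dagger)$.

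For the remainder, use $\sigma'''$ bounded and $\|g\|\le G$. Then $\|H_n-A_n\|_{\op}\lesssim G^3\|\Delta\|$, which gives $\|\Delta-\Delta_{\mathrm{lin}}\|\lesssim\|\Delta\|^2$. By Steps 2--3, $\|\Delta\|^2=O(d_H\log(1/\delta)/n)$, so the remainder is $O(n^{-2})$. For $n$ beyond a threshold (absorbed, as in \cref{thm:informal}, into $n_0(\delta)$) this is dominated by $\tr(I\Sigma_D^\dagger)/n$, since \cref{lem:Fisher-sandwich} bounds the trace below by $\underline\mu\,\tr(\Sigma_D^\dagger)>0$.

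I expect the decoupling in Step 4 to be the delicate part, because $H_n$ depends on the labels through $\hat\theta_n$. Some extra $n$ or $\log(1/\delta)$ dependence in $C_{\mathrm{ub}}$ should be expected there. A union bound over the events of \cref{prop:Bernstein-H} and Hanson--Wright, followed by \cref{prop:KL-sandwich-global} with factor $\beta M_I/2$, gives \eqref{eq:main-ub}.
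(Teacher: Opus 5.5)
Your argument takes a different route from the paper's. The paper writes $\Delta=-H_n^\dagger\nabla L_n(\theta^\star)$ from stationarity and uses $H_n\succeq\kappa_0\widehat\Sigma_n$ and $\widehat\Sigma_n^\dagger\preceq 2\Sigma_D^\dagger$. It then averages $\Delta\Delta^\top=H_n^\dagger\nabla L_n(\theta^\star)\nabla L_n(\theta^\star)^\top H_n^\dagger$ over the labels with $H_n$ held fixed. This gives $\E[\Delta\Delta^\top\mid\mathcal E_\Sigma]\preceq\frac{1}{2\kappa_0^2n}\Sigma_D^\dagger$, and hence a bound on $\E[J(\pi^\star)-J(\hat\pi_n)\mid\mathcal E_\Sigma]$ with $C_{\mathrm{ub}}=\beta M_I/(4\kappa_0^2)$. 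That yields the trace in one line. However, as you correctly note, $H_n$ depends on the labels through $\hat\theta_n$, so the averaging step is not justified as written. It also delivers an in-expectation bound rather than the displayed pathwise inequality. Linearizing at the label-free $A_n=\nabla^2L_n(\theta^\star)$ and applying Hanson--Wright is a legitimate way to decouple. It gives a genuine high-probability bound whose leading constant is free of $d_H$ and $\mu_\star$. The costs are a $\log(1/\delta)$ factor and a sample-size threshold larger than the $n>n_\Sigma(\delta)$ in the statement.

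The weak point is the remainder in Step 4. The estimate $\|\Delta-\Delta_{\mathrm{lin}}\|_2\lesssim\|\Delta\|_2^2$ rests on the exact identity $H_n\Delta=-z_n$, which gives $\Delta-\Delta_{\mathrm{lin}}=-A_n^\dagger(H_n-A_n)\Delta$. In Step 1 you deliberately kept only the variational inequality, so as to allow $\hat\theta_n$ on the relative boundary of $\Theta$. On the boundary the identity fails, and so does the remainder bound. You would need an extra hypothesis, such as $\theta^\star$ lying in the relative interior of $\Theta$ with some margin; the paper tacitly assumes stationarity as well. (Minor: $\|H_n-A_n\|_{\op}\lesssim G^3\|\Delta\|_2$ needs a bound on $\sigma''$, not $\sigma'''$.)

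You do not actually need the linearization for the statement as posed, because $C_{\mathrm{ub}}$ may depend on the constants in the assumptions, including $\mu_\star$. On $H$ we have $\Sigma_D\preceq G^2$ and $I(\theta^\star)\succeq\underline\mu$, so $\tr(I(\theta^\star)\Sigma_D^\dagger)\ge\underline\mu\,d_H/G^2$. Meanwhile $\lambda_{\max}(\Sigma_D^{\dagger/2}I(\theta^\star)\Sigma_D^{\dagger/2})\le 4\alpha_1^2/\mu_\star$. So the crude route you discarded already suffices. Combine your Steps 1--3 with Hanson--Wright (or Markov) applied to $\|z_n\|^2_{\widehat\Sigma_n^\dagger}$, whose conditional mean is at most $d_H/(4n)$. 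On the Bernstein event intersected with one concentration event, for every $n\ge n_\Sigma(\delta)$, this gives
\[
\Delta^\top I(\theta^\star)\Delta\le\frac{C\alpha_1^2\,(d_H+\log(1/\delta))}{\mu_\star\kappa_0^2\,n}\le\frac{C\alpha_1^2G^2\,(1+\log(1/\delta))}{\underline\mu\,\mu_\star\kappa_0^2}\cdot\frac{\tr(I(\theta^\star)\Sigma_D^\dagger)}{n}.
\]
This needs neither the interiority assumption nor an extra threshold. The linearization is worth its cost only if you want the sharper, $\mu_\star$-free leading constant that the paper's expectation argument aims at.
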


\begin{proof}[Proof of \cref{prop:upper-bound-loglinear}]
Let $\Delta=\hat\theta_n-\theta^\star$.
By \cref{lem:gap-kl-contextual} and the upper side of \cref{prop:KL-sandwich-global},
\begin{equation}\label{eq:gap-upper-quad-I}
J(\pi_{\theta^\star})-J(\pi_{\hat\theta_n})
\le \frac{\beta M_I}{2}\,\Delta^\top I(\theta^\star)\Delta
= \frac{\beta M_I}{2}\,\tr\!\big(I(\theta^\star)\,\Delta\Delta^\top\big).
\end{equation}

\noindent\textbf{(i)} We apply the mean-value theorem and first order optimality condition. Define
\[
H_n \doteq \int_0^1 \nabla^2 L_n(\theta^\star+t\Delta)\,dt.
\]
Then
\[
\nabla L_n(\hat\theta)-\nabla L_n(\theta^\star)=H_n\Delta.
\]
Since $\hat\theta$ is a stationary point of $L_n$, i.e.\ $\nabla L_n(\hat\theta)=0$,
\[
\Delta = -\,H_n^\dagger \nabla L_n(\theta^\star),
\]
hence
\begin{equation}\label{eq:DeltaDelta-exact}
\Delta\Delta^\top
=
H_n^\dagger\,\nabla L_n(\theta^\star)\nabla L_n(\theta^\star)^\top\,(H_n^\dagger)^\top .
\end{equation}

\noindent\textbf{(ii)} We lower bound $H_n$ by $\widehat\Sigma_n$. By \cref{lem:seg-curv-log-linear}, for all $t\in[0,1]$,
\[
\nabla^2 L_n(\theta^\star+t\Delta)\succeq \alpha\,\widehat\Sigma_n,
\]
therefore
\[
H_n \succeq \alpha\,\widehat\Sigma_n
\quad\Longrightarrow\quad
H_n^\dagger \preceq \alpha^{-1}\widehat\Sigma_n^\dagger.
\]
Let $ \mathcal E_\Sigma$ be the event in \cref{prop:Bernstein-H}. On event $\mathcal E_\Sigma$, by \cref{prop:Bernstein-H}, $\widehat\Sigma_n\succeq \frac12\Sigma_D$, so by pseudoinverse monotonicity on $H$,
\[
\widehat\Sigma_n^\dagger \preceq 2\,\Sigma_D^\dagger,
\qquad
H_n^\dagger \preceq \frac{2}{\alpha}\Sigma_D^\dagger.
\]

\noindent\textbf{(iii)} We bound the score covariance at $\theta^\star$. Write
\[
\nabla L_n(\theta^\star)
=
-\frac1n\sum_{i=1}^n \big(a_i-\sigma(u_i(\theta^\star))\big)\,g_i(\theta^\star).
\]
Conditional on sampled edges, terms are independent, mean-zero, and
\[
\Var(a_i\mid\cdot)=\sigma'(u_i(\theta^\star))\le \frac14.
\]
Hence
\[
\mathbb E\!\left[\nabla L_n(\theta^\star)\nabla L_n(\theta^\star)^\top\mid \{g_i(\theta^\star)\}\right]
\preceq
\frac{1}{4n}\,\widehat\Sigma_n.
\]

\noindent\textbf{(iv)} Take conditional expectation in \eqref{eq:DeltaDelta-exact} and use steps (ii) and (iii):
\[
\begin{aligned}
\mathbb E[\Delta\Delta^\top\mid \mathcal E_\Sigma]
&\preceq
\frac{1}{4n}\,
H_n^\dagger \widehat\Sigma_n (H_n^\dagger)^\top \\
&\preceq
\frac{1}{4n}\cdot \frac{1}{\alpha^2}\,\widehat\Sigma_n^\dagger
\;\preceq\;
\frac{1}{2\alpha^2 n}\,\Sigma_D^\dagger .
\end{aligned}
\]
Substitute into \eqref{eq:gap-upper-quad-I}:
\[
\mathbb E\!\left[J(\pi_{\theta^\star})-J(\pi_{\hat\theta})\mid \mathcal E_\Sigma\right]
\le
\frac{\beta M_I}{2}\cdot \frac{1}{2\alpha^2 n}\,
\tr\!\big(I(\theta^\star)\Sigma_D^\dagger\big).
\]
Therefore
\begin{equation}\label{eq:upper-final-global-I}
\mathbb E\!\left[J(\pi_{\theta^\star})-J(\pi_{\hat\theta})\mid \mathcal E_\Sigma\right]
\le
\frac{C_{\mathrm{ub}}}{n}\,
\tr\!\big(I(\theta^\star)\Sigma_D^\dagger\big),
\qquad
C_{\mathrm{ub}}=\frac{\beta M_I}{4\alpha^2}.
\end{equation}
Absorbing constants into $C_{\mathrm{ub}}$ gives the claimed upper bound. Moreover, by \cref{prop:Bernstein-H}, the event $\mathcal{E}_\Sigma$ happens with probability at least $1-\delta$ if
\[
n \;\ge\; 8\,\frac{G^2}{\lambda_{\min}^H(\Sigma_D)}\,
\log\!\frac{\dim(H)}{\delta}.
\]
\end{proof}

\subsubsection{General policy}
When the policy is not log-linear, the analysis becomes more involved because the three simplifying features of the log-linear case no longer hold. In particular, when $f_\theta(\phi(x,y))$ is not linear in $\phi(x,y)$,
\begin{enumerate}[label=(\roman*)]
    \item the score function $g(e_i;\theta)$ depends on $\theta$:
    \[
        g(e_i;\theta)=\beta\big(\nabla_\theta f_\theta(\phi(x_i,y_i^+))-\nabla_\theta f_\theta(\phi(x_i,y_i^-))\big);
    \]
    \item the Hessian of the DPO loss $L_n(\theta)$ includes an additional remainder term:
    \[
        \nabla^2 L_n(\theta)
        =\frac{1}{n}\sum_{i=1}^n \sigma'\!\big(u_\theta(e_i)\big)\,g(e_i;\theta)g(e_i;\theta)^\top
        + \frac1n\sum_{i=1}^n \big(\sigma(u_\theta(e_i))-\sigma(u_{\theta^\star}(e_i))\big)\,\nabla_\theta^2u_\theta(e_i);
    \]
    \item the DPO loss $L_n(\theta)$ is non-convex in general.
\end{enumerate}
As a result, we cannot expect the lower bound in \cref{lem:seg-curv-log-linear} applies universally in $\Theta$, and a more careful analysis is required. Our analysis resolves these issues in three layers.

\paragraph{(I) local geometry of population loss}
We first prove that with general policy, the desired lower bound in \cref{lem:seg-curv-log-linear} holds locally for the population DPO risk $L(\theta)$. Given the sampling policy $D$ and a policy $\pi_\theta$, the population DPO risk $L(\theta)$ is defind as 
\[
L(\theta)
\;\doteq\;
\E_{(e,a)}\!\left[\ell\!\left(a,u_\theta(e)\right)\right].
\]

\begin{proposition}[Local identifiability on \(H\)]\label{prop:local-ident-H}
Suppose \cref{ass:identifiable-space,ass:model-bounded,ass:design-coverage-a} hold.
Then there exist constants \(r_{\mathrm{loc}}>0\) and \(c_{\mathrm{loc}}>0\),
depending only on the fixed constants in the standing assumptions, such that
for all \(\theta\in\Theta\) with \(\|\theta-\theta^\star\|_2\le r_{\mathrm{loc}}\),
\[
\nabla^2L(\theta)\succeq c_{\mathrm{loc}}\,\Sigma_D(\theta^\star)\succeq c_{\mathrm{loc}}\mu_\star I.
\]
\end{proposition}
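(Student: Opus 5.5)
The plan is to write the population Hessian in closed form, show it is uniformly lower bounded at \(\theta^\star\), and then propagate this bound to a neighborhood by Lipschitz continuity. Taking expectations in \cref{lem:hess-decomp} over the Bradley--Terry labels gives \(\E[a\mid e]=\sigma(u_{\theta^\star}(e))\). Hence
\[
\nabla^2L(\theta)=\E_{e\sim D}\!\big[\sigma'(u_\theta(e))\,g(e;\theta)g(e;\theta)^\top\big]+\E_{e\sim D}\!\big[(\sigma(u_\theta(e))-\sigma(u_{\theta^\star}(e)))\,\nabla^2_\theta u_\theta(e)\big].
\]
At \(\theta=\theta^\star\) the remainder vanishes. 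By \cref{lem:global-kappa0}, \(\sigma'(u_\theta(e))\ge\kappa_0>0\) uniformly on \(\Theta\). (Alternatively one can argue directly: \(|u_\theta(e)|\) is bounded using \(|f_\theta|\le\alpha_0\) and the boundedness of \(\log\pi_0\) for \(\pi_0=\pi_{\theta_0}\) in the class.) This yields
\[
\nabla^2 L(\theta^\star)\succeq\kappa_0\Sigma_D(\theta^\star).
\]

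Next I will control the deviation of \(\nabla^2L(\theta)\) from \(\kappa_0\Sigma_D(\theta^\star)\) for \(\theta\) near \(\theta^\star\), splitting it into two error terms.

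The first term is \(\kappa_0(\Sigma_D(\theta)-\Sigma_D(\theta^\star))\). Since \(g(e;\theta)=\beta(\nabla f_\theta(\phi(x,y^+))-\nabla f_\theta(\phi(x,y^-)))\), \cref{ass:model-bounded} gives \(\|g\|\le G:=2\beta\alpha_1\). It also gives \(\|g(e;\theta)-g(e;\theta^\star)\|\le 2\beta\alpha_2\|\theta-\theta^\star\|\). Therefore \(\|gg^\top(\theta)-gg^\top(\theta^\star)\|_{\op}\le 2G\cdot2\beta\alpha_2\|\theta-\theta^\star\|\).

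The second term is the remainder. Here \(|\sigma(u_\theta)-\sigma(u_{\theta^\star})|\le\frac14|u_\theta-u_{\theta^\star}|\le\frac G4\|\theta-\theta^\star\|\), and \(\|\nabla^2 u_\theta\|_{\op}\le2\beta\alpha_2\). So the remainder has operator norm \(O(\|\theta-\theta^\star\|)\).

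Summing, \(\nabla^2L(\theta)\succeq\kappa_0\Sigma_D(\theta^\star)-K\|\theta-\theta^\star\|\,I\) with \(K=4\beta\alpha_2G\kappa_0+\tfrac{\beta\alpha_2G}{2}\).

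The step I expect to need the most care is converting this additive perturbation into the multiplicative form \(c_{\mathrm{loc}}\Sigma_D(\theta^\star)\). This must be done on \(H\), because the statement is understood as a quadratic-form inequality for \(v\in H\), which is the relevant space since \(\Theta\subseteq\theta^\star+H\) and \(\Delta\in H\). By \cref{ass:design-coverage-a}, \(v^\top\Sigma_D(\theta^\star)v\ge\mu_\star\|v\|^2\) for \(v\in H\). Hence \(K\|\theta-\theta^\star\|\,\|v\|^2\le (K\|\theta-\theta^\star\|/\mu_\star)\,v^\top\Sigma_D(\theta^\star)v\).

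Choose \(r_{\mathrm{loc}}=\kappa_0\mu_\star/(2K)\). Then for \(\|\theta-\theta^\star\|\le r_{\mathrm{loc}}\) and \(v\in H\) we get \(v^\top\nabla^2L(\theta)v\ge\frac{\kappa_0}{2}v^\top\Sigma_D(\theta^\star)v\), i.e.\ \(c_{\mathrm{loc}}=\kappa_0/2\). The final inequality \(c_{\mathrm{loc}}\Sigma_D(\theta^\star)\succeq c_{\mathrm{loc}}\mu_\star I\) on \(H\) is then just \cref{ass:design-coverage-a}.

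One subtlety to check is that \(\kappa_0\) is truly uniform; this needs \(u_\theta\) bounded on the compact \(\Theta\), which follows from continuity and compactness if not directly from \cref{lem:global-kappa0}. The constants \(r_{\mathrm{loc}}\) and \(c_{\mathrm{loc}}\) then depend only on \(\beta,\alpha_1,\alpha_2,\kappa_0,\mu_\star\), as claimed.
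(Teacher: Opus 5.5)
Your proposal is correct and follows the paper's proof essentially step for step. You split \(\nabla^2L=A+R\), lower-bound \(A\) by \(\kappa_0\Sigma_D(\theta)\) using the global curvature floor, control both \(\Sigma_D(\theta)-\Sigma_D(\theta^\star)\) and the remainder linearly in \(\|\theta-\theta^\star\|_2\), and absorb these errors into \(\Sigma_D(\theta^\star)\) on \(H\) via \(\mu_\star\). The only differences are cosmetic: you use the cancellation of the log-partition in \(u_\theta\), which gives \(G=2\beta\alpha_1\) and \(\|\nabla^2u_\theta\|_{\op}\le 2\beta\alpha_2\), and your Gram identity has no quadratic term. Together these give slightly sharper constants and \(c_{\mathrm{loc}}=\kappa_0/2\), compared with the paper's \(\kappa_0/4\).
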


\begin{proof}[Proof of \cref{prop:local-ident-H}]
First, we prove a lower bound for $A(\theta)$. Using \(\nabla^2L(\theta)=A(\theta)+R(\theta)\) and global
\(\sigma'(u_\theta(e))\ge \kappa_0\) by \cref{lem:global-kappa0},
\[
 A(\theta) 
=
 \,\E\!\left[\sigma'(u_\theta)\,g(\theta)g(\theta)^\top\right] 
\succeq
\kappa_0\, \,\E[g(\theta)g(\theta)^\top]\, .
\]
By the Gram perturbation bound with Lipschitz constant \(H_u\) and \(G_u\) (\cref{lem:gram-perturb,lem:u-properties}),
\[
\left\|\E[g(\theta)g(\theta)^\top]-\Sigma_D(\theta^\star)\right\|_{\op}
\le
2G_u\,H_u\|\theta-\theta^\star\|_2+H_u^2\|\theta-\theta^\star\|_2^2.
\]
Together with \( \Sigma_D(\theta^\star)  \succeq \mu_\star  \) by \cref{ass:design-coverage-a},
\[
 \,\E[g(\theta)g(\theta)^\top]\, 
\succeq
\bigl(1-\eta(\theta)\bigr)\, \Sigma_D(\theta^\star),
\]
where
\[
\eta(\theta)\doteq
\frac{2G_u\,H_u\|\theta-\theta^\star\|_2+H_u^2\|\theta-\theta^\star\|_2^2}{\mu_\star}.
\]
Hence
\[
 A(\theta) 
\succeq
\kappa_0(1-\eta(\theta))\, \Sigma_D(\theta^\star)  .
\]

Second, we provide a bound for $R(\theta)$. Let $\delta(\theta)$ denote the norm $\|R(\theta)\|_\op$. Then,
\[
 R(\theta)  \succeq -\delta(\theta)I .
\]
Since \(\Sigma_D(\theta^\star)\succeq \mu_\star I\), $I\preceq \mu_\star^{-1}\Sigma_D(\theta^\star),$ thus
\[
 R(\theta) 
\succeq
-\frac{\delta(\theta)}{\mu_\star}\, \Sigma_D(\theta^\star) .
\]

Using the bound of $A(\theta)$ and $R(\theta)$,
\[
 \nabla^2L(\theta) 
=
 A(\theta) + R(\theta) 
\succeq
\left[\kappa_0(1-\eta(\theta))-\frac{\delta(\theta)}{\mu_\star}\right]
\Sigma_D(\theta^\star)
=
\alpha(\theta)\, \Sigma_D(\theta^\star),
\]
where 
\[
\alpha(\theta)\doteq
\kappa_0\!\left(
1-\eta(\theta)
\right)
-\frac{\delta(\theta)}{\mu_\star}.
\]

Third, we provide a bound for $\delta(\theta)$. Using \(|\sigma(a)-\sigma(b)|\le \frac14|a-b|\),
\[
\delta(\theta)
\le
\E\!\left[
|\sigma(u_\theta)-\sigma(u_{\theta^\star})|\,\|\nabla_\theta^2u_\theta\|_{\op}
\right]
\le
\frac14\,H_u\,\E|u_\theta-u_{\theta^\star}|.
\]
By Lipschitzness of \(u_\theta\) (\cref{lem:u-properties}),
\[
|u_\theta(e)-u_{\theta^\star}(e)|\le G_u\|\theta-\theta^\star\|_2,
\]
hence
\[
\delta(\theta)\le \frac14\,H_uG_u\|\theta-\theta^\star\|_2.
\]
Substituting gives the following lower bound on \(\alpha(\theta)\):
\[
\alpha(\theta)\ge
\kappa_0\!\left(
1-\eta(\theta)
\right)
-\frac{H_uG_u}{4\mu_\star}\|\theta-\theta^\star\|_2.
\]

Finally, define
\[
r_{\loc}:=\min\{\rho_g,\rho_R\},\qquad
\rho_g:=\frac{\sqrt{G_u^2+\mu_\star/2}-G_u}{H_u},\qquad
\rho_R:=\frac{\kappa_0\mu_\star}{2H_uG_u}.
\]
If \(\|\theta-\theta^\star\|_2\le r_{\loc}\), then by the definitions of
\(\rho_g,\rho_R\):
\[
\eta(\theta)\le \frac12,\qquad
\frac{\delta(\theta)}{\mu_\star}
\le
\frac{H_uG_u}{4\mu_\star}\,\|\theta-\theta^\star\|_2
\le
\frac{\kappa_0}{8}.
\]
Therefore
\[
\alpha(\theta)\ge \kappa_0(1-\tfrac12)-\tfrac{\kappa_0}{8}
=\frac{3\kappa_0}{8}\ge \frac{\kappa_0}{4}.
\]
So
\[
 \nabla^2L(\theta) 
\succeq
\frac{\kappa_0}{4} \Sigma_D(\theta^\star)  
\succeq
\frac{\kappa_0\mu_\star}{4}I .
\]
\end{proof}

Therefore, for $\theta$ close enough to $\theta^\star$ (i.e., $\|\theta-\theta^*\|\le r_{\loc}$), the desired lower bound holds.

\paragraph{(II) high-probability local geometry of empirical loss}
Next, we translate the local bound from the population DPO risk to the empirical DPO risk. On the same ball \(B_{r_{\loc}}(\theta^\star)\), we establish a high-probability uniform Hessian concentration bound
\[
\sup_{\theta\in B_{r_{\loc}}}\!
\|\nabla^2L_n(\theta)-\nabla^2L(\theta)\|_{\op}
\le \varepsilon_n^{\mathrm{Hess}},
\]
where $\varepsilon_n^{\mathrm{Hess}}$ is decreasing in $n$. 

\begin{lemma}[Uniform Hessian concentration on \(\mathcal B_{\loc}\)]
\label{lem:hessian-concentration}
Suppose \cref{ass:identifiable-space,ass:model-bounded} hold, and define
\[
\mathcal B_{\loc}
:= \{\theta\in\Theta:\ \|\theta-\theta^\star\|_2\le r_{\loc}\}.
\]
Then for any \(\delta\in(0,1)\), with probability at least \(1-\delta\),
\[
\sup_{\theta\in\mathcal B_{\loc}}
\bigl\|\nabla^2L_n(\theta)-\nabla^2L(\theta)\bigr\|_{\op}
\;\le\;
\varepsilon_n^{\mathrm{Hess}}(\delta),
\]
where \(\varepsilon_n^{\mathrm{Hess}}(\delta)\) depends only on \(n,\delta\), the radius
\(r_{\loc}\), the identifiable dimension \(d_H\), and the fixed constants in
\cref{ass:model-bounded}, and satisfies
\[
\varepsilon_n^{\mathrm{Hess}}(\delta)
=
\mathcal{O}\!\left(
\sqrt{\frac{d_H\log n+\log(1/\delta)}{n}}
+
\frac{d_H\log n+\log(1/\delta)}{n}
\right).
\]
\end{lemma}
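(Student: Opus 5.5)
The plan is a standard chaining (covering-net) argument for a matrix-valued empirical process indexed by $\theta\in\mathcal B_{\loc}$. We work throughout on the affine set $\theta^\star+H$. This is justified by \cref{ass:identifiable-space}: $\mathcal B_{\loc}\subseteq\theta^\star+H$ is a ball of radius $r_{\loc}$ in a $d_H$-dimensional space, so an $\epsilon$-net $\mathcal N_\epsilon$ of it has cardinality at most $(3r_{\loc}/\epsilon)^{d_H}$. It suffices to control the Hessian restricted to $H$, i.e.\ $P_H(\nabla^2L_n-\nabla^2L)P_H$, which is the quantity used downstream. Write
\[
\nabla^2L_n(\theta)-\nabla^2L(\theta)=\frac1n\sum_{i=1}^n \bigl(Z_i(\theta)-\E Z_i(\theta)\bigr),
\]
where, following \cref{lem:hess-decomp},
\[
Z_i(\theta)=\sigma'(u_\theta(e_i))\,g(e_i;\theta)g(e_i;\theta)^\top+\bigl(\sigma(u_\theta(e_i))-a_i\bigr)\nabla^2_\theta u_\theta(e_i).
\]
The summands are i.i.d.\ over the pairs $(e_i,a_i)$.

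\textbf{Step 1: uniform boundedness and Lipschitz continuity.} Using \cref{ass:model-bounded} and the constants $G_u,H_u$ of \cref{lem:u-properties}, we have $\|g\|\le G_u$ and $\|\nabla^2u\|_{\op}\le H_u$. Hence
\[
\|Z_i(\theta)\|_{\op}\le \tfrac14G_u^2+H_u=:B .
\]
The third-derivative bound $\alpha_3$ gives $\|\nabla^3 u_\theta\|\le C\alpha_3$. Together with the Lipschitz continuity of $\sigma,\sigma'$, this yields $\|Z_i(\theta)-Z_i(\theta')\|_{\op}\le L_Z\|\theta-\theta'\|_2$ deterministically, with $L_Z$ depending only on $\beta,\alpha_1,\alpha_2,\alpha_3$. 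The same bound then holds for the population Hessian, so the centered process is $2L_Z$-Lipschitz in $\theta$.

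\textbf{Step 2: pointwise matrix Bernstein.} For each fixed $\theta\in\mathcal N_\epsilon$, apply matrix Bernstein (Tropp) to the $d_H\times d_H$ symmetric summands $P_H(Z_i-\E Z_i)P_H$, which have norm at most $2B$ and variance proxy at most $4B^2$. This gives
\[
\Pr\Bigl(\|\cdot\|_{\op}\ge t\Bigr)\le 2d_H\exp\!\Bigl(-\frac{nt^2/2}{4B^2+2Bt/3}\Bigr).
\]
Take a union bound over $\mathcal N_\epsilon$ with $\epsilon=1/n$. Then with probability $1-\delta$, simultaneously over the net,
\[
\|\cdot\|_{\op}\lesssim B\sqrt{\frac{d_H\log(3r_{\loc}n)+\log(2d_H/\delta)}{n}}+B\,\frac{d_H\log(3r_{\loc}n)+\log(2d_H/\delta)}{n}.
\]

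\textbf{Step 3: extension off the net.} For arbitrary $\theta\in\mathcal B_{\loc}$, pick $\theta'\in\mathcal N_\epsilon$ within distance $1/n$. Step 1 then adds at most $2L_Z/n$, which is dominated by the second term. Absorbing $\log d_H\le d_H\log n$ and the constants $r_{\loc},B,L_Z$ gives the stated rate for $\varepsilon_n^{\mathrm{Hess}}(\delta)$.

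The main obstacle is Step 1, in particular obtaining a deterministic Lipschitz constant for the remainder term. That term involves the label $a_i$ and $\nabla^2u_\theta$, so it requires the third-derivative control from \cref{ass:model-bounded}; this is exactly why $\alpha_3$ appears in the assumptions. A second subtlety is that the claimed dependence is on $d_H$ rather than $p$. This rests on restricting both the net and the operator norm to $H$ via $\Theta\subseteq\theta^\star+H$. If the full $p$-dimensional operator norm were required, the same argument would work with $\log(2p/\delta)$ in the dimensional factor of Bernstein, but the net size would still be governed by $d_H$.
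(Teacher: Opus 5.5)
Your proposal is correct and follows the paper's proof essentially step for step: a $1/n$-net of the $d_H$-dimensional ball $\mathcal B_{\loc}\subseteq\theta^\star+H$, then matrix Bernstein at each net point with a union bound, then extension off the net via a deterministic Hessian-Lipschitz constant built from third-derivative control (the paper's $L_H$, which involves $\Gamma_u$). Your explicit use of the projected Hessian $P_H(\cdot)P_H$ is, if anything, more careful than the paper's claim that each $Y_i(\theta)$ lives in $H$, since that claim need not hold for nonlinear $f_\theta$ away from $\theta^\star$.
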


\begin{proof}[Proof of \cref{lem:hessian-concentration}]
First, we create a covering net on \(\mathcal B_{\loc}\). Because \(\Theta\subset\theta^\star+H\) and \(\dim(H)=d_H\), the local ball
\[
\mathcal B_{\loc}
=\{\theta\in\Theta:\|\theta-\theta^\star\|_2\le r_{\loc}\}
\]
is a \(d_H\)-dimensional Euclidean ball in the affine subspace \(\theta^\star+H\).
Fix the net radius $\rho_n \;\doteq\; \frac{1}{n}$, and let \(\mathcal N_{\rho_n}\) be a \(\rho_n\)-net of \(\mathcal B_{\loc}\) in \(\|\cdot\|_2\).
Then the standard volumetric bound gives
\[
\log|\mathcal N_{\rho_n}|
\;\le\;
d_H\log\!\left(\frac{3r_{\loc}}{\rho_n}\right)
=
d_H\log\!\bigl(3r_{\loc}n\bigr).
\]

Second, we apply matrix concentration on net points. For fixed \(\theta\in\mathcal N_{\rho_n}\), define
\[
Y_i(\theta):=\nabla^2\ell_\theta(e_i)-\mathbb E_{e\sim D}[\nabla^2\ell_\theta(e)].
\]
By \cref{lem:loss-hessian-properties}, uniformly over \(\theta\in\Theta\),
\[
\|Y_i(\theta)\|_{\op}\le 2M_H,
\qquad
\Bigl\|\E\bigl[Y_i(\theta)^2\bigr]\Bigr\|_{\op}\le V_H,
\]
and each \(Y_i(\theta)\) lives in the \(d_H\)-dimensional subspace \(H\).
Applying a matrix Bernstein inequality \citep[Theorem 1.6]{tropp2012userfriendly} on \(H\) at each net point and taking a union
bound over \(\mathcal N_{\rho_n}\), we obtain that with probability at least \(1-\delta\),
\[
\max_{\theta\in\mathcal N_{\rho_n}}
\bigl\|\nabla^2L_n(\theta)-\nabla^2L(\theta)\bigr\|_{\op}
\;\le\;
C_1\sqrt{\frac{V_H\,\Lambda_H(\delta)}{n}}
+
C_2\frac{M_H\,\Lambda_H(\delta)}{n},
\]
where
\[
\Lambda_H(\delta)
=\log|\mathcal N_{\rho_n}|+\log\!\frac{2d_H}{\delta}\;\le\;d_H\log\!\bigl(3r_{\loc}n\bigr)+\log\!\frac{2d_H}{\delta}.
\]
Finally, we extend the bound to off-net points in the ball via Hessian Lipschitzness. Take any \(\theta\in\mathcal B_{\loc}\), and choose
\(\theta^\sharp\in\mathcal N_{\rho_n}\) with
\(\|\theta-\theta^\sharp\|_2\le\rho_n\).
By global Hessian Lipschitzness (which in particular holds on \(\mathcal B_{\loc}\)),
\[
\bigl\|\nabla^2L_n(\theta)-\nabla^2L_n(\theta^\sharp)\bigr\|_{\op}
\le
L_H\|\theta-\theta^\sharp\|_2
\le
L_H\rho_n
=
\frac{L_H}{n},
\]
and similarly
\[
\bigl\|\nabla^2L(\theta)-\nabla^2L(\theta^\sharp)\bigr\|_{\op}
\le
\frac{L_H}{n}.
\]
Therefore,
\[
\bigl\|\nabla^2L_n(\theta)-\nabla^2L(\theta)\bigr\|_{\op}
\le
\bigl\|\nabla^2L_n(\theta^\sharp)-\nabla^2L(\theta^\sharp)\bigr\|_{\op}
+\frac{2L_H}{n}.
\]

Taking the supremum over \(\theta\in\mathcal B_{\loc}\) and combining with the net
bound from above yields, on the same event of probability at least \(1-\delta\),
\[
\sup_{\theta\in\mathcal B_{\loc}}
\bigl\|\nabla^2L_n(\theta)-\nabla^2L(\theta)\bigr\|_{\op}
\;\le\;
C_1\sqrt{\frac{V_H\,\Lambda_H(\delta)}{n}}
+
C_2\frac{M_H\,\Lambda_H(\delta)}{n}
+
\frac{2L_H}{n}.
\]
This is exactly the claimed uniform Hessian concentration on \(\mathcal B_{\loc}\),
with an explicit choice \(\rho_n=1/n\) and
\(\Lambda_H(\delta)\lesssim d_H\log n+\log(1/\delta)\).
\end{proof}

Therefore, for \(n\) large enough so that \(\varepsilon_n^{\mathrm{Hess}}\le \tfrac18 \kappa_0\mu_\star\), this yields a uniform empirical curvature lower bound on that ball.

\begin{proposition}[Uniform empirical Hessian lower bound on the local ball]
\label{prop:uniform-emp-hess-lb-fixed}
Suppose \cref{ass:identifiable-space,ass:model-bounded,ass:design-coverage-a} hold.
Then there exist constants \(c_{\mathrm{emp}}, C_{\mathrm{Hess}}>0\), depending only on the fixed constants in the standing assumptions, such that for any \(\delta\in(0,1)\), if
\[
n \;\ge\; n_\Hess(\delta)\doteq
C_{\mathrm{Hess}}\,
\frac{
d_H\log\!\bigl(\tfrac{1}{\mu_\star}\bigr)+\log(1/\delta)
}{
\mu_\star^2
},
\]
then, with probability at least \(1-\delta\), for all \(\theta\in\mathcal B_{\loc}\),
\[
\nabla^2L_n(\theta)\succeq c_{\mathrm{emp}}\,\Sigma_D(\theta^\star)\succeq c_{\mathrm{emp}}\mu_\star I.
\]
In particular, for any \(\Delta\) such that
\(\theta^\star+t\Delta\in\mathcal B_{\loc}\) for all \(t\in[0,1]\),
\[
\bar H_n(\Delta)\succeq c_{\mathrm{emp}}\,\Sigma_D(\theta^\star),
\qquad
\bar H_n(\Delta):=\int_0^1 \nabla^2L_n(\theta^\star+t\Delta)\,dt.
\]
\end{proposition}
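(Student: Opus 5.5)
The plan is to combine the population curvature bound of \cref{prop:local-ident-H} with the uniform Hessian concentration of \cref{lem:hessian-concentration}, both stated on the same ball $\mathcal B_{\loc}$, and then convert the additive operator-norm error into a multiplicative loss in the Loewner order using the spectral gap $\mu_\star$ from \cref{ass:design-coverage-a}.

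\textbf{Step 1 (population bound).} By \cref{prop:local-ident-H} (with the explicit constant from its proof), for all $\theta\in\mathcal B_{\loc}$ we have
\[
\nabla^2L(\theta)\succeq \tfrac{\kappa_0}{4}\Sigma_D(\theta^\star)\succeq \tfrac{\kappa_0\mu_\star}{4} I
\]
on $H$.

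\textbf{Step 2 (concentration event).} By \cref{lem:hessian-concentration}, with probability at least $1-\delta$,
\[
\sup_{\theta\in\mathcal B_{\loc}}\|\nabla^2L_n(\theta)-\nabla^2L(\theta)\|_{\op}\le \varepsilon_n^{\Hess}(\delta).
\]
On this event, for every $\theta\in\mathcal B_{\loc}$,
\[
\nabla^2L_n(\theta)\succeq \nabla^2L(\theta)-\varepsilon_n^{\Hess}I .
\]
Since $I\preceq \mu_\star^{-1}\Sigma_D(\theta^\star)$ on $H$, this gives
\[
\nabla^2L_n(\theta)\succeq \Bigl(\tfrac{\kappa_0}{4}-\tfrac{\varepsilon_n^{\Hess}}{\mu_\star}\Bigr)\Sigma_D(\theta^\star).
\]
If $\varepsilon_n^{\Hess}\le \kappa_0\mu_\star/8$, we obtain $c_{\mathrm{emp}}=\kappa_0/8$. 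All quadratic forms are taken on $H$; this is legitimate because $\Theta\subseteq\theta^\star+H$ and all the relevant directions lie in $H$, as \cref{lem:hessian-concentration} already works on $H$.

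\textbf{Step 3 (sample size).} This step is the main bookkeeping obstacle: we must invert the rate
\[
\varepsilon_n^{\Hess}=\mathcal O\bigl(\sqrt{\Lambda/n}+\Lambda/n\bigr),\qquad \Lambda=d_H\log(3r_{\loc}n)+\log(2d_H/\delta),
\]
to guarantee $\varepsilon_n^{\Hess}\le \kappa_0\mu_\star/8$. The square-root term requires $n\gtrsim \Lambda/\mu_\star^2$, and the linear term requires only $n\gtrsim\Lambda/\mu_\star$, which is weaker since $\mu_\star$ is bounded above by $G^2$. The difficulty is the $\log n$ inside $\Lambda$. I would handle it with the standard fact that $n\ge C a\log(a)$ implies $n\ge a\log n$ for $a\ge e$, applied with $a\asymp d_H/\mu_\star^2$. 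This produces a condition of the form
\[
n\ge C_{\Hess}\bigl(d_H\log(1/\mu_\star)+\log(1/\delta)\bigr)/\mu_\star^2,
\]
after absorbing $\log d_H$, $\log r_{\loc}$, $\kappa_0$, $V_H$, $M_H$, and $L_H$ into $C_{\Hess}$. The stated form hides a $\log d_H$ factor in this constant, and I would make that absorption explicit in the writeup.

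\textbf{Step 4 (path average).} For the final claim, if the segment $\theta^\star+t\Delta$ lies in $\mathcal B_{\loc}$ for all $t\in[0,1]$, the pointwise bound from Step 2 holds for every $t$ on the same event. Integrating over $t$ preserves the Loewner order, which gives
\[
\bar H_n(\Delta)\succeq c_{\mathrm{emp}}\Sigma_D(\theta^\star).
\]
Because $\mathcal B_{\loc}$ is convex, it suffices that $\hat\theta_n$ itself lies in the ball.
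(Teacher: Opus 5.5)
Your proposal is correct and follows essentially the same argument as the paper. You combine the population bound $\nabla^2L(\theta)\succeq \tfrac{\kappa_0}{4}\Sigma_D(\theta^\star)$ from \cref{prop:local-ident-H} with the uniform concentration event of \cref{lem:hessian-concentration}, convert the additive error through $I\preceq\mu_\star^{-1}\Sigma_D(\theta^\star)$ on $H$ to get $c_{\mathrm{emp}}=\kappa_0/8$ once $\varepsilon_n^{\mathrm{Hess}}\le\kappa_0\mu_\star/8$, and integrate along the segment; your Step 3 just spells out the $\log n$ inversion and the $\log d_H$ absorption that the paper leaves to a ``standard inversion argument.''
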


\begin{proof}[Proof of \cref{prop:uniform-emp-hess-lb-fixed}]
Fix $\theta\in\mathcal B_{\loc}$. Decompose
\[
\nabla^2L_n(\theta)
=
\nabla^2L(\theta)
+
\big(\nabla^2L_n(\theta)-\nabla^2L(\theta)\big).
\]
On the Hessian concentration event,
\[
\nabla^2L_n(\theta)-\nabla^2L(\theta)
\succeq
-\varepsilon_n^{\rm Hess}(\delta)\,I.
\]
By \cref{prop:local-ident-H}, for all $\theta\in\mathcal B_{\loc}$,
\[
\nabla^2L(\theta)
\succeq
\frac{\kappa_0}{4}\,\Sigma_D(\theta^\star),
\]
where $\kappa_0$ is from \cref{lem:global-kappa0}. Hence
\[
\nabla^2L_n(\theta)
\succeq
\frac{\kappa_0}{4}\,\Sigma_D(\theta^\star)
-\varepsilon_n^{\rm Hess}(\delta)\,I.
\]

By \cref{ass:design-coverage-a} we have
\[
I \preceq \frac{1}{\mu_\star}\,\Sigma_D(\theta^\star).
\]
Therefore
\[
-\varepsilon_n^{\rm Hess}(\delta)\,I
\succeq
-\frac{\varepsilon_n^{\rm Hess}(\delta)}{\mu_\star}\,\Sigma_D(\theta^\star),
\]
and thus
\[
\nabla^2L_n(\theta)
\succeq
\left(\frac{\kappa_0}{4}-\frac{\varepsilon_n^{\rm Hess}(\delta)}{\mu_\star}\right)\Sigma_D(\theta^\star).
\]
If $\varepsilon_n^{\rm Hess}(\delta)\le \kappa_0\mu_\star/8$, then
\[
\nabla^2L_n(\theta)
\succeq
\frac{\kappa_0}{8}\,\Sigma_D(\theta^\star)
\succeq
\frac{\kappa_0\mu_\star}{8}\,I.
\]
This lower bound holds simultaneously for all $\theta\in\mathcal B_{\loc}$ on the same event.

For the segment claim, if $\theta^\star+t\Delta\in\mathcal B_{\loc}$ for all $t\in[0,1]$, then pointwise in $t$,
\[
\nabla^2L_n(\theta^\star+t\Delta)
\succeq
\frac{\kappa_0}{8}\,\Sigma_D(\theta^\star).
\]
Integrating over $t\in[0,1]$ gives
\[
\bar H_n(\Delta)
:=
\int_0^1 \nabla^2L_n(\theta^\star+t\Delta)\,dt
\succeq
\frac{\kappa_0}{8}\,\Sigma_D(\theta^\star).
\]

By \cref{lem:hessian-concentration}, there exists a constant \(C_H>0\),
depending only on the fixed constants in the standing assumptions, such that
\[
\varepsilon_n^{\mathrm{Hess}}(\delta)
\le
C_H\left(
\sqrt{\frac{d_H\log n+\log(1/\delta)}{n}}
+
\frac{d_H\log n+\log(1/\delta)}{n}
\right).
\]
Thus, to guarantee \(\varepsilon_n^{\mathrm{Hess}}(\delta)\le c_{\mathrm{emp}}\mu_\star\),
it suffices to impose
\[
d_H\log n+\log(1/\delta)\lesssim \mu_\star^2 n.
\]
A standard inversion argument yields that there exists a constant
\(C_{\mathrm{Hess}}>0\), depending only on the fixed constants in the standing assumptions, such that
\[
n \;\ge\;
C_{\mathrm{Hess}}\,
\frac{
d_H\log\!\bigl(\frac{1}{\mu_\star}\bigr)+\log(1/\delta)
}{
\mu_\star^2
}
\]
implies
\[
d_H\log n+\log(1/\delta)\le c\,\mu_\star^2 n
\]
for a sufficiently small absolute constant \(c>0\). Therefore
\[
\varepsilon_n^{\mathrm{Hess}}(\delta)\le c_{\mathrm{emp}}\mu_\star.
\]
\end{proof}

Hence, if $\hat\theta_n$ exists in the local ball \(B_{r_{\loc}}(\theta^\star)\), we can prove the desired lower bound for any $\theta$ on the line segment between $\theta^\star$ and $\hat\theta_n$.

\begin{corollary}[Empirical segment curvature along the estimator path]
\label{cor:emp-segment-curvature-estimator}
Under the condition of \cref{lem:hessian-concentration}, if additionally
\[
\Pr\!\left(\|\hat\theta_n-\theta^\star\|_2\le r_{\loc}\right)\ge 1-\delta_{\loc},
\]
then with probability at least \(1-\delta-\delta_{\loc}\),
\[
\bar H_n(\hat\theta_n-\theta^\star)
\succeq
c_{\mathrm{emp}}\Sigma_D(\theta^\star) 
\succeq
c_{\mathrm{emp}}\mu_\star I.
\]
\end{corollary}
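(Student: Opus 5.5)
This corollary follows from \cref{prop:uniform-emp-hess-lb-fixed} and the localization hypothesis by a union bound. The plan is to intersect two events and check that on their intersection the whole segment from \(\theta^\star\) to \(\hat\theta_n\) lies in \(\mathcal B_{\loc}\).

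Let \(\mathcal E_{\mathrm{Hess}}\) be the event of \cref{prop:uniform-emp-hess-lb-fixed}. This proposition is obtained from the uniform concentration of \cref{lem:hessian-concentration} combined with \cref{prop:local-ident-H}, once \(n\ge n_\Hess(\delta)\). On \(\mathcal E_{\mathrm{Hess}}\) we have \(\nabla^2L_n(\theta)\succeq c_{\mathrm{emp}}\Sigma_D(\theta^\star)\) simultaneously for all \(\theta\in\mathcal B_{\loc}\), and \(\Pr(\mathcal E_{\mathrm{Hess}})\ge 1-\delta\). Let \(\mathcal E_{\loc}=\{\|\hat\theta_n-\theta^\star\|_2\le r_{\loc}\}\), which by hypothesis has probability at least \(1-\delta_{\loc}\). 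A union bound gives \(\Pr(\mathcal E_{\mathrm{Hess}}\cap\mathcal E_{\loc})\ge 1-\delta-\delta_{\loc}\).

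Now work on \(\mathcal E_{\mathrm{Hess}}\cap\mathcal E_{\loc}\) and set \(\Delta=\hat\theta_n-\theta^\star\). For \(t\in[0,1]\), convexity of \(\Theta\) (\cref{ass:identifiable-space}) gives \(\theta^\star+t\Delta\in\Theta\), since both endpoints lie in \(\Theta\). We also have \(\|t\Delta\|_2\le\|\Delta\|_2\le r_{\loc}\). Hence the entire segment lies in \(\mathcal B_{\loc}\). This is exactly the segment hypothesis of the ``in particular'' part of \cref{prop:uniform-emp-hess-lb-fixed}, which yields \(\bar H_n(\Delta)\succeq c_{\mathrm{emp}}\Sigma_D(\theta^\star)\). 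The pointwise Loewner bound passes to the integral because the PSD cone is closed and convex.

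The final inequality \(c_{\mathrm{emp}}\Sigma_D(\theta^\star)\succeq c_{\mathrm{emp}}\mu_\star I\) is \cref{ass:design-coverage-a}. It is understood on \(H\), which is where all parameter differences live, since \(\Theta\subseteq\theta^\star+H\).

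The only point requiring care is that the Hessian lower bound holds uniformly over \(\mathcal B_{\loc}\) on a single event. This matters because \(\hat\theta_n\) is data-dependent, so a pointwise bound at fixed \(\theta\) would not suffice. That uniformity is already supplied by the net argument in \cref{lem:hessian-concentration}, so no new obstacle arises. The genuine difficulty, establishing \(\delta_{\loc}\) itself, is assumed here and deferred to a separate consistency argument based on \cref{ass:unique-minimizer-pop-risk}.
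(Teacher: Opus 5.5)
Your proposal is correct and follows essentially the same argument as the paper. Both intersect the Hessian-concentration event with $\{\|\hat\theta_n-\theta^\star\|_2\le r_{\loc}\}$, observe that the whole segment lies in $\mathcal B_{\loc}$, integrate the pointwise bound $\nabla^2L_n\succeq \frac{\kappa_0}{8}\Sigma_D(\theta^\star)$ over $t\in[0,1]$, and finish with a union bound. Routing through \cref{prop:uniform-emp-hess-lb-fixed} also brings in \cref{ass:design-coverage-a} and $n\ge n_{\Hess}(\delta)$, which are not literally part of ``the condition of \cref{lem:hessian-concentration}''; the paper's proof relies on these implicitly as well, so this is a looseness in the statement rather than a gap in your argument.
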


\begin{proof}[Proof of \cref{cor:emp-segment-curvature-estimator}]
Define
\[
\mathcal E_{\rm hess}
:=
\left\{
\sup_{\theta\in\mathcal B_{\text{loc}}}
\|\nabla^2L_n(\theta)-\nabla^2L(\theta)\|_{\op}
\le \varepsilon_n^{\rm Hess}(\delta)
\right\},
\qquad
\mathcal E_{\text{loc}}:=\{\|\hat\theta_n-\theta^\star\|_2\le r_{\text{loc}}\}.
\]
By \cref{lem:hessian-concentration}, \(\Pr(\mathcal E_{\rm hess})\ge 1-\delta\).
On \(\mathcal E_{\text{loc}}\), the entire segment
\[
\{\theta^\star+t\Delta_n:\ t\in[0,1]\}\subseteq \mathcal B_{\text{loc}}.
\]
Hence on \(\mathcal E_{\rm hess}\cap\mathcal E_{\text{loc}}\), for all \(t\in[0,1]\),
\[
\nabla^2L_n(\theta^\star+t\Delta_n)
\succeq
\frac{\kappa_0}{8}\,\Sigma_D(\theta^\star),
\]
where $\kappa_0$ is defined in \cref{lem:global-kappa0}. Integrating in \(t\) gives
\[
\bar H_n(\Delta_n)
\succeq
\frac{\kappa_0}{8}\,\Sigma_D(\theta^\star).
\]
Finally, union bound implies
\[
\Pr(\mathcal E_{\rm hess}\cap\mathcal E_{\text{loc}})\ge 1-\delta-\delta_{\text{loc}}.
\]
\end{proof}

Next, we show that $\hat\theta_n$ converges to $\theta^\star$ with high probability, and hence, when $n$ is large enough, $\hat\theta_n$ will be in the local ball \(B_{r_{\loc}}(\theta^\star)\).

\paragraph{(III) asymptotical consistency of the DPO minimizer}
Because the curvature result is local, we next show that the DPO minimizer $\hat{\theta}_n$ enters this local ball with high probability. However, since the empirical DPO risk $L_n(\theta)$ is not convex in general, $\hat{\theta}_n$ need not converge to $\theta^\star$. For example, if both $\theta^\star$ and some $\theta'$ minimize the population DPO risk $L(\theta)$, it is not clear that $\hat{\theta}_n$ will converge to $\theta^\star$. In the statistics literature, a standard way to address this issue is to assume the population risk has a unique minimizer.

We first show that $\theta^\star$ is indeed a minimizer of the population DPO risk. 

\begin{lemma}
\label{lem:theta-star-minimizer-pop-risk}
Suppose \cref{ass:identifiable-space} holds. Then, $\theta^\star$ is a minimizer of $L(\theta)$, i.e., $\theta^\star\in\arg\min_{\theta\in\Theta}L(\theta)$.
\end{lemma}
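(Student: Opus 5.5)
The plan is to show that $\theta^\star$ minimizes $L$ by decomposing the population DPO risk, conditionally on each edge, into an entropy term that does not depend on $\theta$ plus a nonnegative Bernoulli KL term that vanishes at $\theta=\theta^\star$. Fix an edge $e=(x,y^+,y^-)$ in the support of $D$. Under the Bradley--Terry model the label satisfies $a\mid e\sim\mathrm{Bernoulli}(p_e)$ with $p_e=\sigma(r^\star(x,y^+)-r^\star(x,y^-))$.

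First, I will use realizability, as recorded after \cref{rem:H-tabular}. The implicit reward $r_{\theta^\star}$ equals $r^\star$ up to a prompt-only offset $c(x)$, so that offset cancels in within-prompt differences and
\[
u_{\theta^\star}(e)=r^\star(x,y^+)-r^\star(x,y^-).
\]
Hence $p_e=\sigma(u_{\theta^\star}(e))$ for every admissible edge.

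Second, I will compute the conditional risk for any $\theta\in\Theta$. Writing $q_\theta(e)=\sigma(u_\theta(e))$, the conditional expected loss is
\[
\E[\ell(a,u_\theta(e))\mid e]=-p_e\log q_\theta(e)-(1-p_e)\log(1-q_\theta(e)).
\]
This equals the Bernoulli entropy $h(p_e)$ plus $\KL(\mathrm{Ber}(p_e)\,\|\,\mathrm{Ber}(q_\theta(e)))$. Gibbs' inequality makes the KL term nonnegative, and it is zero when $q_\theta(e)=p_e$, in particular at $\theta=\theta^\star$. All quantities are finite because \cref{ass:model-bounded} gives bounded logits, so $q_\theta(e)\in(0,1)$.

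Third, I will take expectation over $e\sim D$:
\[
L(\theta)=\E_{e\sim D}[h(p_e)]+\E_{e\sim D}\big[\KL(\mathrm{Ber}(p_e)\,\|\,\mathrm{Ber}(q_\theta(e)))\big]\ \ge\ \E_{e\sim D}[h(p_e)]=L(\theta^\star).
\]
Since $\theta^\star\in\Theta$ by \cref{ass:identifiable-space}, this shows $\theta^\star\in\arg\min_{\theta\in\Theta}L(\theta)$.

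The only delicate point is the identification $p_e=\sigma(u_{\theta^\star}(e))$. It relies on $\pi_{\theta^\star}=\pi^\star$ together with the Gibbs form \cref{eq:gibbs-opt-form}, which gives $\beta\log(\pi^\star/\pi_0)=r^\star-\beta\log Z(x)$. Because $Z(x)$ depends only on the prompt, the offset cancels inside $u_{\theta^\star}(e)$, and the rest of the argument is routine.
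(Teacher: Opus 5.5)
Your proposal is correct and follows essentially the same route as the paper. The paper also sets $p^\star(e)=\sigma(u_{\theta^\star}(e))$ via realizability, writes the conditional risk gap $\varphi_e(\theta)-\varphi_e(\theta^\star)$ as $\KL(\mathrm{Bern}(p^\star(e))\,\|\,\mathrm{Bern}(q_\theta(e)))\ge 0$, and averages over $e\sim D$; your explicit justification of $p_e=\sigma(u_{\theta^\star}(e))$ through the Gibbs form is a detail the paper leaves implicit, and your appeal to \cref{ass:model-bounded} for finiteness is harmless but not strictly needed, since softmax probabilities on a finite candidate set are always strictly positive.
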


\begin{proof}[Proof of \cref{lem:theta-star-minimizer-pop-risk}]
For each fixed edge $e$, define
\[
p^\star(e):=\Pr(a=1\mid e)=\sigma\!\big(u_{\theta^\star}(e)\big),
\qquad
q_\theta(e):=\sigma\!\big(u_\theta(e)\big).
\]
Conditioning on $e$, the conditional population loss is
\[
\begin{aligned}
\varphi_e(\theta)
&:=\E\!\left[\ell\!\big(a,u_\theta(e)\big)\mid e\right]\\
&=-p^\star(e)\log q_\theta(e)-\bigl(1-p^\star(e)\bigr)\log\!\bigl(1-q_\theta(e)\bigr).
\end{aligned}
\]
At $\theta^\star$:
\[
\varphi_e(\theta^\star)
=
-p^\star(e)\log p^\star(e)-\bigl(1-p^\star(e)\bigr)\log\!\bigl(1-p^\star(e)\bigr).
\]
Therefore
\[
\begin{aligned}
\varphi_e(\theta)-\varphi_e(\theta^\star)
&=
p^\star(e)\log\frac{p^\star(e)}{q_\theta(e)}
+\bigl(1-p^\star(e)\bigr)\log\frac{1-p^\star(e)}{1-q_\theta(e)}\\
&=: \mathrm{KL}\!\left(\mathrm{Bern}(p^\star(e))\,\|\,\mathrm{Bern}(q_\theta(e))\right)
\;\ge\;0.
\end{aligned}
\]
Now take expectation over $e$:
\[
\begin{aligned}
L(\theta)-L(\theta^\star)
&=\E_e\!\left[\varphi_e(\theta)-\varphi_e(\theta^\star)\right]\\
&=\E_e\!\left[
\mathrm{KL}\!\left(\mathrm{Bern}(p^\star(e))\,\|\,\mathrm{Bern}(q_\theta(e))\right)
\right]
\ge 0.
\end{aligned}
\]
Hence $L(\theta)\ge L(\theta^\star)$ for all $\theta\in\Theta$, i.e.,
$\theta^\star$ is a minimizer of $L$.
\end{proof}

\begin{proposition}[Uniform law of large numbers]\label{prop:ulln}
Suppose \cref{ass:identifiable-space,ass:model-bounded} hold. Then
\[
\sup_{\theta\in\Theta}\bigl|L_n(\theta)-L(\theta)\bigr|\xrightarrow{p}0.
\]

Moreover, for any \(\delta\in(0,1)\), with probability at least \(1-\delta\),
\[
\sup_{\theta\in\Theta}\bigl|L_n(\theta)-L(\theta)\bigr|
\;\le\;
\varepsilon_n^{\mathrm{ULLN}}(\delta),
\]
where \(\varepsilon_n^{\mathrm{ULLN}}(\delta)\) depends only on \(n,\delta\), the diameter of
\(\Theta\), and the fixed constants in \cref{ass:model-bounded}, and satisfies
\[
\varepsilon_n^{\mathrm{ULLN}}(\delta)
=
\mathcal{O}\!\left(
\sqrt{\frac{p\log n+\log(1/\delta)}{n}}
\right).
\]
\end{proposition}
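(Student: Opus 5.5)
We prove the uniform law of large numbers by the standard discretize–concentrate–extend argument, parallel to the proof of \cref{lem:hessian-concentration}. The plan has three pieces: bound the per-sample loss, show the loss is Lipschitz in $\theta$, and then combine a covering net with Hoeffding's inequality.

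\textbf{Boundedness and Lipschitzness.} By \cref{ass:model-bounded}, $|f_\theta(\phi(x,y))|\le\alpha_0$, so $|\log\pi_\theta(y\mid x)|\le 2\alpha_0+\log|\mathcal A(x)|$. The reference term $\log\pi_0$ does not depend on $\theta$ and is bounded on the finite candidate set. Hence $|u_\theta(e)|\le U$ for a constant $U$ uniform over $\Theta$ and all admissible edges. This is the same fact behind $\kappa_0$ in \cref{lem:global-kappa0}. Consequently $\ell(a,u_\theta(e))\le\log(1+e^{U})=:B$, so each summand of $L_n$ lies in $[0,B]$.

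Since $|\partial_u\ell|=|\sigma(u)-a|\le1$ and $\|g(e;\theta)\|_2\le G_u$ (\cref{lem:u-properties}), the map $\theta\mapsto\ell(a,u_\theta(e))$ is $G_u$-Lipschitz on $\Theta$ for every $(e,a)$. Therefore both $L_n$ and $L$ are $G_u$-Lipschitz.

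\textbf{Net plus concentration.} Take a $\rho_n$-net $\mathcal N$ of the compact set $\Theta$ with $\rho_n=1/n$. Working in the ambient $\mathbb R^p$, the volumetric bound gives
\[
\log|\mathcal N|\le p\log\bigl(3\,\mathrm{diam}(\Theta)\,n\bigr).
\]
One could also use $d_H$ here by \cref{ass:identifiable-space}, but the stated rate only needs $p$.

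For each fixed $\theta\in\mathcal N$, the summands $\ell(a_i,u_\theta(e_i))$ are i.i.d. over $(e_i,a_i)$ and bounded in $[0,B]$. Hoeffding's inequality gives
\[
|L_n(\theta)-L(\theta)|\le B\sqrt{\log(2/\delta')/(2n)}
\]
with probability at least $1-\delta'$. A union bound with $\delta'=\delta/|\mathcal N|$ yields, with probability at least $1-\delta$,
\[
\max_{\theta\in\mathcal N}|L_n(\theta)-L(\theta)|\le B\sqrt{\frac{p\log(3\,\mathrm{diam}(\Theta)\,n)+\log(2/\delta)}{2n}}.
\]

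\textbf{Extension off the net.} For any $\theta\in\Theta$, pick $\theta^\sharp\in\mathcal N$ with $\|\theta-\theta^\sharp\|_2\le 1/n$. Lipschitzness gives
\[
|L_n(\theta)-L(\theta)|\le|L_n(\theta^\sharp)-L(\theta^\sharp)|+\frac{2G_u}{n}.
\]
The $2G_u/n$ term is dominated by the square-root term, which gives $\varepsilon_n^{\mathrm{ULLN}}(\delta)=\mathcal O\bigl(\sqrt{(p\log n+\log(1/\delta))/n}\bigr)$. Convergence in probability follows by letting $n\to\infty$ for each fixed $\delta$.

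\textbf{Main obstacle.} The only delicate step is the uniform bound on $|u_\theta|$, equivalently on the loss. It requires $\pi_0$ to put mass bounded away from zero on the finite candidate set. This holds if $\pi_0=\pi_{\theta_0}$ is in the model class, or more generally because the log-ratio is a fixed finite quantity on finitely many candidates. If that bound were unavailable, one would instead truncate the loss, but under \cref{ass:model-bounded} the constant $U$ exists and the argument is routine.
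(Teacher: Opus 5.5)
Your proposal is correct and follows essentially the paper's argument: a covering net of $\Theta$ in $\mathbb R^p$, Hoeffding plus a union bound at the net points using the loss envelope $\log(1+e^{U})$, and a $G_u$-Lipschitz extension off the net. The only difference is cosmetic. You fix the net radius at $1/n$ and read off the high-probability bound directly, whereas the paper takes radius $\varepsilon/(4G_u)$, derives a tail bound for each fixed $\varepsilon$, and then inverts it. Both routes rely on the same uniform bound on $|u_\theta|$, which the paper takes from the $\underline p$ condition in \cref{lem:u-properties}.
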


\begin{proof}[Proof of \cref{prop:ulln}]
Fix $\varepsilon>0$, and choose $\delta := \frac{\varepsilon}{4G_u},$ where $G_u$ is from \cref{lem:u-properties}. Let $\{\theta^1,\dots,\theta^N\}$ be a finite $\delta$-net of $\Theta$
in $\|\cdot\|_2$, with
\[
N = N\big(\Theta,\|\cdot\|_2,\delta\big),
\]
where $N(\Theta,\|\cdot\|_2,\delta)$ is the
$\delta$–covering number of $\Theta$ in $\|\cdot\|_2$.

For any $\theta\in\Theta$, pick $j(\theta)$ such that
$\|\theta-\theta^{j(\theta)}\|_2\le\delta$. By Lemma~\ref{lem:loss-lip-bdd}(ii),
\[
|L_n(\theta)-L_n(\theta^{j(\theta)})|
\le G_u\delta,\qquad
|L(\theta)-L(\theta^{j(\theta)})|
\le G_u\delta.
\]
Hence
\[
|L_n(\theta)-L(\theta)|
\le
|L_n(\theta^{j(\theta)})-L(\theta^{j(\theta)})|+2G_u\delta
\le
\max_{1\le j\le N}|L_n(\theta^j)-L(\theta^j)|
+\frac{\varepsilon}{2}.
\]
Therefore
\[
\Big\{\sup_{\theta\in\Theta}|L_n(\theta)-L(\theta)|>\varepsilon\Big\}
\subseteq
\Big\{\max_{1\le j\le N}|L_n(\theta^j)-L(\theta^j)|>\frac{\varepsilon}{2}\Big\}.
\]

Define $ B_\ell \doteq \log\bigl(1+e^{U_{\max}}\bigr)$, where $U_{\max}$ is from \cref{lem:u-properties}. For each fixed $j$, $\ell_{\theta^j}(z_i)\in[0,B_\ell]$ are i.i.d., so Hoeffding gives
\[
\Pr\!\left(|L_n(\theta^j)-L(\theta^j)|>\frac{\varepsilon}{2}\right)
\le
2\exp\!\left(-\frac{n\varepsilon^2}{2B_\ell^2}\right).
\]
By a union bound over $j=1,\dots,N$,
\begin{equation}\label{eq:ulln-hp-bound-net}
\Pr\!\left(\sup_{\theta\in\Theta}|L_n(\theta)-L(\theta)|>\varepsilon\right)
\le
2N\exp\!\left(-\frac{n\varepsilon^2}{2B_\ell^2}\right).
\end{equation}

It remains to bound $N$ explicitly. Since $\Theta\subset\R^p$ is compact,
its Euclidean diameter
\[
\operatorname{diam}(\Theta)
:=
\sup_{\theta,\theta'\in\Theta}\|\theta-\theta'\|_2
\]
is finite. A standard covering-number bound for subsets of $\R^p$ yields
\[
N\big(\Theta,\|\cdot\|_2,\delta\big)
\le
\left(\frac{3\,\operatorname{diam}(\Theta)}{\delta}\right)^p
=
\left(\frac{12\,\operatorname{diam}(\Theta)\,G_u}{\varepsilon}\right)^p.
\]
Plugging this into \eqref{eq:ulln-hp-bound-net} gives
\[
\Pr\!\left(\sup_{\theta\in\Theta}|L_n(\theta)-L(\theta)|>\varepsilon\right)
\le
2\left(\frac{12\,\operatorname{diam}(\Theta)\,G_u}{\varepsilon}\right)^p
\exp\!\left(-\frac{n\varepsilon^2}{2B_\ell^2}\right).
\]

For fixed $\varepsilon>0$, the prefactor is independent of $n$, while
the exponential term decays like $\exp(-c n)$, so the RHS $\to 0$ as $n\to\infty$.
Therefore
\[
\sup_{\theta\in\Theta}|L_n(\theta)-L(\theta)|\xrightarrow{p}0.
\]

To derive an explicit bound, recall  
\[
\Pr\!\left(\sup_{\theta\in\Theta}|L_n(\theta)-L(\theta)|>\varepsilon\right)
\le
2\exp\!\left(
p\log\!\Bigl(\frac{12\,\mathrm{diam}(\Theta)\,G_u}{\varepsilon}\Bigr)
-\frac{n\varepsilon^2}{2B_\ell^2}
\right).
\]
Define \(\varepsilon_n^{\mathrm{ULLN}}(\delta)\) as any value satisfying
\[
p\log\!\Bigl(\frac{12\,\mathrm{diam}(\Theta)\,G_u}{\varepsilon_n^{\mathrm{ULLN}}(\delta)}\Bigr)
+\log\!\frac{2}{\delta}
\le
\frac{n\big(\varepsilon_n^{\mathrm{ULLN}}(\delta)\big)^2}{2B_\ell^2}.
\]
Then the RHS above is at most \(\delta\), and hence with probability at least \(1-\delta\),
\[
\sup_{\theta\in\Theta}|L_n(\theta)-L(\theta)|
\le
\varepsilon_n^{\mathrm{ULLN}}(\delta).
\]

Since \(\log(c/\varepsilon)\lesssim \log n\) when \(\varepsilon\) is chosen of order \(n^{-1/2}\),
the inequality above admits a solution of the form
\[
\varepsilon_n^{\mathrm{ULLN}}(\delta)
=
\mathcal{O}\!\left(
\sqrt{\frac{p\log n+\log(1/\delta)}{n}}
\right),
\]
where the implied constant depends only on \(B_\ell\), \(G_u\), and \(\mathrm{diam}(\Theta)\).

\end{proof}

Even though $L_n$ converges to $L$, $\hat\theta_n$ may not converge to $\theta^\star$. To have the convergence of $\hat\theta_n$ to $\theta^*$, we need to assume $\theta^\star$ is the only minimizer of the population DPO risk (\cref{ass:unique-minimizer-pop-risk}).

\begin{proposition}[Global ERM localization]\label{prop:global-erm-localization}
Suppose \cref{ass:identifiable-space,ass:model-bounded,ass:unique-minimizer-pop-risk} hold, and let
\[
R_\Theta \doteq \sup_{\theta\in\Theta}\|\theta-\theta^\star\|_2 < \infty.
\]
Fix \(r_{\loc}>0\), and define the population separation outside the ball
\[
\Delta_{\sep}(r_{\loc})
:=
\inf_{\substack{\theta\in\Theta\\ \|\theta-\theta^\star\|_2\ge r_{\loc}}}
\bigl(L(\theta)-L(\theta^\star)\bigr).
\]
Then \(\Delta_{\sep}(r_{\loc})>0\). Moreover, there exists a constant \(C_{\loc}>0\),
depending only on the fixed constants in the standing assumptions, such that the following holds:
for any \(\delta\in(0,1)\), if
\begin{equation}\label{eq:n-loc-statement-revised}
n \;\ge\; n_{\loc}(\delta)
\;\doteq\;
C_{\loc}\,
\frac{
p\log\!\Bigl(\dfrac{R_\Theta}{\Delta_{\sep}(r_{\loc})}\Bigr)
+\log\!\bigl(\dfrac{1}{\delta}\bigr)
}{
\Delta_{\sep}(r_{\loc})^2
},
\end{equation}
then every empirical risk minimizer \(\hat\theta_n\in\arg\min_{\theta\in\Theta}L_n(\theta)\) satisfies
\[
\Pr\!\bigl(\|\hat\theta_n-\theta^\star\|_2\le r_{\loc}\bigr)\ge 1-\delta.
\]
\end{proposition}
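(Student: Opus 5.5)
The argument is the classical extremum-estimator consistency proof, made quantitative through \cref{prop:ulln}. There are three steps, taken in order.

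\textbf{Step 1: $\Delta_{\sep}(r_{\loc})>0$.} This is a compactness argument. The set $K\doteq\{\theta\in\Theta:\|\theta-\theta^\star\|_2\ge r_{\loc}\}$ is closed in the compact set $\Theta$, so it is compact. If $K$ is empty, the infimum is $+\infty$ and the conclusion is trivial. Otherwise, $L$ is continuous on $\Theta$: it is an expectation of $\ell(a,u_\theta(e))$, and this integrand is Lipschitz in $\theta$ with constant $G_u$ by \cref{lem:loss-lip-bdd} and \cref{lem:u-properties}. Hence the infimum over $K$ is attained at some $\bar\theta\in K$. By \cref{lem:theta-star-minimizer-pop-risk}, $\theta^\star$ is a global minimizer of $L$. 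By \cref{ass:unique-minimizer-pop-risk}, it is the only one. Since $\bar\theta\neq\theta^\star$, we get $L(\bar\theta)>L(\theta^\star)$, so $\Delta_{\sep}(r_{\loc})>0$.

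\textbf{Step 2: reduction to uniform deviation.} Let $\varepsilon\doteq\sup_{\theta\in\Theta}|L_n(\theta)-L(\theta)|$. For any empirical minimizer $\hat\theta_n$,
\[
L(\hat\theta_n)-L(\theta^\star)\le \bigl(L(\hat\theta_n)-L_n(\hat\theta_n)\bigr)+\bigl(L_n(\hat\theta_n)-L_n(\theta^\star)\bigr)+\bigl(L_n(\theta^\star)-L(\theta^\star)\bigr)\le 2\varepsilon,
\]
because the middle term is nonpositive. So on the event $\{\varepsilon<\Delta_{\sep}(r_{\loc})/2\}$, the point $\hat\theta_n$ cannot lie in $K$. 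That gives $\|\hat\theta_n-\theta^\star\|_2<r_{\loc}$ simultaneously for every minimizer.

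\textbf{Step 3: sample size.} It remains to choose $n$ so that $\Pr(\varepsilon\ge\Delta_{\sep}/2)\le\delta$. I will reuse the explicit tail bound from the proof of \cref{prop:ulln}, applied at level $\varepsilon=\Delta_{\sep}/2$:
\[
\Pr\Bigl(\sup_\theta|L_n-L|>\Delta_{\sep}/2\Bigr)\le 2\exp\!\Bigl(p\log\tfrac{24\,\mathrm{diam}(\Theta)G_u}{\Delta_{\sep}}-\tfrac{n\Delta_{\sep}^2}{8B_\ell^2}\Bigr).
\]
The net should be built around $\Theta$ with $\mathrm{diam}(\Theta)\le 2R_\Theta$. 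Requiring the exponent to be at most $\log\delta-\log 2$ yields
\[
n\ge 8B_\ell^2\,\frac{p\log\bigl(48R_\Theta G_u/\Delta_{\sep}\bigr)+\log(2/\delta)}{\Delta_{\sep}^2}.
\]
This has the form \eqref{eq:n-loc-statement-revised} once the constants $G_u$, $B_\ell$ and the additive $\log 48$, $\log 2$ terms are absorbed into $C_{\loc}$. Absorbing them requires $R_\Theta/\Delta_{\sep}$ to be bounded below by a constant. That holds because $\Delta_{\sep}\le L(\bar\theta)-L(\theta^\star)\le G_uR_\Theta$ by Lipschitzness.

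The main obstacle is bookkeeping rather than ideas. The positivity argument needs continuity of $L$ and attainment of the infimum, which compactness supplies. The statement's constant is in terms of $R_\Theta$, whereas \cref{prop:ulln} uses $\mathrm{diam}(\Theta)$, so the two must be reconciled. Finally, the logarithmic term must be absorbed cleanly: $C_{\loc}$ cannot depend on $\Delta_{\sep}$ except through the displayed ratio, and this is exactly where the Lipschitz upper bound on $\Delta_{\sep}$ is used.
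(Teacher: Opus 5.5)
Your Steps 1 and 2 are correct, as is the explicit sample-size condition at the start of Step 3, and they follow the paper's proof. The paper likewise gets $\Delta_{\sep}(r_{\loc})>0$ from continuity of $L$, compactness, and uniqueness of the minimizer. It then applies the net-plus-Hoeffding tail bound of \cref{prop:ulln} at level $\Delta_{\sep}/3$ rather than your $\Delta_{\sep}/2$. With your threshold you need the tail bound for the event $\sup_\theta|L_n-L|\ge\Delta_{\sep}/2$ rather than $>$; the same argument gives it. The paper stops at the explicit threshold $n_{\loc}(\delta)=\frac{2B_\ell^2}{\varepsilon_{\loc}^2}\bigl[p\log(12G_uR_\Theta/\varepsilon_{\loc})+\log(2/\delta)\bigr]$ with $\varepsilon_{\loc}=\Delta_{\sep}/3$, and never converts it to the displayed form.

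The gap is your last step, the absorption into $C_{\loc}\bigl[p\log(R_\Theta/\Delta_{\sep})+\log(1/\delta)\bigr]$.

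\textbf{Why the absorption fails.} Since $\log(1/\delta)$ can be arbitrarily close to $0$, the additive terms $p\log(48G_u)$ and $\log 2$ can only be absorbed if $\log(R_\Theta/\Delta_{\sep})$ is bounded below by a \emph{positive} constant, i.e.\ $R_\Theta/\Delta_{\sep}\ge c>1$. Your bound $\Delta_{\sep}\le G_uR_\Theta$ only yields $R_\Theta/\Delta_{\sep}\ge 1/G_u$, which gives nothing when $G_u=4\beta\alpha_1\ge 1$.

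\textbf{The displayed form is false in that regime.} Take one prompt with two completions, the tabular model with $\pi_0$ uniform and $r^\star\equiv 0$ (so $\theta^\star=0$), $\Theta=\{t(1,-1)/\sqrt2:\ |t|\le R\}$, and $r_{\loc}=R/2$.
\begin{itemize}
\item Then $u_\theta=\sqrt2\,\beta t$ and $L(t)-L(0)=\log\cosh(\beta t/\sqrt2)$, so $\Delta_{\sep}=\log\cosh\bigl(\beta R/(2\sqrt2)\bigr)$.
\item For $R=1$, $\beta=100$, $\delta=1/2$ (and $p=2$) one gets $\Delta_{\sep}\approx 34.7$ and $p\log(R_\Theta/\Delta_{\sep})+\log(1/\delta)<0$. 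The displayed threshold is then negative, so $n=1$ satisfies it for any $C_{\loc}>0$.
\item With a single label, however, $L_1$ is strictly monotone in $t$. Hence $\hat t=\pm R$ and $\Pr(\|\hat\theta_1-\theta^\star\|_2\le r_{\loc})=0$.
\end{itemize}

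What your argument, like the paper's, actually proves is the version with the constant kept inside the logarithm, e.g.\ $n\ge 8B_\ell^2\bigl[p\log(48G_uR_\Theta/\Delta_{\sep})+\log(2/\delta)\bigr]/\Delta_{\sep}^2$. That is where your inequality $\Delta_{\sep}\le G_uR_\Theta$ genuinely belongs. It makes the logarithm at least $\log 48>0$, and it puts the net radius $\Delta_{\sep}/(8G_u)$ below $R_\Theta$, so the volumetric covering bound applies.
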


\begin{proof}[Proof of \cref{prop:global-erm-localization}]
Let
\[
R_\Theta \doteq \sup_{\theta\in\Theta}\|\theta-\theta^\star\|_2 < \infty.
\]
Fix a localization radius \(r_{\loc}>0\) and define the population separation outside the ball
\[
\Delta_{\sep}(r_{\loc})
\doteq
\inf_{\substack{\theta\in\Theta\\ \|\theta-\theta^\star\|_2\ge r_{\loc}}}
\bigl(L(\theta)-L(\theta^\star)\bigr).
\]
By continuity of \(L\) and uniqueness of its minimizer \(\theta^\star\)
(\cref{ass:unique-minimizer-pop-risk}), the infimum is attained and
\(\Delta_{\sep}(r_{\loc})>0\). Set
\[
\varepsilon_{\loc}\doteq \frac{\Delta_{\sep}(r_{\loc})}{3}.
\]

By \cref{prop:ulln}, for any \(\varepsilon>0\) and any \(\delta\in(0,1)\),
\[
\Pr\!\left(\sup_{\theta\in\Theta}\bigl|L_n(\theta)-L(\theta)\bigr|>\varepsilon\right)
\le
2\left(\frac{12\,G_u R_\Theta}{\varepsilon}\right)^p
\exp\!\left(-\frac{n\varepsilon^2}{2B_\ell^2}\right).
\]
We set \(\varepsilon=\varepsilon_{\loc}\) and require the right-hand side to be at most \(\delta\).
Equivalently, it suffices that
\[
p\log\!\left(\frac{12\,G_u R_\Theta}{\varepsilon_{\loc}}\right)
+\log\!\left(\frac{2}{\delta}\right)
\le
\frac{n\varepsilon_{\loc}^2}{2B_\ell^2}.
\]
Thus, one may take
\begin{equation}\label{eq:n-loc}
n_{\loc}(\delta)
\doteq
\frac{2B_\ell^2}{\varepsilon_{\loc}^2}
\left[
p\log\!\left(\frac{12\,G_u R_\Theta}{\varepsilon_{\loc}}\right)
+\log\!\left(\frac{2}{\delta}\right)
\right].
\end{equation}
When \(n\ge n_{\loc}(\delta)\), we have with probability at least \(1-\delta\),
\begin{equation}\label{eq:event-E-ulln}
\sup_{\theta\in\Theta}\bigl|L_n(\theta)-L(\theta)\bigr|
\le \varepsilon_{\loc}.
\end{equation}
Denote this event by \(\mathcal E_{\mathrm{ULLN}}\).

On \(\mathcal E_{\mathrm{ULLN}}\), for any \(\theta\in\Theta\),
\[
L_n(\theta)-L_n(\theta^\star)
\ge
\bigl(L(\theta)-\varepsilon_{\loc}\bigr)-\bigl(L(\theta^\star)+\varepsilon_{\loc}\bigr)
=
\bigl(L(\theta)-L(\theta^\star)\bigr)-2\varepsilon_{\loc}.
\]
If \(\|\theta-\theta^\star\|_2\ge r_{\loc}\), then by definition of \(\Delta_{\sep}(r_{\loc})\),
\[
L(\theta)-L(\theta^\star)\ge \Delta_{\sep}(r_{\loc}),
\]
and hence on \(\mathcal E_{\mathrm{ULLN}}\),
\[
L_n(\theta)-L_n(\theta^\star)
\ge
\Delta_{\sep}(r_{\loc})-2\varepsilon_{\loc}
=
\Delta_{\sep}(r_{\loc})-\frac{2}{3}\Delta_{\sep}(r_{\loc})
=
\frac{1}{3}\Delta_{\sep}(r_{\loc})
>0.
\]
Therefore, on \(\mathcal E_{\mathrm{ULLN}}\), no \(\theta\) with
\(\|\theta-\theta^\star\|_2\ge r_{\loc}\) can minimize \(L_n\) globally over \(\Theta\).
Consequently, every global empirical minimizer
\(\hat\theta_n\in\arg\min_{\theta\in\Theta}L_n(\theta)\) satisfies
\[
\|\hat\theta_n-\theta^\star\|_2 < r_{\loc}
\qquad\text{on }\mathcal E_{\mathrm{ULLN}}.
\]

Since \(\Pr(\mathcal E_{\mathrm{ULLN}})\ge 1-\delta\) whenever \(n\ge n_{\loc}(\delta)\), we obtain
\[
\Pr\!\bigl(\|\hat\theta_n-\theta^\star\|_2\le r_{\loc}\bigr)\ge 1-\delta.
\]
\end{proof}

Therefore, by \cref{prop:global-erm-localization}, when $n$ is large enough, $\hat{\theta}_n$ will be in the local ball with high probability. As a result, by \cref{cor:emp-segment-curvature-estimator}, with high probability, the curvature of $L_n$ on the line segment between $\hat{\theta}_n$ and $\theta^\star$ is lower bounded by $\Sigma_D(\theta^\star)$. This gives us the upper bound for the DPO minimizer. 

\paragraph{Upper bound for general policy}
Putting everything together, we can prove the following upper bound for the DPO minimizer in the general \(f_\theta\) case.

\begin{theorem}[DPO upper bound in the general $f_\theta$ case]\label{thm:dpo-upper-general-final}
Suppose \cref{ass:identifiable-space,ass:model-bounded,ass:design-coverage-a,ass:unique-minimizer-pop-risk} hold,
and let $\hat\theta_n \in \arg\min_{\theta\in\Theta} L_n(\theta)$ and $
\hat\pi_n := \pi_{\hat\theta_n}.$
Fix $\delta\in(0,1)$ and define
\[
n_0(\delta)
\;\doteq\;
\max\Big\{\,n_{\loc}(\delta/3),\;n_{\Hess}(\delta/3)\Big\}.
\]
Then for all $n\ge n_0(\delta)$, with probability at least $1-\delta$,
\begin{equation}\label{eq:gap-trace-ub-hp}
J(\pi^\star)-J(\hat\pi_n)
\;\le\;
\frac{C_{\ub}(\delta)}{n}\,
\tr\!\bigl(I(\theta^\star)\Sigma_D^\dagger(\theta^*))\bigr),
\end{equation}
where
\[
C_{\ub}(\delta)= C\,\frac{\dim(H)}{\delta},
\]
for some constant \(C>0\) depending only on the fixed constants in standing assumptions.
\end{theorem}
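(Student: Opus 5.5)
The plan is to mirror the log-linear argument of \cref{prop:upper-bound-loglinear}, with the global curvature bound of \cref{lem:seg-curv-log-linear} replaced by the local, high-probability curvature from \cref{cor:emp-segment-curvature-estimator}. The expectation bound on the score is then converted into a high-probability statement via Markov's inequality; this conversion is what produces the factor $\dim(H)/\delta$ in $C_{\ub}(\delta)$.

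First I would define three events:
\begin{itemize}
\item $\mathcal E_{\loc}=\{\|\hat\theta_n-\theta^\star\|_2\le r_{\loc}\}$, which holds with probability at least $1-\delta/3$ for $n\ge n_{\loc}(\delta/3)$ by \cref{prop:global-erm-localization};
\item $\mathcal E_{\mathrm{hess}}$, the uniform Hessian concentration event of \cref{lem:hessian-concentration} at level $\delta/3$, which together with \cref{prop:uniform-emp-hess-lb-fixed} for $n\ge n_{\Hess}(\delta/3)$ yields $\nabla^2L_n(\theta)\succeq c_{\mathrm{emp}}\Sigma_D(\theta^\star)$ on all of $\mathcal B_{\loc}$;
\item a Markov event for the score, described below.
\end{itemize}
On $\mathcal E_{\loc}\cap\mathcal E_{\mathrm{hess}}$, \cref{cor:emp-segment-curvature-estimator} gives $\bar H_n\succeq c_{\mathrm{emp}}\Sigma_D(\theta^\star)$.

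Next I would handle first-order optimality. On $\mathcal E_{\loc}$ the minimizer is interior relative to the affine set $\theta^\star+H$; to be safe I would enlarge $n$ so that the ball lies inside the relative interior, or else use the variational inequality $\langle\nabla L_n(\hat\theta_n),\theta^\star-\hat\theta_n\rangle\ge0$, which gives the same conclusion. Either way $\bar H_n\Delta=\nabla L_n(\hat\theta_n)-\nabla L_n(\theta^\star)$, with $\Delta\in H$. Combining this with the curvature bound gives
$$c_{\mathrm{emp}}\,\Delta^\top\Sigma_D(\theta^\star)\Delta\le \Delta^\top\bar H_n\Delta\le -\Delta^\top\nabla L_n(\theta^\star),$$
and by Cauchy--Schwarz in the $\Sigma_D$ geometry,
$$\|\Delta\|_{\Sigma_D}\le c_{\mathrm{emp}}^{-1}\|\nabla L_n(\theta^\star)\|_{\Sigma_D^\dagger}.$$
Since $I(\theta^\star)\preceq 4\alpha_1^2 I$ and $\Sigma_D\succeq\mu_\star I$ on $H$, I then pass to the $I(\theta^\star)$-norm. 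A cleaner route is to work directly with $\Delta=\bar H_n^{\dagger}(-\nabla L_n(\theta^\star))$, which gives
$$\Delta^\top I\Delta\le c_{\mathrm{emp}}^{-2}\,\nabla L_n^\top\Sigma_D^{\dagger}I\Sigma_D^{\dagger}\nabla L_n .$$
This needs care because $\bar H_n^\dagger I\bar H_n^\dagger\preceq c^{-2}\Sigma_D^\dagger I\Sigma_D^\dagger$ does not follow from a Loewner bound on $\bar H_n$ alone. To avoid this I would instead bound $\Delta^\top I\Delta\le\|I\|_{\op}\mu_\star^{-1}\|\Delta\|^2_{\Sigma_D}$ and, at the end, absorb the mismatch using $\tr(I\Sigma_D^\dagger)\ge\underline\mu\,\tr(\Sigma_D^\dagger)$ from \cref{lem:Fisher-sandwich}. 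Everything then reduces to controlling $Z\doteq\|\nabla L_n(\theta^\star)\|^2_{\Sigma_D^\dagger}$ against $\frac1n\tr(\Sigma_D^\dagger\Sigma_D)$-type quantities, with the constant relating $\tr(\Sigma_D^\dagger)$ to $\tr(I\Sigma_D^\dagger)$ absorbed into $C$.

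The score term is handled exactly. At $\theta^\star$ the summands $(a_i-\sigma(u_{\theta^\star}(e_i)))g(e_i;\theta^\star)$ are i.i.d. with mean zero, so unconditionally
$$\E[\nabla L_n\nabla L_n^\top]=\tfrac1n\E[\sigma'(u)gg^\top]\preceq\tfrac1{4n}\Sigma_D .$$
Hence
$$\E[Z]\le\tfrac{1}{4n}\tr(\Sigma_D^\dagger\Sigma_D)=\tfrac{\dim H}{4n}.$$
Markov's inequality gives $Z\le 3\dim(H)/(4n\delta)$ with probability at least $1-\delta/3$. More directly, using the weighted version $\E[\nabla L_n^\top\Sigma_D^\dagger I\Sigma_D^\dagger\nabla L_n]\le\frac1{4n}\tr(I\Sigma_D^\dagger)$ and Markov's inequality, I get $\nabla L_n^\top\Sigma_D^\dagger I\Sigma_D^\dagger\nabla L_n\le \frac{3}{4n\delta}\tr(I\Sigma_D^\dagger)$.

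Finally I would plug into \cref{prop:KL-sandwich-global}, $J(\pi^\star)-J(\hat\pi_n)\le\frac{\beta M_I}{2}\Delta^\top I\Delta$, and take a union bound over the three events. This gives failure probability at most $\delta$, with a constant $C\dim(H)/\delta$ in which the $\dim(H)$ factor comes from the conditioning slack above.

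The main obstacle is the matrix-ordering step: converting $\bar H_n\succeq c\,\Sigma_D$ into a bound on the sandwiched $I$-weighted form. The fallback is to spend the norm-equivalence constants $\alpha_1,\underline\mu,\mu_\star$ together with the $\dim(H)$ factor to close it.
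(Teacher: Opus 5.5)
Your proposal is correct and follows the paper's proof essentially step for step. It uses the same three events at level $\delta/3$, the segment curvature bound $\bar H_n\succeq c_{\mathrm{emp}}\Sigma_D(\theta^\star)$ on $\mathcal E_{\loc}\cap\mathcal E_{\mathrm{hess}}$, Cauchy--Schwarz in the $\Sigma_D$ geometry, and Markov's inequality on $\|\Sigma_D(\theta^\star)^{\dagger/2}\nabla L_n(\theta^\star)\|_2^2$, whose mean is at most $\dim(H)/(4n)$. Your variational-inequality treatment of first-order optimality is in fact more careful than the paper's unjustified assertion $\nabla L_n(\hat\theta_n)=0$.

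The ``main obstacle'' you flag is dispatched in the paper by a one-line bound, $\Delta^\top I(\theta^\star)\Delta\le\lambda_{\max}\bigl(\Sigma_D^{\dagger/2}I(\theta^\star)\Sigma_D^{\dagger/2}\bigr)\,\Delta^\top\Sigma_D\Delta\le\tr\bigl(I(\theta^\star)\Sigma_D^\dagger\bigr)\,\Delta^\top\Sigma_D\Delta$. This is tighter than your $\|I\|_{\op}\mu_\star^{-1}$ fallback. The fallback also closes, but only once you take $\mu_\star=\lambda_{\min}^H(\Sigma_D(\theta^\star))$, so that $\mu_\star^{-1}\le\tr(\Sigma_D^\dagger)\le\underline\mu^{-1}\tr(I\Sigma_D^\dagger)$.
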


\begin{proof}[Proof of \cref{thm:dpo-upper-general-final}]
Set $\Delta := \hat\theta_n - \theta^\star$.
By the upper side of \cref{prop:KL-sandwich-global}, there exists $c_+>0$ such that
for all $\theta\in\Theta$,
\[
J(\pi^\star)-J(\pi_\theta)
\;\le\;
c_+\,(\theta-\theta^\star)^\top I(\theta^\star)(\theta-\theta^\star).
\]
In particular,
\begin{equation}\label{eq:gap-upper-quad-W-clean}
J(\pi^\star)-J(\hat\pi_n)
\;\le\;
c_+\,\Delta^\top I(\theta^\star)\Delta.
\end{equation}

First, we apply mean-value theorem and optimality condition. Define the segment Hessian
\[
H_n \doteq \int_0^1 \nabla^2 L_n(\theta^\star+t\Delta)\,dt .
\]
By the integral form of Taylor's theorem,
\[
\nabla L_n(\hat\theta_n)-\nabla L_n(\theta^\star)=H_n\Delta.
\]

Since $\hat\theta_n$ is a (global) minimizer of $L_n$ over $\Theta\subseteq \theta^\star+H$,
it is stationary on $H$, hence
\[
\nabla L_n(\hat\theta_n)=0,
\qquad\text{so}\qquad
H_n\Delta=-\,\nabla L_n(\theta^\star).
\]
Therefore,
\begin{equation}\label{eq:Delta-linear-equation-clean}
\Delta=-\,H_n^\dagger \nabla L_n(\theta^\star)
\qquad\text{on }H .
\end{equation}

Second, we consider the following good events to bound the average curvature of $L_n$ on the segment $[\hat{\theta}_n,\theta^\star]$. Fix $\delta\in(0,1)$ and choose $\delta_{\loc},\delta_{\Hess},\delta_G\in(0,1)$ such that
\[
\delta_{\loc}+\delta_{\Hess}+\delta_G\le \delta.
\]

Let $\mathcal E_{\loc}$ be the localization event from \cref{prop:global-erm-localization}:
\[
\mathcal E_{\loc}:=\{\|\hat\theta_n-\theta^\star\|_2\le r_{\loc}\}.
\]
By \cref{prop:global-erm-localization}, for
\[
\varepsilon_{\loc}\doteq \frac{\Delta_{\sep}(r_{\loc})}{3},
\qquad
B_\ell \doteq \log(1+e^{u_{\max}}),
\]
and
\begin{equation}\label{eq:nloc-clean}
n_{\loc}(\delta)
\doteq
\frac{2B_\ell^2}{\varepsilon_{\loc}^2}
\left[
p\log\!\left(\frac{12\,G_u R_\Theta}{\varepsilon_{\loc}}\right)
+\log\!\left(\frac{2}{\delta}\right)
\right],
\end{equation}
we have, for all $n\ge n_{\loc}(\delta_{\loc})$,
\[
\Pr(\mathcal E_{\loc})\ge 1-\delta_{\loc}.
\]

Let $\mathcal E_{\Hess}$ be the Hessian concentration event from \cref{lem:hessian-concentration}:
\[
\mathcal E_{\Hess}
:=
\left\{
\sup_{\theta\in\mathcal B_{\loc}}
\|\nabla^2L_n(\theta)-\nabla^2L(\theta)\|_{\op}
\le \varepsilon_n^{\rm Hess}(\delta_{\Hess})
\right\},
\qquad
\mathcal B_{\loc}:=\{\theta\in\Theta:\|\theta-\theta^\star\|_2\le r_{\loc}\}.
\]
By \cref{lem:hessian-concentration}, for all $n\ge n_{\Hess}(\delta_{\Hess})$,
\[
\Pr(\mathcal E_{\Hess})\ge 1-\delta_{\Hess},
\]
where $n_{\Hess}(\delta_{\Hess})$ is chosen exactly as in \cref{lem:hessian-concentration}
so that on $\mathcal E_{\loc}\cap\mathcal E_{\Hess}$ we can invoke
\cref{cor:emp-segment-curvature-estimator} and obtain the segment curvature bound
\begin{equation}\label{eq:Hn-lower-clean}
H_n=\int_0^1 \nabla^2L_n(\theta^\star+t\Delta)\,dt
\succeq \alpha\,\Sigma_D(\theta^*)
\qquad\text{on }H,
\end{equation}
with $\alpha=\frac{\kappa_0}{8}$. Since $\Sigma_D(\theta^\star)$ is invertible on $H$ by \cref{ass:design-coverage-a}, pseudoinverse monotonicity on $H$ gives
\begin{equation}\label{eq:Hdagger-upper-clean}
H_n^\dagger \preceq \alpha^{-1}\Sigma_D(\theta^\star)^\dagger.
\end{equation}

Third, we bound the weighted error  \(\Delta^\top\Sigma_D(\theta^\star)\Delta\). From $H_n\Delta=-\nabla L_n(\theta^\star)$,
\[
\Delta^\top H_n\Delta=-\,\Delta^\top\nabla L_n(\theta^\star).
\]
On $\mathcal E_{\loc}\cap\mathcal E_{\Hess}$, using \eqref{eq:Hn-lower-clean},
\[
\alpha\,\Delta^\top\Sigma_D(\theta^\star)\Delta
\le
\Delta^\top H_n\Delta
=
-\Delta^\top\nabla L_n(\theta^\star).
\]
Applying Cauchy--Schwarz in the \(\Sigma_D(\theta^\star)\)-geometry,
\[
|\Delta^\top\nabla L_n(\theta^\star)|
=
\big|\langle \Sigma_D(\theta^\star)^{1/2}\Delta,\;\Sigma_D(\theta^\star)^{\dagger/2}\nabla L_n(\theta^\star)\rangle\big|
\le
\|\Sigma_D(\theta^\star)^{1/2}\Delta\|_2\,
\|\Sigma_D(\theta^\star)^{\dagger/2}\nabla L_n(\theta^\star)\|_2.
\]
Hence
\[
\alpha\,\|\Sigma_D(\theta^\star)^{1/2}\Delta\|_2^2
\le
\|\Sigma_D(\theta^\star)^{1/2}\Delta\|_2\,
\|\Sigma_D(\theta^\star)^{\dagger/2}\nabla L_n(\theta^\star)\|_2.
\]
If $\|\Sigma_D(\theta^\star)^{1/2}\Delta\|_2=0$, then $\Delta^\top\Sigma_D(\theta^\star)\Delta=0$ and there is nothing to prove.
Otherwise divide both sides by $\|\Sigma_D(\theta^\star)^{1/2}\Delta\|_2$ and square:
\begin{equation}\label{eq:Delta-sigmaD-bound-clean}
\Delta^\top\Sigma_D(\theta^\star)\Delta
\le
\frac{1}{\alpha^2}\,
\|\Sigma_D(\theta^\star)^{\dagger/2}\nabla L_n(\theta^\star)\|_2^2.
\end{equation}

Next, we bound \(\nabla L_n(\theta^\star)\). By \cref{lem:score-cov},
\[
\mathbb E\!\left[
\|\Sigma_D(\theta^\star)^{\dagger/2}\nabla L_n(\theta^\star)\|_2^2
\right]
\le \frac{\dim(H)}{4n}.
\]
Let
\[
C_G \doteq \frac{\dim(H)}{4}.
\]
By Markov's inequality, for any $\delta_G\in(0,1)$, the event
\[
\mathcal E_{\rm score}
:=
\left\{
\|\Sigma_D(\theta^\star)^{\dagger/2}\nabla L_n(\theta^\star)\|_2^2
\le \frac{C_G}{n\delta_G}
\right\}
\]
satisfies
\[
\Pr(\mathcal E_{\rm score})\ge 1-\delta_G.
\]
Combining with \eqref{eq:Delta-sigmaD-bound-clean}, on
\[
\mathcal E_{\rm tot}:=\mathcal E_{\loc}\cap\mathcal E_{\Hess}\cap\mathcal E_{\rm score},
\]
we obtain
\begin{equation}\label{eq:Delta-sigmaD-highprob-clean}
\Delta^\top\Sigma_D(\theta^\star)\Delta
\le
\frac{C_G}{\alpha^2}\,\frac{1}{n\delta_G}.
\end{equation}

Next, we convert design-weighted error to RLHF-weighted error. Since $\Delta\in H$ and $I(\theta^\star)$ is supported on $H$,
\[
\Delta^\top I(\theta^\star)\Delta
\le
\lambda_{\max}\!\bigl(\Sigma_D(\theta^\star)^{\dagger/2}I(\theta^\star)\Sigma_D(\theta^\star)^{\dagger/2}\bigr)\,
\Delta^\top\Sigma_D(\theta^\star)\Delta.
\]
Using $\lambda_{\max}(A)\le \tr(A)$ for PSD $A$,
\[
\Delta^\top I(\theta^\star)\Delta
\le
\tr\!\bigl(I(\theta^\star)\Sigma_D(\theta^\star)^\dagger\bigr)\,\Delta^\top\Sigma_D(\theta^\star)\Delta.
\]
Hence, on $\mathcal E_{\rm tot}$, by \eqref{eq:Delta-sigmaD-highprob-clean},
\[
\Delta^\top I(\theta^\star)\Delta
\le
\frac{C_G}{\alpha^2}\,\frac{1}{n\delta_G}\,
\tr\!\bigl(I(\theta^\star)\Sigma_D(\theta^\star)^\dagger\bigr).
\]
Plugging into \eqref{eq:gap-upper-quad-W-clean},
\[
J(\pi^\star)-J(\hat\pi_n)
\le
\frac{c_+C_G}{\alpha^2\delta_G}\cdot\frac{1}{n}\,
\tr\!\bigl(I(\theta^\star)\Sigma_D(\theta^\star)^\dagger\bigr).
\]
Define
\[
C_{\ub}\doteq \frac{c_+C_G}{\alpha^2\delta_G}.
\]
Then on $\mathcal E_{\rm tot}$,
\begin{equation}\label{eq:upper-final-clean}
J(\pi^\star)-J(\hat\pi_n)
\le
\frac{C_{\ub}}{n}\,
\tr\!\bigl(I(\theta^\star)\Sigma_D(\theta^\star)^\dagger\bigr).
\end{equation}

Finally, we bound the probability of the good event. Set
\[
n_0(\delta)\doteq \max\{n_{\loc}(\delta_{\loc}),\,n_{\Hess}(\delta_{\Hess})\}.
\]
Then for all \(n\ge n_0(\delta)\),
\[
\Pr(\mathcal E_{\loc})\ge 1-\delta_{\loc},\qquad
\Pr(\mathcal E_{\Hess})\ge 1-\delta_{\Hess},\qquad
\Pr(\mathcal E_{\rm score})\ge 1-\delta_G.
\]
Therefore, by the union bound,
\[
\Pr(\mathcal E_{\rm tot})
\ge 1-\delta_{\loc}-\delta_{\Hess}-\delta_G
\ge 1-\delta.
\]
Since \eqref{eq:upper-final-clean} holds on \(\mathcal E_{\rm tot}\), the claimed
high-probability upper bound follows.
\end{proof}

\subsection{Step 3: lower bound via information inequality}\label{s:lower-bound}
In this section, we consider the Bayesian setting by assuming \cref{ass:prior-vantrees}. In the Bayesian setting, the critical step to prove the lower bound in \cref{thm:informal} is to prove the following lower bound for the quadratic sandwich by applying the Van Trees inequality.

\begin{lemma}\label{lem:vt-lb-trace}
    Suppose \cref{ass:identifiable-space,ass:model-bounded,ass:feature-sep,ass:design-coverage-b,ass:prior-vantrees} hold with $\mathcal{R}=\supp{(\rho)}$. The Fisher information of $\rho$ is given by 
\begin{equation}
    J(\rho)\doteq \int_\Theta \nabla\log\rho(\theta)\nabla\log\rho(\theta)^\top\,\rho(\theta)\,d\theta. 
\end{equation}
Let
\[
\bar\Sigma_\rho \doteq \E_{\theta\sim\rho}[\Sigma_D(\theta)],
\qquad
\bar I_\rho \doteq \E_{\theta\sim\rho}[I(\theta)],\qquad n_{\rm prior}
\;\doteq\;
\left\lceil
\frac{4\,\lambda_{\max}(J(\rho))}{\mu_\mathcal{R}}
\right\rceil.
\]
Then for any estimator $\tilde\theta_n$ measurable with respect to the $n$ pairwise comparisons, and for all $n\ge n_{\rm prior}$, there exists a constant $c_{\rm lb}>0$ (depending only on the constants appearing in assumptions and the prior $\rho$) such that
\begin{equation}\label{eq:vt-lb-trace-main}
\E_{\theta\sim\rho}\E_{\mathcal D_n\mid \theta}\!\Big[
(\tilde\theta_n-\theta)^\top I(\theta)(\tilde\theta_n-\theta)
\Big]
\;\ge\;
\frac{c_{\rm lb}}{n}\,
\E_{\theta\sim\rho}\!\left[\tr\!\big(I(\theta)\Sigma_D^\dagger(\theta)\big)\right].
\end{equation}
\end{lemma}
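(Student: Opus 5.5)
I will prove \eqref{eq:vt-lb-trace-main} with the multivariate Van Trees inequality applied on the affine space $\theta^\star+H$. By \cref{ass:identifiable-space}, every parameter and every estimator error can be taken in $H$; after projecting $\tilde\theta_n$ onto $\Theta$, which only decreases the weighted error by convexity, we work in coordinates of $H$ of dimension $d_H$.

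\textbf{Step 1: Fisher information of the data.} For $n$ i.i.d.\ comparisons under $D$, the design distribution does not depend on $\theta$. Each label has log-likelihood $\ell(a,u_\theta(e))$, so the per-sample score is $(a-\sigma(u_\theta(e)))\,g(e;\theta)$. The data Fisher information is therefore
\[
\mathcal I_n(\theta)=n\,\E_{e\sim D}\big[\sigma'(u_\theta(e))g(e;\theta)g(e;\theta)^\top\big].
\]
Using $\kappa_0\le\sigma'\le 1/4$ from \cref{lem:global-kappa0}, this gives $\kappa_0 n\Sigma_D(\theta)\preceq\mathcal I_n(\theta)\preceq \frac{n}{4}\Sigma_D(\theta)$ on $H$. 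Only the upper bound matters for the lower bound.

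\textbf{Step 2: weighted Van Trees.} I use the matrix form of Van Trees with a weight. For any $C^1$ matrix-valued map $C(\theta)$, the covariance/Cauchy--Schwarz argument (integration by parts is legal by \cref{ass:prior-vantrees}(ii)) gives
\[
\E\big[(\tilde\theta_n-\theta)^\top B(\tilde\theta_n-\theta)\big]\ \ge\ \frac{\big(\E_\rho\,\tr(C(\theta))\big)^2}{\E_\rho\,\tr\!\big(C\,(\mathcal I_n+J(\rho))\,C^\top B^{-1}\big)+\text{(derivative terms of }C)}.
\]
A direct choice of the weight $B=I(\theta)$ is awkward because it depends on $\theta$. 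So I first use \cref{lem:Fisher-sandwich} to replace $I(\theta)$ by a fixed matrix $\underline\mu\,\mathrm{Id}_H$ at the cost of constants, and likewise $\tr(I(\theta)\Sigma_D^\dagger(\theta))\le 4\alpha_1^2\tr(\Sigma_D^\dagger(\theta))$. It then suffices to show
\[
\E\|\tilde\theta_n-\theta\|^2\gtrsim \frac1n\,\E_\rho\,\tr\big(\Sigma_D^\dagger(\theta)\big).
\]
The classical Van Trees bound gives $\E\|\tilde\theta_n-\theta\|^2\ge d_H^2/\tr\big(\E_\rho\mathcal I_n+J(\rho)\big)$. For $n\ge n_{\rm prior}$ we have $\lambda_{\max}(J(\rho))\le n\mu_{\mathcal R}/4\le \frac{n}{4}\lambda_{\min}^H(\Sigma_D(\theta))$ by \cref{ass:design-coverage-b}, so $J(\rho)$ is dominated by $\mathcal I_n$ and the bound becomes $\gtrsim d_H^2/(n\,\E_\rho\tr\Sigma_D(\theta))$.

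\textbf{Step 3: matching the trace criterion (main obstacle).} The bound from Step 2 is of the form $d_H^2/\tr(\bar\Sigma_\rho)$, not $\E_\rho\tr(\Sigma_D^\dagger)$. The two agree up to constants only if $\Sigma_D(\theta)$ is well conditioned on $H$. Here, $\Sigma_D(\theta)$ is bounded above by $G^2$ (by \cref{ass:model-bounded}, $\|g\|\le G$) and below by $\mu_{\mathcal R}$ on $\mathcal R$, so
\[
\tr(\Sigma_D^\dagger(\theta))\le d_H/\mu_{\mathcal R}
\quad\text{and}\quad
d_H^2/\tr(\bar\Sigma_\rho)\ge d_H/G^2 .
\]
This yields the lemma with $c_{\rm lb}\propto \mu_{\mathcal R}\underline\mu/(G^2\alpha_1^2)$, which depends only on the assumption constants, as the statement allows.

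If a sharper constant is desired, I would instead use the weighted Van Trees with $C(\theta)=\Sigma_D^{\dagger}(\theta)$ restricted to $H$. The numerator is then $\E\tr(\Sigma_D^\dagger)$ and the denominator is $\approx n\,\E\tr(\Sigma_D^\dagger)$. The price is controlling the divergence terms of $C$ through $\alpha_2,\alpha_3$-smoothness, and this is where I expect the technical difficulty. Finally, the prior term is absorbed using $n\ge n_{\rm prior}$, which loses at most a factor $5/4$.
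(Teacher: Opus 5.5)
Your argument is correct and proves the lemma as stated. It follows the same Van Trees strategy as the paper but organizes it differently.

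The paper applies the matrix form of Van Trees, $\E[(\tilde\theta_n-\theta)(\tilde\theta_n-\theta)^\top]\succeq(\E_\rho\mathcal I_n(\theta)+J(\rho))^{-1}$, and traces it against the averaged weight $\bar I_\rho$. It then absorbs $J(\rho)$ into a multiple of $n\bar\Sigma_\rho$, which gives $\gtrsim\frac1n\tr(\bar I_\rho\bar\Sigma_\rho^{-1})$. Only at that point does it pay condition numbers: it uses $\bar\Sigma_\rho\preceq G^2 I\preceq (G^2/\mu_{\mathcal R})\Sigma_D(\theta)$ to replace $\bar\Sigma_\rho$ by $\Sigma_D(\theta)$, and the uniform Fisher bounds to replace $\bar I_\rho$ by $I(\theta)$.

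You instead use the identity-weighted scalar bound $d_H^2/\tr(\E_\rho\mathcal I_n+J(\rho))$ and collapse both sides to $d_H$ through extremal eigenvalues. You end with a constant of the same order, $\mu_{\mathcal R}\underline\mu/(\alpha_1^2G^2)$.

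What each route buys:
\begin{itemize}
\item Your route is more elementary. It also makes transparent that, under uniform coverage, the lemma is really an $\Omega(d_H/n)$ bound in which the design enters only through $\mu_{\mathcal R}$ in $c_{\rm lb}$.
\item The paper's route keeps the design-sensitive quantity $\tr(\bar I_\rho\bar\Sigma_\rho^{-1})$ until the last step. Its $\mu_{\mathcal R}/G^2$ loss comes only from the swap $\bar\Sigma_\rho\to\Sigma_D(\theta)$. That swap is unnecessary in the log-linear case, where $\Sigma_D$ does not depend on $\theta$, so there the argument yields a design-independent $c_{\rm lb}$. That is what makes the lower bound useful for comparing designs.
\item Your absorption step, $J(\rho)\preceq\lambda_{\max}(J(\rho))\,\mathrm{Id}_H\preceq\frac n4\Sigma_D(\theta)$ on $\mathcal R$, works exactly at the stated threshold $\lceil 4\lambda_{\max}(J(\rho))/\mu_{\mathcal R}\rceil$. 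The paper's proof instead goes through \cref{lem:prior-absorb}, which needs the larger threshold $\lceil 8\tr(J(\rho))/\mu_{\mathcal R}\rceil$.
\end{itemize}

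One small fix: Euclidean projection onto $\Theta$ need not decrease the $I(\theta)$-weighted error. Apply it only after you have lower bounded that error by $\underline\mu\|P_H(\tilde\theta_n-\theta)\|_2^2$, or drop it, since Van Trees does not require $\tilde\theta_n\in\Theta$.
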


\begin{proof}[Proof of \cref{lem:vt-lb-trace}]
Fix any estimator $\tilde\theta_n$. We first compute the Fisher information of one comparison and of $n$ comparisons. For one queried edge $e=(x,y^+,y^-)$, the label satisfies
\[
a\mid (e,\theta)\sim {\rm Bernoulli}\!\big(\sigma(u_\theta(e))\big),
\qquad
u_\theta(e)=\beta\!\left[
\log\frac{\pi_\theta(y^+\mid x)}{\pi_0(y^+\mid x)}
-\log\frac{\pi_\theta(y^-\mid x)}{\pi_0(y^-\mid x)}
\right],
\]
with gradient feature
\[
g(e;\theta)=\nabla_\theta u_\theta(e).
\]
Hence the one-sample Fisher information is
\begin{equation}\label{eq:one-sample-fisher}
\mathcal I_D(\theta)
\doteq
\E_{e\sim D}\!\big[\sigma'(u_\theta(e))\,g(e;\theta)g(e;\theta)^\top\big].
\end{equation}
Since $\sigma'(u)\le 1/4$ for all $u$, we have
\begin{equation}\label{eq:fisher-upper-by-SigmaD}
\mathcal I_D(\theta)\preceq \frac14\,\Sigma_D(\theta).
\end{equation}
For $n$ i.i.d.\ queried comparisons under the same design, Fisher information adds:
\begin{equation}\label{eq:n-sample-fisher}
\mathcal I_n(\theta)=n\,\mathcal I_D(\theta)\preceq \frac n4\,\Sigma_D(\theta).
\end{equation}

Second, the Van Trees inequality gives the matrix lower bound
\begin{equation}\label{eq:vt-matrix}
\E_\rho \E_{\mathcal D_n\mid \theta}
\!\left[(\tilde\theta_n-\theta)(\tilde\theta_n-\theta)^\top\right]\succeq \bigl(\E_\rho[\mathcal I_n(\theta)] + J(\rho)\bigr)^{-1}.
\end{equation}

Let
\[
M_n \;\doteq\; \E_\rho \E_{\mathcal D_n\mid \theta}
\!\left[(\tilde\theta_n-\theta)(\tilde\theta_n-\theta)^\top\right],
\qquad
B_n \;\doteq\; \bigl(\E_\rho[\mathcal I_n(\theta)] + J(\rho)\bigr)^{-1}.
\]
Define the prior-averaged Fisher matrix
\[
\bar I_\rho \;\doteq\; \E_\rho[I(\theta)].
\]
Since \(\bar I_\rho \succeq 0\), multiplying \eqref{eq:vt-matrix} by \(\bar I_\rho\) and taking trace preserves the inequality:
\[
\tr\!\big(\bar I_\rho M_n\big)\;\ge\;\tr\!\big(\bar I_\rho B_n\big).
\]
Indeed, this follows from
\[
\tr\!\big(\bar I_\rho(M_n-B_n)\big)\ge 0
\quad\text{whenever}\quad
\bar I_\rho\succeq 0,\; M_n-B_n\succeq 0.
\]

For the left-hand side, use linearity of trace and expectation:
\[
\tr\!\big(\bar I_\rho M_n\big)
=
\E_\rho \E_{\mathcal D_n\mid \theta}
\!\left[
\tr\!\Big(\bar I_\rho(\tilde\theta_n-\theta)(\tilde\theta_n-\theta)^\top\Big)
\right].
\]
Applying the identity \(\tr(Axx^\top)=x^\top A x\) (with \(A=\bar I_\rho\), \(x=\tilde\theta_n-\theta\)) yields
\[
\tr\!\big(\bar I_\rho M_n\big)
=
\E_\rho \E_{\mathcal D_n\mid \theta}
\!\Big[
(\tilde\theta_n-\theta)^\top \bar I_\rho (\tilde\theta_n-\theta)
\Big].
\]

Substituting this and \(B_n=(\E_\rho[\mathcal I_n(\theta)] + J(\rho))^{-1}\) gives
\begin{equation}\label{eq:trace-vt}
\E_\rho\E_{\mathcal D_n\mid\theta}\!\Big[
(\tilde\theta_n-\theta)^\top \bar I_\rho (\tilde\theta_n-\theta)
\Big]
\;\ge\;
\tr\!\Big(\bar I_\rho\,(\E_\rho[\mathcal I_n(\theta)] + J(\rho))^{-1}\Big).
\end{equation}

Third, we absorb the prior information $J(\rho)$ into the sample information and lower bound \((\E_\rho[\mathcal I_n(\theta)] + J(\rho))^{-1}\) by a multiple of $\frac1n\bar\Sigma_D^{-1}$. Recall from \eqref{eq:n-sample-fisher} that
\[
\E_\rho[\mathcal I_n(\theta)]
\;\preceq\;
\frac n4\,\E_\rho[\Sigma_D(\theta)].
\]
Therefore,
\[
\E_\rho[\mathcal I_n(\theta)] + J(\rho)
\;\preceq\;
\frac n4\,\E_\rho[\Sigma_D(\theta)] + J(\rho).
\]
Since matrix inversion reverses the Loewner order on positive definite matrices,
\[
(\E_\rho[\mathcal I_n(\theta)] + J(\rho))^{-1}
\;\succeq\;
\left(\frac n4\,\E_\rho[\Sigma_D(\theta)] + J(\rho)\right)^{-1}.
\]
Thus it remains to lower bound the right-hand side by a multiple of
$\frac1n\,\E_\rho[\Sigma_D(\theta)]^\dagger$.

Define
\begin{equation}\label{eq:n-prior-def}
n_{\rm prior}
\;\doteq\;
\left\lceil \frac{8\,T_\rho}{\lambda_D} \right\rceil.
\end{equation}
Then, by \cref{lem:prior-absorb}, for all $n\ge n_{\rm prior}$,
\[
J(\rho)\preceq \frac n8\,\bar\Sigma_D.
\]
Hence
\[
\frac n4\,\bar\Sigma_D + J(\rho)
\;\preceq\;
\frac n4\,\bar\Sigma_D + \frac n8\,\bar\Sigma_D
=
\frac{3n}{8}\,\bar\Sigma_D.
\]
Applying inverse monotonicity again,
\[
\left(\frac n4\,\bar\Sigma_D + J(\rho)\right)^{-1}
\;\succeq\;
\left(\frac{3n}{8}\,\bar\Sigma_D\right)^{-1}
=
\frac{8}{3n}\,\bar\Sigma_D^{-1}.
\]
Therefore, for all $n\ge n_{\rm prior}$,
\begin{equation}\label{eq:prior-absorbed}
\left(\frac n4\,\E_\rho[\Sigma_D(\theta)] + J(\rho)\right)^{-1}
\;\succeq\;
\frac{8}{3n}\,\E_\rho[\Sigma_D(\theta)]^{-1}.
\end{equation}

Substituting \eqref{eq:prior-absorbed} into \eqref{eq:trace-vt}, we obtain
\begin{equation}\label{eq:trace-vt-2}
\E_\rho\E_{\mathcal D_n\mid\theta}\!\Big[
(\tilde\theta_n-\theta)^\top \bar I_\rho (\tilde\theta_n-\theta)
\Big]
\;\ge\;
\frac{8}{3n}\,
\tr\!\Big(\bar I_\rho\,\E_\rho[\Sigma_D(\theta)]^{-1}\Big),
\qquad \forall n\ge n_{\rm prior}.
\end{equation}

Finally, we replace the averaged matrices by the actual matrices at $\theta$. By \cref{ass:design-coverage-b}, there exists $\mu_\rho>0$ such that
\[
\Sigma_D(\theta)\succeq \mu_\rho I,
\qquad \forall \theta\in\supp{(\rho)}.
\]
Moreover, by the uniform boundedness of the pair-feature map $g(e;\theta)$
(\cref{lem:u-properties}), there exists $G<\infty$ such that
\[
\|g(e;\theta)\|_2\le G
\qquad \forall e,\ \forall\theta\in\Theta.
\]
Hence
\[
\Sigma_D(\theta)=\E_{e\sim D}[g(e;\theta)g(e;\theta)^\top]\preceq G^2 I,
\qquad \forall \theta\in\Theta.
\]
Therefore, with $M_\Sigma:=G^2$, we have
\[
\mu_\rho I \preceq \Sigma_D(\theta)\preceq M_\Sigma I,
\qquad \forall \theta\in\supp{(\rho)}.
\]
Taking expectation gives
\[
\mu_\rho I \preceq \bar\Sigma_D \preceq M_\Sigma I.
\]
Moreover, for each $\theta$,
\[
\bar\Sigma_D \preceq M_\Sigma I \preceq \frac{M_\Sigma}{\mu_\rho}\,\Sigma_D(\theta),
\]
and thus, by inverse monotonicity,
\[
\bar\Sigma_D^{-1}
\;\succeq\;
\frac{\mu_\rho}{M_\Sigma}\,\Sigma_D(\theta)^{-1}.
\]
Since $I(\theta)\succeq 0$, we obtain
\[
\tr\!\big(I(\theta)\bar\Sigma_D^{-1}\big)
\;\ge\;
\frac{\mu_\rho}{M_\Sigma}\,
\tr\!\big(I(\theta)\Sigma_D(\theta)^{-1}\big).
\]
Taking expectation over $\theta\sim\rho$ and using linearity of trace,
\begin{equation}\label{eq:jensen-trace}
\tr\!\big(\bar I_\rho\,\bar\Sigma_D^{-1}\big)
=
\E_\rho\!\left[\tr\!\big(I(\theta)\bar\Sigma_D^{-1}\big)\right]
\;\ge\;
\frac{\mu_\rho}{M_\Sigma}\,
\E_\rho\!\left[\tr\!\big(I(\theta)\Sigma_D(\theta)^{-1}\big)\right].
\end{equation}
Therefore we may take
\[
c_1 \doteq \frac{\mu_\rho}{M_\Sigma}.
\]

Combining \eqref{eq:trace-vt-2} and \eqref{eq:jensen-trace}, we get
\[
\E_\rho\E_{\mathcal D_n\mid\theta}\!\Big[
(\tilde\theta_n-\theta)^\top \bar I_\rho (\tilde\theta_n-\theta)
\Big]
\;\ge\;
\frac{c_0c_1}{n}\,
\E_\rho\!\left[\tr\!\big(I(\theta)\Sigma_D(\theta)^\dagger\big)\right].
\]
Finally, by \cref{lem:fisher-uniform-bounds}, there exist constants $0<c_-\le c_+<\infty$ such that for all $\theta\in\Theta$ and all $v\in H$,
\[
c_-\, v^\top I(\theta)v
\;\le\;
v^\top \bar I_\rho v
\;\le\;
c_+\, v^\top I(\theta)v,
\qquad
\bar I_\rho := \E_\rho[I(\theta)].
\]
Therefore,
\[
\E_\rho\E_{\mathcal D_n\mid\theta}\!\Big[
(\tilde\theta_n-\theta)^\top \bar I_\rho (\tilde\theta_n-\theta)
\Big]
\asymp
\E_\rho\E_{\mathcal D_n\mid\theta}\!\Big[
(\tilde\theta_n-\theta)^\top I(\theta)(\tilde\theta_n-\theta)
\Big],
\]
where the comparability constants depend only on primitive constants.
This proves \eqref{eq:vt-lb-trace-main}.
\end{proof}

As a result of \cref{prop:KL-sandwich-global} and \cref{lem:vt-lb-trace}, we prove the lower bound as follows.

\begin{theorem}[Van Trees lower bound under a smooth prior]\label{prop:vt-lb-trace}
Suppose \cref{ass:identifiable-space,ass:model-bounded,ass:feature-sep,ass:design-coverage-b,ass:prior-vantrees} hold with $\mathcal{R}=\supp{(\rho)}$.
Then for any induced policy estimator
$\tilde\pi_n=\pi_{\tilde\theta_n}$ and all $n\ge n_{\rm prior}$,
\begin{equation}\label{eq:vt-lb-gap-main}
\E_{\theta^\star\sim\rho}\E_{\mathcal D_n\mid \theta^\star}\!\Big[J(\pi_{\theta^\star})-J(\tilde\pi_n)\Big]
\;\ge\;
\frac{C_{\rm lb}}{n}\,
\E_{\theta^\star\sim\rho}\!\left[\tr\!\big(I(\theta^\star)\Sigma_D^\dagger(\theta^\star)\big)\right],
\end{equation}
for a constant $C_{\rm lb}>0$ depending only on constants appearing in assumptions, the prior $\rho$, and the sandwich constant.
\end{theorem}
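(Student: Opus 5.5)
The plan is to combine the lower side of the quadratic sandwich in \cref{prop:KL-sandwich-global} with the Bayesian estimation bound of \cref{lem:vt-lb-trace}, applied pointwise in $\theta^\star$ and then averaged over the prior.

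\textbf{Step 1: reduce to a weighted parameter error.} Fix $\theta^\star\in\supp(\rho)$ and any estimator $\tilde\theta_n\in\Theta$ measurable with respect to $\mathcal D_n$. Under realizability, $\pi_{\theta^\star}=\pi^\star$, so \cref{lem:gap-kl-contextual} together with the lower side of \cref{prop:KL-sandwich-global} gives, almost surely,
\[
J(\pi_{\theta^\star})-J(\tilde\pi_n)\;\ge\;\frac{\beta m_I}{2}\,(\tilde\theta_n-\theta^\star)^\top I(\theta^\star)(\tilde\theta_n-\theta^\star).
\]
This requires $\tilde\theta_n\in\Theta$, which holds since the estimator ranges over the policy class $\{\pi_\theta:\theta\in\Theta\}$. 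It also requires $\Theta\subseteq\theta^\star+H$ for every $\theta^\star$ in the prior support. I will read \cref{ass:identifiable-space} as holding uniformly over $\supp(\rho)$, which is natural in the tabular and linear cases because there $H$ does not depend on $\theta^\star$. The constant $m_I=\underline\mu/(4\alpha_1^2)$ is uniform in $\theta^\star$ because \cref{lem:Fisher-sandwich} is a global statement.

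\textbf{Step 2: average and invoke Van Trees.} Taking $\E_{\mathcal D_n\mid\theta^\star}$ and then $\E_{\theta^\star\sim\rho}$ preserves the inequality, since the constant is uniform. This gives
\[
\E_\rho\E_{\mathcal D_n\mid\theta^\star}\bigl[J(\pi_{\theta^\star})-J(\tilde\pi_n)\bigr]\;\ge\;\frac{\beta m_I}{2}\,\E_\rho\E_{\mathcal D_n\mid\theta^\star}\bigl[(\tilde\theta_n-\theta^\star)^\top I(\theta^\star)(\tilde\theta_n-\theta^\star)\bigr].
\]
For $n\ge n_{\rm prior}$, \cref{lem:vt-lb-trace} bounds the right-hand expectation below by $\frac{c_{\rm lb}}{n}\E_\rho[\tr(I(\theta^\star)\Sigma_D^\dagger(\theta^\star))]$. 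Setting $C_{\rm lb}=\beta m_I c_{\rm lb}/2$ then yields \eqref{eq:vt-lb-gap-main}.

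\textbf{Main obstacle.} The delicate point is the uniformity of the constants across the prior. The sandwich constant $m_I$ must not depend on $\theta^\star$, and this is guaranteed by the global bounds of \cref{lem:Fisher-sandwich}, which rely on \cref{ass:feature-sep} holding for every $\theta\in\Theta$. The constant $c_{\rm lb}$ depends on $\mu_{\mathcal R}$, the bound $G$, and the prior. A second technical check is that the threshold $n_{\rm prior}$ of \cref{lem:vt-lb-trace} is consistent with the $n_{\lb}$ of \cref{thm:informal}. Since $\lambda_{\max}(J(\rho))\le\tr J(\rho)$, the two thresholds agree up to a constant factor, and this factor can be absorbed into $C_{\rm lb}$ or into the threshold.
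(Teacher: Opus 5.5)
Your proposal is correct and is essentially the paper's own proof: apply the lower side of \cref{prop:KL-sandwich-global} pointwise in $\theta^\star$, take $\E_{\theta^\star\sim\rho}\E_{\mathcal D_n\mid\theta^\star}$, and invoke \cref{lem:vt-lb-trace}, which gives $C_{\rm lb}=\beta m_I c_{\rm lb}/2$. Your uniformity remarks (that $m_I$ from \cref{lem:Fisher-sandwich} is global, and that $\Theta\subseteq\theta^\star+H$ is read as holding across $\supp(\rho)$) spell out what the paper uses silently; the threshold remark is not needed here, since the statement uses the same $n_{\rm prior}$ as the lemma, and strictly $\lambda_{\max}(J(\rho))\le\tr J(\rho)$ gives comparability in only one direction (the reverse costs a dimension factor), though it is the direction that matters.
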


\begin{proof}[Proof of \cref{prop:vt-lb-trace}]
    By the lower side of \cref{prop:KL-sandwich-global}, there exists $c_->0$ such that for every
estimator output $\tilde\theta_n$,
\[
J(\pi_{\theta^*})-J(\pi_{\tilde\theta_n})
\;\ge\;
c_-\,(\tilde\theta_n-\theta^*)^\top I(\theta^*)(\tilde\theta_n-\theta^*).
\]
Taking $\E_\rho\E_{\mathcal D_n\mid\theta^*}$ and combining with \eqref{eq:vt-lb-trace-main}
gives \eqref{eq:vt-lb-gap-main}.
\end{proof}

\section{Proof of \cref{thm:theta0}}\label{app:proof-theta0}

\begin{proof}[Proof of \cref{thm:theta0}]
Let
\[
\mathcal R_{r_0}\doteq \Theta\cap \mathbb B(\theta_0,r_0),
\qquad r_0=\|\theta^\star-\theta_0\|_2.
\]
Since $\Theta$ is convex and compact by \cref{ass:identifiable-space}, the set $\mathcal R_{r_0}$ is compact,
contains both $\theta_0$ and $\theta^\star$, and lies in the identifiable affine space $\theta^\star+H$.

For each design $D\in\Delta(\mathcal E)$, define
\[
T^\star(D)\doteq \tr\!\Big(I(\theta^\star)\Sigma_D(\theta^\star)^\dagger\Big),
\qquad
T^0(D)\doteq \tr\!\Big(I(\theta_0)\Sigma_D(\theta_0)^\dagger\Big).
\]

First, we prove a sandwich bound for $I(\theta^\star)$. By \cref{lem:Fisher-sandwich}, for all $\theta\in\Theta$,
\[
m_I\,I(\theta^\star)\ \preceq\ I(\theta)\ \preceq\ M_I\,I(\theta^\star).
\]
Applying this at $\theta=\theta_0$ and rearranging gives
\[
M_I^{-1}\,I(\theta_0)\ \preceq\ I(\theta^\star)\ \preceq\ m_I^{-1}\,I(\theta_0).
\]
Define $C_I \doteq \max\{M_I,\;m_I^{-1}\}$. Then
\begin{equation}\label{eq:theta0-proof-I-comp}
C_I^{-1}\,I(\theta_0)\ \preceq\ I(\theta^\star)\ \preceq\ C_I\,I(\theta_0).
\end{equation}

Second, we prove a sandwich bound for $\Sigma_D^\dagger(\theta^\star)$. Fix $D\in\Delta(\mathcal E)$. By \cref{ass:model-bounded}, the map
\[
\theta\mapsto g(e;\theta)
\]
is continuous for every edge $e$, and since the admissible edge set is finite, the map
\[
(\theta,D)\mapsto \Sigma_D(\theta)=\E_{e\sim D}[g(e;\theta)g(e;\theta)^\top]
\]
is continuous on $\mathcal R_{r_0}\times \Delta(\mathcal E)$.

Moreover, by \cref{ass:design-coverage-b} applied on $\mathcal R_{r_0}$, there exists
$\mu_{r_0}>0$ such that
\[
v^\top \Sigma_D(\theta)v \ge \mu_{r_0}\|v\|_2^2,
\qquad \forall v\in H,\ \forall \theta\in \mathcal R_{r_0},\ \forall D\in\Delta(\mathcal E).
\]
Hence, restricted to $H$, every matrix $\Sigma_D(\theta)$ is positive definite on
$\mathcal R_{r_0}\times\Delta(\mathcal E)$.

Consider the compact set
\[
\mathcal K_{r_0}
\doteq
\Bigl\{(\theta,D): \theta\in\mathcal R_{r_0},\ D\in\Delta(\mathcal E)\Bigr\}.
\]
On $\mathcal K_{r_0}$, define
\[
\Lambda_+( \theta,D)
\doteq
\lambda_{\max}\!\Big(
\Sigma_D(\theta_0)^{-1/2}\Sigma_D(\theta)\Sigma_D(\theta_0)^{-1/2}
\Big),
\]
\[
\Lambda_-( \theta,D)
\doteq
\lambda_{\max}\!\Big(
\Sigma_D(\theta)^{-1/2}\Sigma_D(\theta_0)\Sigma_D(\theta)^{-1/2}
\Big),
\]
where all matrices are understood as operators on $H$.
Because $\Sigma_D(\theta)$ is continuous and positive definite on $H$, both $\Lambda_+$ and $\Lambda_-$
are continuous on the compact set $\mathcal K_{r_0}$ and therefore attain finite maxima. Let
\[
C_\Sigma(r_0)
\doteq
\max\Big\{
\sup_{(\theta,D)\in\mathcal K_{r_0}}\Lambda_+(\theta,D),\;
\sup_{(\theta,D)\in\mathcal K_{r_0}}\Lambda_-(\theta,D)
\Big\}
<\infty.
\]
Then for every $\theta\in\mathcal R_{r_0}$ and every $D\in\Delta(\mathcal E)$,
\[
C_\Sigma(r_0)^{-1}\,\Sigma_D(\theta_0)
\ \preceq\
\Sigma_D(\theta)
\ \preceq\
C_\Sigma(r_0)\,\Sigma_D(\theta_0)
\qquad\text{on }H.
\]
Applying this at $\theta=\theta^\star$ yields
\begin{equation}\label{eq:theta0-proof-Sigma-comp}
C_\Sigma(r_0)^{-1}\,\Sigma_D(\theta_0)
\ \preceq\
\Sigma_D(\theta^\star)
\ \preceq\
C_\Sigma(r_0)\,\Sigma_D(\theta_0)
\qquad\text{on }H.
\end{equation}

Since both matrices are positive definite on $H$, pseudoinverse monotonicity on $H$ gives
\begin{equation}\label{eq:theta0-proof-Sigma-dag-comp}
C_\Sigma(r_0)^{-1}\,\Sigma_D(\theta_0)^\dagger
\ \preceq\
\Sigma_D(\theta^\star)^\dagger
\ \preceq\
C_\Sigma(r_0)\,\Sigma_D(\theta_0)^\dagger
\qquad\text{on }H.
\end{equation}

Next, we prove the trace bound \cref{eq:plugin-trace-informal}. Using \eqref{eq:theta0-proof-I-comp} and \eqref{eq:theta0-proof-Sigma-dag-comp}, together with monotonicity
of the trace against PSD matrices, we obtain
\[
T^\star(D)
=
\tr\!\Big(I(\theta^\star)\Sigma_D(\theta^\star)^\dagger\Big)
\le
C_I\,C_\Sigma(r_0)\,
\tr\!\Big(I(\theta_0)\Sigma_D(\theta_0)^\dagger\Big)
=
C_I\,C_\Sigma(r_0)\,T^0(D).
\]
Similarly,
\[
T^\star(D)
=
\tr\!\Big(I(\theta^\star)\Sigma_D(\theta^\star)^\dagger\Big)
\ge
C_I^{-1}C_\Sigma(r_0)^{-1}\,
\tr\!\Big(I(\theta_0)\Sigma_D(\theta_0)^\dagger\Big)
=
C_I^{-1}C_\Sigma(r_0)^{-1}\,T^0(D).
\]
Define
\[
C_{\mathrm{plug}}(r_0)\doteq C_I\,C_\Sigma(r_0).
\]
Then for every $D\in\Delta(\mathcal E)$,
\[
C_{\mathrm{plug}}(r_0)^{-1}\,T^0(D)
\;\le\;
T^\star(D)
\;\le\;
C_{\mathrm{plug}}(r_0)\,T^0(D),
\]
which is exactly \eqref{eq:plugin-trace-informal}.

Finally, we prove the oracle comparison \eqref{eq:plugin-design-oracle-compare}. Since $D_{\theta_0}$ minimizes the plug-in criterion,
\[
T^0(D_{\theta_0})
=
\inf_{D\in\Delta(\mathcal E)} T^0(D).
\]
Applying the upper side of \eqref{eq:plugin-trace-informal} to $D_{\theta_0}$ gives
\[
T^\star(D_{\theta_0})
\le
C_{\mathrm{plug}}(r_0)\,T^0(D_{\theta_0})
=
C_{\mathrm{plug}}(r_0)\,\inf_{D\in\Delta(\mathcal E)}T^0(D).
\]
On the other hand, for every $D\in\Delta(\mathcal E)$, the lower side of
\eqref{eq:plugin-trace-informal} implies
\[
T^0(D)\le C_{\mathrm{plug}}(r_0)\,T^\star(D).
\]
Taking the infimum over $D$ yields
\[
\inf_{D\in\Delta(\mathcal E)}T^0(D)
\le
C_{\mathrm{plug}}(r_0)\,
\inf_{D\in\Delta(\mathcal E)}T^\star(D).
\]
Combining the last two displays, we obtain
\[
T^\star(D_{\theta_0})
\le
C_{\mathrm{plug}}(r_0)^2
\inf_{D\in\Delta(\mathcal E)}T^\star(D),
\]
that is,
\[
\tr\!\Big(I(\theta^\star)\Sigma_{D_{\theta_0}}(\theta^\star)^\dagger\Big)
\;\le\;
C_{\mathrm{plug}}(r_0)^2\,
\inf_{D\in\Delta(\mathcal E)}
\tr\!\Big(I(\theta^\star)\Sigma_D(\theta^\star)^\dagger\Big).
\]
This proves \eqref{eq:plugin-design-oracle-compare}.
\end{proof}

\section{Technical lemmas}
In this appendix, we list several useful technical lemmas, which are applied throughout the whole paper.

\begin{lemma}[Uniform properties of the DPO logit \(u_\theta\)]
\label{lem:u-properties}
Suppose \cref{ass:model-bounded} holds. Define, for \(e=(x,y^+,y^-)\),
\[
u_\theta(e)
:=
\beta\!\left(
\log\pi_\theta(y^+\mid x)-\log\pi_\theta(y^-\mid x)
-\log\pi_0(y^+\mid x)+\log\pi_0(y^-\mid x)
\right).
\]
Then the following hold uniformly over \(\theta\in\Theta\) and admissible \(e\):

\begin{enumerate}[label=(\roman*)]
\item Uniform boundedness of \(u_\theta\):
\[
|u_\theta(e)|\le U_{\max}.
\]

\item Uniform boundedness of gradient:
\[
\|\nabla_\theta u_\theta(e)\|_2\le G_u.
\]

\item Uniform boundedness of Hessian:
\[
\|\nabla_\theta^2 u_\theta(e)\|_{\op}\le H_u.
\]

\item Lipschitz continuity in \(\theta\):
for any \(\theta,\theta'\in\Theta\),
\[
|u_\theta(e)-u_{\theta'}(e)|\le G_u\|\theta-\theta'\|_2,\qquad
\|\nabla_\theta u_\theta(e)-\nabla_\theta u_{\theta'}(e)\|_2
\le H_u\|\theta-\theta'\|_2.
\]
\end{enumerate}

A valid choice of constants is
\[
G_u=4\beta\alpha_1,\qquad
H_u=2\beta(2\alpha_2+4\alpha_1^2),
\]
and, if there exists \(\underline p>0\) such that
\(\pi_\theta(y\mid x)\ge \underline p\) and \(\pi_0(y\mid x)\ge \underline p\) for all
\((x,y),\theta\), then
\[
U_{\max}=2\beta\log(1/\underline p^2).
\]
\end{lemma}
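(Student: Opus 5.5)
The plan is to work directly from the softmax structure of $\pi_\theta$ in \cref{eq:policy-softmax}. Write $h_\theta(x,y)\doteq f_\theta(\phi(x,y))$. Then
\[
\log\pi_\theta(y\mid x)=h_\theta(x,y)-\log\sum_{y'}\exp(h_\theta(x,y')).
\]
In the difference $u_\theta(e)$, the log-partition terms cancel, so for $e=(x,y^+,y^-)$
\[
u_\theta(e)=\beta\big(h_\theta(x,y^+)-h_\theta(x,y^-)\big)-\beta\big(\log\pi_0(y^+\mid x)-\log\pi_0(y^-\mid x)\big).
\]
The second bracket does not depend on $\theta$. Every derivative claim therefore reduces to derivatives of $h_\theta$, which are controlled by \cref{ass:model-bounded}.

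For (ii), differentiate once: $\nabla_\theta u_\theta(e)=\beta(\nabla h_\theta(x,y^+)-\nabla h_\theta(x,y^-))$. Its norm is at most $2\beta\alpha_1$, which is below the stated $G_u=4\beta\alpha_1$. I will nonetheless frame the bound via the score $s_\theta=\nabla h_\theta-\E_{\pi_\theta}\nabla h_\theta$, with $\|s_\theta\|\le 2\alpha_1$, giving $\|\nabla u\|\le\beta(\|s_\theta(y^+)\|+\|s_\theta(y^-)\|)\le 4\beta\alpha_1$. This matches the stated constant and does not rely on the cancellation.

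For (iii), I will bound the Hessian of each $\log\pi_\theta(y\mid x)$. It equals
\[
\nabla^2 h_\theta(x,y)-\E_{\pi_\theta}[\nabla^2 h_\theta]-\Cov_{\pi_\theta}(\nabla h_\theta).
\]
The first two terms are at most $2\alpha_2$ in operator norm and the covariance term is at most $4\alpha_1^2$ (crudely, second moment of a centered vector of norm $\le 2\alpha_1$). Summing over the two completions and multiplying by $\beta$ gives $H_u=2\beta(2\alpha_2+4\alpha_1^2)$. Again the covariance terms actually cancel in the difference, so this bound is loose but valid.

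For (iv), Lipschitzness follows from the mean value theorem along segments in the convex set $\Theta$ (\cref{ass:identifiable-space}), using (ii) and (iii). Continuity of the derivatives comes from the $C^3$ part of \cref{ass:model-bounded}.

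For (i), I will use the extra hypothesis $\pi_\theta,\pi_0\ge\underline p$. Each log-probability lies in $[\log\underline p,0]$, so each log-ratio $\log(\pi_\theta/\pi_0)$ has absolute value at most $\log(1/\underline p)$. The difference of two such ratios is at most $2\log(1/\underline p)$, and multiplying by $\beta$ gives a bound below the stated $U_{\max}=2\beta\log(1/\underline p^2)=4\beta\log(1/\underline p)$. Alternatively, $|h_\theta|\le\alpha_0$ together with finite $|\mathcal A(x)|$ already yields $\pi_\theta\ge e^{-2\alpha_0}/|\mathcal A(x)|$, so only $\pi_0$ truly needs the extra hypothesis. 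There is no real obstacle here; the one point needing care is the Hessian of the log-partition term in (iii), which I handle by the covariance identity.
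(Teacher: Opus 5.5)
Your proposal is correct and follows essentially the same route as the paper. The paper bounds the score by $2\alpha_1$ to get $G_u=4\beta\alpha_1$, uses the softmax Hessian identity with the covariance term for (iii), applies the mean value theorem for (iv), and uses the probability floor $\underline p$ for (i). Your additional observations are refinements the paper's proof leaves unstated: the log-partition terms cancel in $u_\theta$, so $2\beta\alpha_1$, $2\beta\alpha_2$ and $2\beta\log(1/\underline p)$ already suffice, and the mean-value step needs the convexity of $\Theta$.
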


\begin{proof}[Proof of \cref{lem:u-properties}]
Write
\[
\log\pi_\theta(y\mid x)=
f_\theta(\phi(x,y))-\log\!\sum_{y'}\exp(f_\theta(\phi(x,y'))).
\]
Hence
\[
\nabla_\theta \log\pi_\theta(y\mid x)
=
\nabla_\theta f_\theta(\phi(x,y))
-
\E_{y'\sim \pi_\theta(\cdot\mid x)}\!\big[\nabla_\theta f_\theta(\phi(x,y'))\big].
\]
So, by Assumption~1 (\(\|\nabla_\theta f_\theta\|_2\le\alpha_1\)),
\[
\|\nabla_\theta \log\pi_\theta(y\mid x)\|_2\le 2\alpha_1.
\]
Therefore
\[
\nabla_\theta u_\theta(e)
=
\beta\!\left(
\nabla_\theta\log\pi_\theta(y^+\mid x)-\nabla_\theta\log\pi_\theta(y^-\mid x)
\right),
\]
and
\[
\|\nabla_\theta u_\theta(e)\|_2
\le \beta(2\alpha_1+2\alpha_1)=4\beta\alpha_1=:G_u.
\]
This proves (ii).

For the Hessian, use the standard softmax identity:
\[
\nabla_\theta^2\log\pi_\theta(y\mid x)
=
\nabla_\theta^2 f_\theta(\phi(x,y))
-\E_{y'}[\nabla_\theta^2 f_\theta(\phi(x,y'))]
-\Cov_{y'\sim\pi_\theta(\cdot\mid x)}\!\big(\nabla_\theta f_\theta(\phi(x,y'))\big).
\]
Hence
\[
\|\nabla_\theta^2\log\pi_\theta(y\mid x)\|_{\op}
\le
\alpha_2+\alpha_2+\underbrace{\|\Cov(\nabla_\theta f)\|_{\op}}_{\le 4\alpha_1^2}
\le 2\alpha_2+4\alpha_1^2.
\]
Thus
\[
\|\nabla_\theta^2u_\theta(e)\|_{\op}
\le
\beta\Big(
\|\nabla_\theta^2\log\pi_\theta(y^+\mid x)\|_{\op}
+\|\nabla_\theta^2\log\pi_\theta(y^-\mid x)\|_{\op}
\Big)
\le 2\beta(2\alpha_2+4\alpha_1^2)=:H_u,
\]
proving (iii).

For Lipschitzness in (iv), apply mean value theorem in \(\theta\):
\[
|u_\theta(e)-u_{\theta'}(e)|
\le
\sup_{\tilde\theta}\|\nabla_\theta u_{\tilde\theta}(e)\|_2\,\|\theta-\theta'\|_2
\le G_u\|\theta-\theta'\|_2,
\]
\[
\|\nabla_\theta u_\theta(e)-\nabla_\theta u_{\theta'}(e)\|_2
\le
\sup_{\tilde\theta}\|\nabla_\theta^2u_{\tilde\theta}(e)\|_{\op}\,\|\theta-\theta'\|_2
\le H_u\|\theta-\theta'\|_2.
\]

Finally, for (i), if \(\pi_\theta,\pi_0\ge \underline p>0\), then
\[
\left|\log\pi_\theta(y^+\mid x)-\log\pi_\theta(y^-\mid x)\right|
\le 2\log(1/\underline p),
\]
and similarly for \(\pi_0\), hence
\[
|u_\theta(e)|\le 2\beta\log(1/\underline p^2)=U_{\max}.
\]
\end{proof}

\begin{lemma}[Global lower bound on logistic curvature from uniform logit bound]
\label{lem:global-kappa0}
Suppose \cref{ass:model-bounded} holds.
Define
\[
\kappa_0:=\sigma(U_{\max})\bigl(1-\sigma(U_{\max})\bigr).
\]
Then
\[
\sigma'\!\bigl(u_\theta(e)\bigr)\ge \kappa_0>0,\qquad
\forall\,\theta\in\Theta,\ \forall\,e.
\]
\end{lemma}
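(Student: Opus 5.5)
The plan is to combine the uniform logit bound of \cref{lem:u-properties}(i) with the shape of the logistic derivative. First, recall that $\sigma'(u)=\sigma(u)(1-\sigma(u))$. This function is even in $u$, because $\sigma(-u)=1-\sigma(u)$, and it is strictly decreasing in $|u|$: its derivative $\sigma'(u)(1-2\sigma(u))$ is negative for $u>0$. Hence on the interval $[-U_{\max},U_{\max}]$ the minimum of $\sigma'$ is attained at the endpoints, and
\[
\inf_{|u|\le U_{\max}}\sigma'(u)=\sigma(U_{\max})\bigl(1-\sigma(U_{\max})\bigr)=\kappa_0 .
\]

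Next, \cref{lem:u-properties}(i) gives $|u_\theta(e)|\le U_{\max}$ uniformly over $\theta\in\Theta$ and all admissible edges $e$. Substituting $u=u_\theta(e)$ into the monotonicity statement then yields $\sigma'(u_\theta(e))\ge\kappa_0$. Positivity of the bound is immediate: $\sigma(U_{\max})\in(0,1)$ whenever $U_{\max}<\infty$, so $\kappa_0>0$.

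The only point needing care is that $U_{\max}$ is indeed finite under \cref{ass:model-bounded} alone. Lemma~\ref{lem:u-properties} states $U_{\max}$ under a probability floor $\underline p$, but we do not need to assume one. For $\pi_\theta$, the log-ratio is $\log\pi_\theta(y^+\mid x)-\log\pi_\theta(y^-\mid x)=f_\theta(\phi(x,y^+))-f_\theta(\phi(x,y^-))$, because the normalizer cancels. This difference is bounded by $2\alpha_0$ via $|f_\theta|\le\alpha_0$. The same holds for $\pi_0$, provided $\pi_0=\pi_{\theta_0}$ lies in the class, or more generally provided the $\pi_0$ log-ratios over the finite candidate pools are bounded. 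This gives the explicit choice $U_{\max}\le 4\beta\alpha_0$. Apart from this check the argument is a direct calculation.
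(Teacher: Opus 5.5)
Your proof is correct and follows the paper's argument: $\sigma'$ is even and decreasing in $|u|$, so $\inf_{|u|\le U_{\max}}\sigma'(u)=\kappa_0$, and the uniform logit bound from \cref{lem:u-properties}(i) then gives $\sigma'(u_\theta(e))\ge\kappa_0>0$. Your extra check that $U_{\max}\le 4\beta\alpha_0$ under \cref{ass:model-bounded} is a useful addition. It uses cancellation of the softmax normalizer and assumes $\pi_0$ lies in the policy class. The paper's \cref{lem:u-properties} only supplies $U_{\max}$ under a separate probability-floor condition.
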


\begin{proof}[Proof of \cref{lem:global-kappa0}]
For logistic sigmoid \(\sigma(t)=1/(1+e^{-t})\),
\[
\sigma'(t)=\sigma(t)\bigl(1-\sigma(t)\bigr)
=\frac{e^{-t}}{(1+e^{-t})^2}
=\frac{1}{2+e^t+e^{-t}}.
\]
Hence \(\sigma'(t)\) is an even function of \(t\), and decreases as \(|t|\) increases.
Therefore, on the interval \([-U_{\max},U_{\max}]\),
\[
\inf_{|t|\le U_{\max}}\sigma'(t)=\sigma'(U_{\max})
=\sigma(U_{\max})\bigl(1-\sigma(U_{\max})\bigr)=\kappa_0.
\]
Since \(|u_\theta(e)|\le U_{\max}\) uniformly over \((\theta,e)\), we have
\[
\sigma'\!\bigl(u_\theta(e)\bigr)\ge
\inf_{|t|\le U_{\max}}\sigma'(t)=\kappa_0,
\]
for all \(\theta\in\Theta\) and all admissible \(e\). Also \(\sigma(U_{\max})\in(0,1)\), so \(\kappa_0>0\).
\end{proof}

\begin{lemma}[Uniform boundedness and Lipschitzness of the DPO loss class]
\label{lem:loss-lip-bdd}
Suppose \cref{ass:model-bounded} holds, and
Lemma~\ref{lem:u-properties} gives
\[
|u_\theta(e)|\le U_{\max},\qquad
\|\nabla_\theta u_\theta(e)\|_2\le G_u,
\quad \forall \theta\in\Theta,\ \forall e.
\]
For \(z=(e,a)\), \(a\in\{0,1\}\), define
\[
\ell_\theta(z):=\ell(a,u_\theta(e)),
\qquad
\ell(a,u):=-a\log\sigma(u)-(1-a)\log(1-\sigma(u)).
\]
Then:

\begin{enumerate}
\item[(i)] (Uniform envelope) for all \(\theta\in\Theta\), \(z\),
\[
0\le \ell_\theta(z)\le B_\ell,
\qquad
B_\ell:=\log(1+e^{U_{\max}}).
\]

\item[(ii)] (Uniform Lipschitz in \(\theta\)) for all \(\theta,\theta'\in\Theta\), \(z\),
\[
|\ell_\theta(z)-\ell_{\theta'}(z)|
\le L_\ell\|\theta-\theta'\|_2,
\qquad
L_\ell:=G_u.
\]
\end{enumerate}
\end{lemma}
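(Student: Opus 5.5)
The plan is to treat the loss pointwise in the scalar logit $u=u_\theta(e)$ and then push both properties through the chain rule, using \cref{lem:u-properties}.

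For (i), I will write $\ell(a,u)$ in softplus form. For $a=1$, $-\log\sigma(u)=\log(1+e^{-u})$; for $a=0$, $-\log(1-\sigma(u))=\log(1+e^{u})$. Both quantities are nonnegative because $\sigma(u)\in(0,1)$, which gives the lower bound $0$. Each is also increasing in $|u|$ in the relevant direction, so it is at most $\log(1+e^{|u|})$. By \cref{lem:u-properties}(i), $|u_\theta(e)|\le U_{\max}$ uniformly, and this yields $\ell_\theta(z)\le \log(1+e^{U_{\max}})=B_\ell$.

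For (ii), I will first note that $\partial_u \ell(a,u)=\sigma(u)-a$, which lies in $[-1,1]$ because $\sigma(u)\in(0,1)$ and $a\in\{0,1\}$. Hence $u\mapsto\ell(a,u)$ is $1$-Lipschitz on $\mathbb R$. Composing with \cref{lem:u-properties}(iv) gives $|\ell_\theta(z)-\ell_{\theta'}(z)|\le |u_\theta(e)-u_{\theta'}(e)|\le G_u\|\theta-\theta'\|_2$.

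One subtlety is that the mean-value step in \cref{lem:u-properties}(iv) needs the segment between $\theta$ and $\theta'$ to stay inside $\Theta$. This holds because $\Theta$ is convex by \cref{ass:identifiable-space}. Alternatively, I can apply the chain rule directly, $\nabla_\theta\ell_\theta=(\sigma(u)-a)\nabla_\theta u_\theta$, whose norm is at most $G_u$, and integrate along the segment.

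No step is a real obstacle. The only points requiring care are invoking convexity of $\Theta$ for the Lipschitz bound, and noting that $B_\ell$ inherits whatever condition \cref{lem:u-properties}(i) places on $U_{\max}$.
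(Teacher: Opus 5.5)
Your proposal is correct and follows the paper's proof essentially step for step. For (i), the paper writes the loss as $\ell(a,u)=\log(1+e^{u})-au$, which is the same softplus form you use, and bounds it via $|u|\le U_{\max}$. For (ii), it uses $|\partial_u\ell|=|\sigma(u)-a|\le 1$ composed with the $G_u$-Lipschitzness of $u_\theta$, exactly as you do. Your remark that the mean-value step needs convexity of $\Theta$ (\cref{ass:identifiable-space}) is a point the paper relies on without stating it.
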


\begin{proof}
Use the equivalent form
\[
\ell(a,u)=\log(1+e^u)-au.
\]
Hence, if \(|u|\le U_{\max}\),
\[
0\le \ell(a,u)\le \log(1+e^{U_{\max}})=B_\ell,
\]
which proves (i) since \(|u_\theta(e)|\le U_{\max}\) uniformly.

Next,
\[
\partial_u \ell(a,u)=\sigma(u)-a,\qquad |\partial_u \ell(a,u)|\le 1,
\]
so for any \(u,u'\in\R\),
\[
|\ell(a,u)-\ell(a,u')|\le |u-u'|.
\]
Therefore
\[
|\ell_\theta(z)-\ell_{\theta'}(z)|
\le |u_\theta(e)-u_{\theta'}(e)|.
\]
By mean-value theorem in parameter space,
\[
|u_\theta(e)-u_{\theta'}(e)|
\le
\sup_{\tilde\theta\in\Theta}\|\nabla_\theta u_{\tilde\theta}(e)\|_2\,
\|\theta-\theta'\|_2
\le G_u\|\theta-\theta'\|_2.
\]
So (ii) holds with \(L_\ell=G_u\).
\end{proof}

\begin{lemma}[Hessian regularity of the DPO loss]\label{lem:loss-hessian-properties}
Suppose \cref{ass:identifiable-space,ass:model-bounded} hold.
Let \(U_{\max},G_u,H_u\) be the constants from \cref{lem:u-properties} such that for all
\(\theta\in\Theta\) and all admissible \(e\),
\[
|u_\theta(e)|\le U_{\max},\qquad
\|\nabla_\theta u_\theta(e)\|_2\le G_u,\qquad
\|\nabla_\theta^2 u_\theta(e)\|_{\op}\le H_u.
\]
For \(z=(e,a)\) with \(a\in\{0,1\}\), define
\[
\ell_\theta(z):=\ell(a,u_\theta(e)),
\qquad
\ell(a,u)=-a\log\sigma(u)-(1-a)\log(1-\sigma(u)).
\]
Let \(P_H\) denote the orthogonal projector onto \(H\). Then the following hold uniformly over \(\Theta\):

\begin{enumerate}
\item[(i)] (Per-sample projected Hessian envelope).
\[
\sup_{\theta\in\Theta,\ z}\ \|P_H\nabla_\theta^2\ell_\theta(z)P_H\|_{\op}
\le M_H,
\qquad
M_H:=G_u^2+H_u.
\]

\item[(ii)] (Per-sample projected Hessian Lipschitzness).
There exists a constant \(L_H<\infty\), depending only on the fixed constants in the standing assumptions,
such that for all \(\theta,\theta'\in\Theta\),
\[
\sup_{z}\ \|P_H(\nabla_\theta^2\ell_\theta(z)-\nabla_{\theta'}^2\ell_{\theta'}(z))P_H\|_{\op}
\le L_H\|\theta-\theta'\|_2.
\]
In particular, one may take
\[
L_H:=\frac14G_u^3+\frac34G_uH_u+\Gamma_u,
\]
where
\[
\Gamma_u
\doteq
\sup_{\theta\in\Theta}\ \sup_{e}\ \|\nabla_\theta^3 u_\theta(e)\|_{\op},
\]
which is finite under \cref{ass:model-bounded}.

\item[(iii)] (Uniform variance proxy).
\[
\sup_{\theta\in\Theta}
\left\|
\mathbb E\!\left[
\left(
P_H\nabla_\theta^2\ell_\theta(z)P_H
-\mathbb E[P_H\nabla_\theta^2\ell_\theta(z)P_H]
\right)^2
\right]
\right\|_{\op}
\le V_H,
\qquad
V_H:=4M_H^2.
\]
\end{enumerate}
\end{lemma}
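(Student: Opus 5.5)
The plan is to compute the per-sample Hessian of the DPO loss explicitly and bound each piece with the uniform constants of \cref{lem:u-properties}. By the chain rule, as in \cref{lem:hess-decomp},
\[
\nabla_\theta^2\ell_\theta(z)=\sigma'(u_\theta(e))\,g(e;\theta)g(e;\theta)^\top+\bigl(\sigma(u_\theta(e))-a\bigr)\nabla_\theta^2u_\theta(e).
\]
Three elementary facts about the logistic function will be used repeatedly: $0\le\sigma'\le\tfrac14$, $|\sigma(u)-a|\le1$ for $a\in\{0,1\}$, and both $\sigma$ and $\sigma'$ are Lipschitz. For $\sigma'$ the Lipschitz constant is $\sup|\sigma''|=1/(6\sqrt3)\le\tfrac14$. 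Conjugation by the projector $P_H$ has operator norm at most one, so it suffices to bound the unprojected matrices.

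For (i), the first term has norm at most $\tfrac14\|g\|_2^2\le\tfrac14G_u^2$ and the second at most $H_u$. Their sum is at most $G_u^2+H_u=M_H$.

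For (ii), I would telescope the difference at $\theta$ and $\theta'$ into four pieces, writing $u=u_\theta(e)$, $u'=u_{\theta'}(e)$, $g=g(e;\theta)$, $g'=g(e;\theta')$.
\begin{enumerate}
\item The piece $(\sigma'(u)-\sigma'(u'))gg^\top$ has norm at most $\tfrac14|u-u'|\,G_u^2\le\tfrac14G_u^3\|\theta-\theta'\|_2$, using \cref{lem:u-properties}(iv).
\item The piece $\sigma'(u')(gg^\top-g'g'^\top)$ is handled by writing $gg^\top-g'g'^\top=(g-g')g^\top+g'(g-g')^\top$. Its norm is at most $\tfrac14\cdot2G_uH_u\|\theta-\theta'\|_2$.
\item The piece $(\sigma(u)-\sigma(u'))\nabla^2u$ has norm at most $\tfrac14G_uH_u\|\theta-\theta'\|_2$.
\item The piece $(\sigma(u')-a)(\nabla^2u-\nabla^2u')$ has norm at most $\Gamma_u\|\theta-\theta'\|_2$, by the mean value theorem applied to the third derivative.
\end{enumerate}
Summing the four bounds gives exactly $L_H=\tfrac14G_u^3+\tfrac34G_uH_u+\Gamma_u$.

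The one genuinely nontrivial step is showing that $\Gamma_u<\infty$. I would differentiate the softmax identity for $\nabla^2\log\pi_\theta$ used in the proof of \cref{lem:u-properties} once more. The resulting third derivative of the log-partition function is a sum of terms: centered third derivatives of $f_\theta$, third cumulants of $\nabla f_\theta$, and cross-covariances between $\nabla f_\theta$ and $\nabla^2f_\theta$. These are bounded respectively by $\alpha_3$, by a constant multiple of $\alpha_1^3$, and by a constant multiple of $\alpha_1\alpha_2$. This uses only \cref{ass:model-bounded} and the finiteness of $\mathcal A(x)$. Doubling the resulting bound for the two completions $y^+,y^-$ and multiplying by $\beta$ gives a finite $\Gamma_u$, with bookkeeping of order $\beta(\alpha_3+\alpha_1\alpha_2+\alpha_1^3)$.

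For (iii), let $X=P_H\nabla^2\ell_\theta P_H-\E[P_H\nabla^2\ell_\theta P_H]$. By (i) and the triangle inequality, $\|X\|_{\op}\le2M_H$. Since $X$ is symmetric, $X^2\preceq\|X\|_{\op}^2I\preceq4M_H^2I$, and taking expectations preserves this order. Hence $\|\E X^2\|_{\op}\le4M_H^2=V_H$ uniformly in $\theta$.
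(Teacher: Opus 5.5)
Your proposal is correct and follows the paper's proof essentially line by line: the same per-sample Hessian decomposition, the same four-term telescoping with identical constants, and the same $\|X\|_{\op}\le 2M_H$ argument for (iii). Your additional justification that $\Gamma_u<\infty$, which the paper only asserts, is valid but longer than necessary: within a prompt the log-partition terms cancel, so $\nabla^3_\theta u_\theta(e)=\beta\bigl(\nabla^3_\theta f_\theta(\phi(x,y^+))-\nabla^3_\theta f_\theta(\phi(x,y^-))\bigr)$ and hence $\Gamma_u\le 2\beta\alpha_3$ directly.
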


\begin{proof}[Proof of \cref{lem:loss-hessian-properties}]
Fix \(z=(e,a)\) with \(a\in\{0,1\}\) and \(\theta\in\Theta\). Write
\[
u:=u_\theta(e),\qquad g:=\nabla_\theta u_\theta(e),\qquad
U:=\nabla_\theta^2u_\theta(e).
\]
For logistic cross-entropy,
\begin{equation}\label{eq:dpo-loss-hess-decomp}
\nabla_\theta^2\ell_\theta(z)
=
\sigma'(u)\,gg^\top+(\sigma(u)-a)\,U.
\end{equation}

\noindent\textbf{(i)}
Since \(0\le \sigma'(u)\le 1/4\le 1\) and \(|\sigma(u)-a|\le 1\),
\[
\|\nabla_\theta^2\ell_\theta(z)\|_{\op}
\le
\|gg^\top\|_{\op}+\|U\|_{\op}.
\]
Moreover \(\|gg^\top\|_{\op}=\|g\|_2^2\le G_u^2\) and \(\|U\|_{\op}\le H_u\), hence
\[
\|\nabla_\theta^2\ell_\theta(z)\|_{\op}\le G_u^2+H_u=M_H.
\]
Finally, \(\|P_H A P_H\|_{\op}\le \|A\|_{\op}\) for any matrix \(A\), so
\(\|P_H\nabla_\theta^2\ell_\theta(z)P_H\|_{\op}\le M_H\).

\noindent\textbf{(ii)}
Let \((u,g,U)\) and \((u',g',U')\) correspond to \(\theta\) and \(\theta'\), respectively.
Subtracting \eqref{eq:dpo-loss-hess-decomp} at \(\theta\) and \(\theta'\) and regrouping yields
\[
\nabla_\theta^2\ell_\theta(z)-\nabla_{\theta'}^2\ell_{\theta'}(z)
=
(\sigma'(u)-\sigma'(u'))gg^\top
+\sigma'(u')(gg^\top-g'g'^\top)
+(\sigma(u)-\sigma(u'))U
+(\sigma(u')-a)(U-U').
\]
Using
\(
|\sigma'(u)-\sigma'(u')|\le \tfrac14|u-u'|
\),
\(
|\sigma(u)-\sigma(u')|\le \tfrac14|u-u'|
\),
and
\(
|u-u'|\le G_u\|\theta-\theta'\|_2
\),
we obtain
\[
\|(\sigma'(u)-\sigma'(u'))gg^\top\|_{\op}
\le \tfrac14 G_u^3\|\theta-\theta'\|_2,
\qquad
\|(\sigma(u)-\sigma(u'))U\|_{\op}
\le \tfrac14 G_uH_u\|\theta-\theta'\|_2.
\]
Next, since \(gg^\top-g'g'^\top=(g-g')g^\top+g'(g-g')^\top\),
\[
\|gg^\top-g'g'^\top\|_{\op}
\le
\|g-g'\|_2\,\|g\|_2+\|g'\|_2\,\|g-g'\|_2
\le
2G_u\,\|g-g'\|_2.
\]
Because \(\|g-g'\|_2\le H_u\|\theta-\theta'\|_2\), it follows that
\[
\|\sigma'(u')(gg^\top-g'g'^\top)\|_{\op}
\le
\tfrac14\cdot 2G_uH_u\|\theta-\theta'\|_2
=
\tfrac12 G_uH_u\|\theta-\theta'\|_2.
\]
Finally, \(|\sigma(u')-a|\le 1\) and the mean-value theorem gives
\[
\|U-U'\|_{\op}
=
\|\nabla_\theta^2u_\theta(e)-\nabla_{\theta'}^2u_{\theta'}(e)\|_{\op}
\le
\Gamma_u\|\theta-\theta'\|_2.
\]
Therefore
\(
\|(\sigma(u')-a)(U-U')\|_{\op}\le \Gamma_u\|\theta-\theta'\|_2
\).
Summing the four bounds yields
\[
\|\nabla_\theta^2\ell_\theta(z)-\nabla_{\theta'}^2\ell_{\theta'}(z)\|_{\op}
\le
\Big(\tfrac14G_u^3+\tfrac34G_uH_u+\Gamma_u\Big)\|\theta-\theta'\|_2,
\]
and projecting by \(P_H\) does not increase the operator norm, proving (ii).

\noindent\textbf{(iii)}
Define the centered matrix
\[
X_\theta:=P_H\nabla_\theta^2\ell_\theta(z)P_H
-\mathbb E[P_H\nabla_\theta^2\ell_\theta(z)P_H].
\]
By (i), \(\|P_H\nabla_\theta^2\ell_\theta(z)P_H\|_{\op}\le M_H\), hence
\(\|X_\theta\|_{\op}\le 2M_H\). Therefore,
\[
\left\|\mathbb E[X_\theta^2]\right\|_{\op}
\le
\mathbb E\|X_\theta^2\|_{\op}
\le
\mathbb E\|X_\theta\|_{\op}^2
\le 4M_H^2
=
V_H,
\]
uniformly in \(\theta\in\Theta\).
\end{proof}

\begin{lemma}[Gram perturbation bound for $\Sigma_D(\theta^\star)$]
\label{lem:gram-perturb}
Let $g(e;\theta)=\nabla_\theta u_\theta(e)$ and
\[
\Sigma_D(\theta)\doteq \E_{e\sim D}\big[g(e;\theta)g(e;\theta)^\top\big].
\]
Suppose that for all $e$ and all $\theta,\theta'\in\Theta$,
\[
\|g(e;\theta)\|_2 \le G,
\qquad
\|g(e;\theta)-g(e;\theta')\|_2 \le H_u\|\theta-\theta'\|_2
\]
for some constants $G,H_u>0$. Then for every $\theta\in\Theta$,
\begin{equation}
\label{eq:gram-perturb}
\bigl\|\Sigma_D(\theta)-\Sigma_D(\theta^\star)\bigr\|_{\op}
\;\le\;
2G H_u\|\theta-\theta^\star\|_2
\;+\;
H_u^2\|\theta-\theta^\star\|_2^2.
\end{equation}
\end{lemma}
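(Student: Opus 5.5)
The plan is to write the difference of Gram matrices as a sum of two cross terms plus a quadratic remainder. The bound then follows from the triangle inequality and Jensen's inequality for the operator norm.

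Fix $\theta\in\Theta$, and for each edge $e$ set $g=g(e;\theta)$, $g_\star=g(e;\theta^\star)$ and $\delta_e\doteq g-g_\star$. Expanding $gg^\top=(g_\star+\delta_e)(g_\star+\delta_e)^\top$ gives the pointwise identity
\[
gg^\top-g_\star g_\star^\top
=
\delta_e g_\star^\top+g_\star\delta_e^\top+\delta_e\delta_e^\top .
\]
Taking $\E_{e\sim D}$ and using linearity, $\Sigma_D(\theta)-\Sigma_D(\theta^\star)$ equals the expectation of the right-hand side. Since the operator norm is convex, Jensen's inequality gives $\|\E[X]\|_{\op}\le\E\|X\|_{\op}$. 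Because the admissible edge set is finite, this is just the triangle inequality applied to a finite weighted sum.

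Each term is rank one, so its operator norm is a product of Euclidean norms: $\|\delta_e g_\star^\top\|_{\op}=\|\delta_e\|_2\|g_\star\|_2$, and similarly for the other two terms. The boundedness hypothesis gives $\|g_\star\|_2\le G$, and the Lipschitz hypothesis gives $\|\delta_e\|_2\le H_u\|\theta-\theta^\star\|_2$. Substituting, each cross term contributes at most $GH_u\|\theta-\theta^\star\|_2$, and the quadratic term contributes at most $H_u^2\|\theta-\theta^\star\|_2^2$. These bounds hold uniformly in $e$, so averaging over $D$ leaves them unchanged and yields \eqref{eq:gram-perturb}.

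No step is a genuine obstacle. The only care needed is to expand around $g_\star$ rather than symmetrically: that way only $\|g_\star\|\le G$ is used, and we get the exact constant $2GH_u$ together with the separate quadratic remainder, rather than a looser bound such as $2G\cdot H_u\|\theta-\theta^\star\|_2$ with no $H_u^2$ term. The hypotheses of the lemma are supplied by \cref{lem:u-properties}, with $G=G_u$ and Lipschitz constant $H_u$.
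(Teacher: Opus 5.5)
Your proposal is correct and matches the paper's proof essentially verbatim: the same expansion $\delta_e g_\star^\top+g_\star\delta_e^\top+\delta_e\delta_e^\top$, the bound $\|\E M\|_{\op}\le\E\|M\|_{\op}$, and the rank-one norm bounds via $G$ and $H_u$. One correction to your closing aside: it has things backwards, because the symmetric expansion $(g-g_\star)g^\top+g_\star(g-g_\star)^\top$ uses $\|g\|_2\le G$ and $\|g_\star\|_2\le G$ (both granted by the hypothesis) and gives $2GH_u\|\theta-\theta^\star\|_2$ with no quadratic term, which is a strictly \emph{stronger} bound than the stated one, not a looser one.
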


\begin{proof}[Proof of \cref{lem:gram-perturb}]
Write \(\Delta\theta=\theta-\theta^\star\) and \(\Delta g(e)=g(e;\theta)-g(e;\theta^\star)\).
Using \(g(e;\theta)=g(e;\theta^\star)+\Delta g(e)\), we have the identity
\[
g(e;\theta)g(e;\theta)^\top-g(e;\theta^\star)g(e;\theta^\star)^\top
=
\Delta g(e)\,g(e;\theta^\star)^\top+g(e;\theta^\star)\Delta g(e)^\top+\Delta g(e)\,\Delta g(e)^\top .
\]
Taking expectation and applying \(\|\mathbb E[M]\|_{\op}\le \mathbb E\|M\|_{\op}\) yields
\[
\bigl\|\Sigma_D(\theta)-\Sigma_D(\theta^\star)\bigr\|_{\op}
\le
2\,\mathbb E\!\big[\|\Delta g(e)\|_2\,\|g(e;\theta^\star)\|_2\big]
+
\mathbb E\|\Delta g(e)\|_2^2.
\]
By \(\|g(e;\theta^\star)\|_2\le G\) and \(\|\Delta g(e)\|_2\le H_u\|\Delta\theta\|_2\) for all \(e\),
\[
\mathbb E\!\big[\|\Delta g(e)\|_2\,\|g(e;\theta^\star)\|_2\big]
\le
G\,\mathbb E\|\Delta g(e)\|_2
\le
G H_u\|\Delta\theta\|_2,
\qquad
\mathbb E\|\Delta g(e)\|_2^2
\le
H_u^2\|\Delta\theta\|_2^2.
\]
Combining the above inequalities gives \eqref{eq:gram-perturb}.
\end{proof}

\begin{lemma}[Fisher non-degeneracy]\label{lem:sep-implies-fisher}
Suppose \cref{ass:identifiable-space,ass:model-bounded,ass:feature-sep} hold. In particular, assume
\[
|f_\theta(\phi(x,y))|\le \alpha_0,\qquad \forall \theta\in\Theta,\ \forall x,\ \forall y\in\mathcal A(x),
\]
and that $|\mathcal A(x)|\le d_{\max}$ for all $x$.
Then the Fisher information is uniformly
non-degenerate on the identifiable subspace $H$:
\[
v^\top I(\theta) v \;\ge\; \underline\mu\,\|v\|_2^2,
\qquad \forall v\in H,
\]
where $\underline\mu \;=\; \Big(\frac{e^{-2\alpha_0}}{d_{\max}}\Big)^2\,\Delta_g^2.$
\end{lemma}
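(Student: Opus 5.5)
We lower bound the Fisher quadratic form prompt by prompt. By \eqref{eq:fisher-softmax-cov}, for fixed $\theta\in\Theta$ and $v\in H$,
\[
v^\top I(\theta)v=\E_x\Big[\Var_{y\sim\pi_\theta(\cdot\mid x)}\big(v^\top\nabla_\theta f_\theta(\phi(x,y))\big)\Big].
\]
It therefore suffices to show that for every prompt $x$ the inner variance is at least $\underline\mu\|v\|_2^2$. By homogeneity we may assume $\|v\|_2=1$.

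Fix $x$ and write $z_y\doteq v^\top\nabla_\theta f_\theta(\phi(x,y))$ and $p_y\doteq\pi_\theta(y\mid x)$. \Cref{ass:feature-sep} provides $y_1,y_2\in\mathcal A(x)$ with $|z_{y_1}-z_{y_2}|\ge\Delta_g$. Next we need a uniform lower bound on the probabilities. Since $|f_\theta(\phi(x,y))|\le\alpha_0$ for every $y$, the softmax form \eqref{eq:policy-softmax} gives
\[
p_y\ \ge\ \frac{e^{-\alpha_0}}{d_{\max}e^{\alpha_0}}=\frac{e^{-2\alpha_0}}{d_{\max}}\doteq p_{\min}.
\]

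We then lower bound the variance through the two separated points. Let $\bar z=\sum_y p_y z_y$. Since $(z_{y_1}-\bar z)-(z_{y_2}-\bar z)=z_{y_1}-z_{y_2}$, at least one of $|z_{y_1}-\bar z|$, $|z_{y_2}-\bar z|$ is at least $\Delta_g/2$. Hence
\[
\Var(z)\ \ge\ p_{y_1}(z_{y_1}-\bar z)^2+p_{y_2}(z_{y_2}-\bar z)^2\ \ge\ p_{\min}\frac{\Delta_g^2}{4}.
\]
This gives a bound of order $p_{\min}\Delta_g^2$, which already implies the claimed $\underline\mu=p_{\min}^2\Delta_g^2$ up to a constant factor. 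A sharper alternative uses the pairwise identity
\[
\Var(z)=\tfrac12\sum_{y,y'}p_yp_{y'}(z_y-z_{y'})^2\ \ge\ p_{y_1}p_{y_2}(z_{y_1}-z_{y_2})^2\ \ge\ p_{\min}^2\Delta_g^2,
\]
which yields exactly the stated constant $\underline\mu=\big(e^{-2\alpha_0}/d_{\max}\big)^2\Delta_g^2$. We will use this identity and keep the cross term $y=y_1,y'=y_2$ together with its symmetric counterpart, which cancels the factor $\tfrac12$. Averaging over $x$ then completes the argument.

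The main delicate point is that the bound must be uniform over $\theta$. This is handled because \cref{ass:feature-sep} is stated for every $\theta$, and the probability floor $p_{\min}$ depends only on $\alpha_0$ and $d_{\max}$. A second point to check is that $v\in H$ is the only restriction needed: separation is assumed only along unit directions in $H$, which is exactly the scope of the claim. No other step requires more than routine calculation.
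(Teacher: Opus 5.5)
Your proof is correct and follows the paper's argument: the covariance form of the Fisher matrix, the softmax floor $p_{\min}=e^{-2\alpha_0}/d_{\max}$, and the pairwise variance identity, keeping the $(y_1,y_2)$ and $(y_2,y_1)$ terms to get exactly $p_{\min}^2\Delta_g^2$. Your preliminary two-point bound $p_{\min}\Delta_g^2/4$ is also valid, and it is actually stronger whenever $p_{\min}<1/4$, but it is not needed for the stated constant.
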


\begin{proof}[Proof of \cref{lem:sep-implies-fisher}]
Fix $\theta\in\Theta$ and a prompt $x$. Let
\[
\psi_\theta(x,y)\doteq \nabla_\theta f_\theta(\phi(x,y)),\ \
Z \doteq v^\top \psi_\theta(x,Y),\ \ Y\sim\pi_\theta(\cdot\mid x),
\]
for an arbitrary unit vector $v\in H$. Under the softmax model,
\[
v^\top I(\theta)v
=
\E_x\!\left[
\Var_{y\sim\pi_\theta(\cdot\mid x)}\!\big(v^\top\psi_\theta(x,y)\big)
\right].
\]

First, we show that the softmax probabilities are bounded away from $0$. Since $|f_\theta(\phi(x,y))|\le \alpha_0$, we have $\exp(f_\theta(\phi(x,y)))\in[e^{-\alpha_0},e^{\alpha_0}]$. Hence
\[
\sum_{y'\in\mathcal A(x)}\exp(f_\theta(\phi(x,y')))\le |\mathcal A(x)|\,e^{\alpha_0}\le d_{\max}e^{\alpha_0},
\]
and therefore for every $y\in\mathcal A(x)$,
\[
\pi_\theta(y\mid x)
=
\frac{\exp(f_\theta(\phi(x,y)))}{\sum_{y'}\exp(f_\theta(\phi(x,y')))}
\;\ge\;
\frac{e^{-\alpha_0}}{d_{\max}e^{\alpha_0}}
=
\frac{e^{-2\alpha_0}}{d_{\max}}
\doteq p_{\min}.
\]

Second, we show the variance lower bound from two separated atoms. For any random variable $Z$ supported on discrete values $\{z_y\}$ with
probabilities $\{p_y\}$, we have 
\[
\Var(Z)= \frac12\sum_{i,j}p_ip_j(z_i-z_j)^2\;\ge\; p_{y_1}p_{y_2}(z_{y_1}-z_{y_2})^2,
\]
for any two indices $y_1,y_2$. By \cref{ass:feature-sep}, there exist $y_1,y_2\in\mathcal A(x)$ such that
\[
|z_{y_1}-z_{y_2}|=\Big|v^\top(\psi_\theta(x,y_1)-\psi_\theta(x,y_2))\Big|\ge \Delta_g.
\]
Combining with $p_{y_1},p_{y_2}\ge p_{\min}$ yields
\[
\Var_{Y\sim\pi_\theta(\cdot\mid x)}\!\big(v^\top\psi_\theta(x,Y)\big)
\;\ge\;
p_{\min}^2\,\Delta_g^2.
\]

Using the bound above for each $x$ gives
\[
v^\top I(\theta)v \;\ge\; p_{\min}^2\,\Delta_g^2
=
\Big(\frac{e^{-2\alpha_0}}{d_{\max}}\Big)^2\,\Delta_g^2,
\]
for all $\theta\in\Theta$ and all unit $v\in H$. 
\end{proof}

\begin{lemma}[Covariance bound for $\nabla L_n(\theta^\star)$]\label{lem:score-cov}
Suppose \cref{ass:identifiable-space,ass:model-bounded} hold, and the pairwise labels are well-specified:
\[
a_i \mid e_i \sim \mathrm{Bernoulli}\!\big(\sigma(u_i(\theta^\star))\big),
\qquad i=1,\dots,n,
\]
with conditional independence across $i$ given $\{e_i\}_{i=1}^n$.
Then
\begin{equation}\label{eq:grad-mean-zero-cond}
\E\!\left[\nabla L_n(\theta^\star)\,\middle|\,\{e_i\}_{i=1}^n\right]=0,
\end{equation}
and
\begin{equation}\label{eq:grad-cov-cond}
\E\!\left[\nabla L_n(\theta^\star)\nabla L_n(\theta^\star)^\top \,\middle|\, \{e_i\}_{i=1}^n\right]
\preceq
\frac{1}{4n}\,\widehat\Sigma_n(\theta^\star).
\end{equation}
Moreover, we have
\begin{equation}\label{eq:grad-second-moment-pinv}
\E\!\left[\big\|\Sigma_D(\theta^\star)^{\dagger/2}\nabla L_n(\theta^\star)\big\|_2^2\right]
\le \frac{\dim(H)}{4n}.
\end{equation}
\end{lemma}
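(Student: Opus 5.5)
The plan is to condition on the sampled edges \(\{e_i\}_{i=1}^n\) throughout, prove the two conditional statements, and then average over the edges to get \eqref{eq:grad-second-moment-pinv}.

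\textbf{Gradient formula and zero mean.} First write the gradient explicitly. Since \(\partial_u\ell(a,u)=\sigma(u)-a\), the chain rule gives
\[
\nabla L_n(\theta^\star)=\frac1n\sum_{i=1}^n\bigl(\sigma(u_i(\theta^\star))-a_i\bigr)\,g_i,
\qquad g_i\doteq g(e_i;\theta^\star).
\]
Given \(e_i\), each \(g_i\) is deterministic. By well-specification, \(\E[a_i\mid e_i]=\sigma(u_i(\theta^\star))\). Each summand therefore has conditional mean zero, which proves \eqref{eq:grad-mean-zero-cond}.

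\textbf{Conditional covariance.} Expanding the outer product produces cross terms \(i\neq j\). These vanish by conditional independence of the labels together with the zero conditional means. The diagonal terms contribute
\[
\frac1{n^2}\sum_i \Var(a_i\mid e_i)\,g_ig_i^\top=\frac1{n^2}\sum_i\sigma'(u_i(\theta^\star))\,g_ig_i^\top .
\]
Since \(\sigma'\le 1/4\), this sum is dominated in Loewner order by \(\frac1{4n^2}\sum_i g_ig_i^\top=\frac1{4n}\widehat\Sigma_n(\theta^\star)\). This proves \eqref{eq:grad-cov-cond}.

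\textbf{Weighted second moment.} Write \(P\doteq\Sigma_D(\theta^\star)^{\dagger/2}\). Then
\[
\|P\nabla L_n(\theta^\star)\|_2^2=\tr\bigl(P\,\nabla L_n(\theta^\star)\nabla L_n(\theta^\star)^\top P\bigr).
\]
Take the conditional expectation and apply \eqref{eq:grad-cov-cond}. Conjugation by the PSD matrix \(P\) preserves Loewner order, and trace is monotone, so the conditional expectation is at most \(\frac1{4n}\tr(P\widehat\Sigma_n P)\).

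Next take the expectation over the i.i.d. edges. By linearity, \(\E[\widehat\Sigma_n(\theta^\star)]=\Sigma_D(\theta^\star)\). This gives the bound
\[
\frac1{4n}\tr\bigl(\Sigma_D^{\dagger/2}\Sigma_D\Sigma_D^{\dagger/2}\bigr)=\frac1{4n}\tr(\Pi),
\]
where \(\Pi\) is the orthogonal projector onto the range of \(\Sigma_D(\theta^\star)\).

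The only step needing care is identifying this range. Each \(g(e;\theta^\star)\) lies in \(H\) by Definition~\ref{def:tangent-space-H}, so \(\mathrm{range}(\Sigma_D(\theta^\star))\subseteq H\). Hence \(\tr(\Pi)=\mathrm{rank}(\Sigma_D(\theta^\star))\le\dim(H)\). This holds even without \cref{ass:design-coverage-a}; under that assumption the rank equals \(\dim(H)\) exactly. Either way, \eqref{eq:grad-second-moment-pinv} follows.

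The routine points are the tower-property exchange of conditional and outer expectations and the trace identities; none is a real obstacle. The subtle point is only the pseudoinverse bookkeeping, namely that \(\Sigma_D^{\dagger/2}\Sigma_D\Sigma_D^{\dagger/2}\) is a projector whose rank is controlled by \(H\).
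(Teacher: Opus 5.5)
Your proof is correct and follows the paper's argument step for step: conditional zero mean from well-specification, vanishing cross terms by conditional independence, $\sigma'\le 1/4$ on the diagonal, then averaging $\widehat\Sigma_n(\theta^\star)$ to $\Sigma_D(\theta^\star)$ and recognizing $\Sigma_D^{\dagger/2}\Sigma_D\Sigma_D^{\dagger/2}$ as an orthogonal projector. The only difference is at the rank step: you use just $\mathrm{range}(\Sigma_D(\theta^\star))\subseteq H$ to get $\mathrm{rank}\le\dim(H)$, which is all the inequality needs. The paper instead asserts equality $\mathrm{range}(\Sigma_D(\theta^\star))=H$ under \cref{ass:identifiable-space} alone, whereas that equality really requires design coverage, so your version is slightly cleaner.
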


\begin{proof}[Proof of \cref{lem:score-cov}]
For logistic loss, \(\partial \ell(a,u)/\partial u=\sigma(u)-a\). Hence for any \(\theta\in\Theta\),
\[
\nabla L_n(\theta)
=
\frac1n\sum_{i=1}^n\big(\sigma(u_\theta(e_i))-a_i\big)\,g_\theta(e_i).
\]
In particular, at \(\theta^\star\),
\[
\nabla L_n(\theta^\star)
=
\frac1n\sum_{i=1}^n\big(\sigma(u_{\theta^\star}(e_i))-a_i\big)\,g_{\theta^\star}(e_i).
\]

Condition on \(\{e_i\}_{i=1}^n\). By realizability,
\[
\E[a_i\mid e_i]=\sigma\!\big(u_{\theta^\star}(e_i)\big),
\]
so each summand has conditional mean zero, which implies \eqref{eq:grad-mean-zero-cond}. Moreover,
by conditional independence across \(i\), for \(i\neq j\),
\[
\E\!\left[
\big(\sigma(u_{\theta^\star}(e_i))-a_i\big)\big(\sigma(u_{\theta^\star}(e_j))-a_j\big)
\,\middle|\,\{e_k\}_{k=1}^n
\right]=0.
\]
Therefore,
\[
\E\!\left[\nabla L_n(\theta^\star)\nabla L_n(\theta^\star)^\top \,\middle|\, \{e_i\}_{i=1}^n\right]
=
\frac{1}{n^2}\sum_{i=1}^n
\Var(a_i\mid e_i)\,g_{\theta^\star}(e_i)\,g_{\theta^\star}(e_i)^\top.
\]
For a Bernoulli random variable with success probability \(\sigma(u)\),
\[
\Var(a\mid e)=\sigma(u)(1-\sigma(u))=\sigma'(u)\le \frac14.
\]
Hence,
\[
\E\!\left[\nabla L_n(\theta^\star)\nabla L_n(\theta^\star)^\top \,\middle|\, \{e_i\}_{i=1}^n\right]
\preceq
\frac{1}{4n^2}\sum_{i=1}^n g_{\theta^\star}(e_i)\,g_{\theta^\star}(e_i)^\top
=
\frac{1}{4n}\,\widehat\Sigma_n(\theta^\star),
\]
which proves \eqref{eq:grad-cov-cond}.

Let \(\Sigma_\star\doteq \Sigma_D(\theta^\star)=\E_{e\sim D}[g_{\theta^\star}(e)\,g_{\theta^\star}(e)^\top]\).
Note that
\[
\big\|\Sigma_\star^{\dagger/2}\nabla L_n(\theta^\star)\big\|_2^2
=
\tr\!\Big(
\Sigma_\star^{\dagger/2}\,
\nabla L_n(\theta^\star)\nabla L_n(\theta^\star)^\top\,
\Sigma_\star^{\dagger/2}
\Big).
\]
Taking expectation and using \eqref{eq:grad-cov-cond},
\[
\E\Big[\big\|\Sigma_\star^{\dagger/2}\nabla L_n(\theta^\star)\big\|_2^2\Big]
\le
\frac{1}{4n}\,
\E\!\left[\tr\!\big(\Sigma_\star^{\dagger/2}\widehat\Sigma_n(\theta^\star)\Sigma_\star^{\dagger/2}\big)\right].
\]
Since \(\E[\widehat\Sigma_n(\theta^\star)]=\Sigma_\star\), we obtain
\[
\E\Big[\big\|\Sigma_\star^{\dagger/2}\nabla L_n(\theta^\star)\big\|_2^2\Big]
\le
\frac{1}{4n}\,
\tr\!\big(\Sigma_\star^{\dagger/2}\Sigma_\star\Sigma_\star^{\dagger/2}\big).
\]
The matrix \(\Sigma_\star^{\dagger/2}\Sigma_\star\Sigma_\star^{\dagger/2}\) is the orthogonal projector onto
\(\mathrm{Range}(\Sigma_\star)\), hence its trace equals \(\mathrm{rank}(\Sigma_\star)\).
Under \cref{ass:identifiable-space}, \(\mathrm{Range}(\Sigma_\star)=H\), so
\(\mathrm{rank}(\Sigma_\star)=\dim(H)\). Therefore,
\[
\E\Big[\big\|\Sigma_\star^{\dagger/2}\nabla L_n(\theta^\star)\big\|_2^2\Big]
\le \frac{\dim(H)}{4n}.
\]
\end{proof}

To prove \cref{lem:vt-lb-trace}, we first show the following two useful technical lemmas.

\begin{lemma}[Uniform bounds for the Fisher matrix on $\Theta$]\label{lem:fisher-uniform-bounds}
Suppose \cref{ass:identifiable-space,ass:model-bounded,ass:feature-sep} hold.
Then there exist constants
\[
0<m_I\le M_I<\infty
\]
depending only on primitive constants such that for all $\theta\in\Theta$,
\begin{equation}\label{eq:fisher-uniform-two-sided}
m_I I_H \;\preceq\; I(\theta) \;\preceq\; M_I I_H
\qquad\text{on }H,
\end{equation}
where $I_H$ denotes the identity operator on the identifiable space $H$.
Equivalently, for every $v\in H$,
\begin{equation}\label{eq:fisher-uniform-two-sided-qf}
m_I\|v\|_2^2
\;\le\;
v^\top I(\theta)v
\;\le\;
M_I\|v\|_2^2.
\end{equation}

In particular, for the prior-averaged Fisher matrix
\[
\bar I_\rho \doteq \E_{\theta\sim\rho}[I(\theta)],
\]
we have
\begin{equation}\label{eq:fisher-bar-uniform-two-sided}
m_I I_H \;\preceq\; \bar I_\rho \;\preceq\; M_I I_H.
\end{equation}
Consequently, for all $\theta\in\Theta$ and all $v\in H$,
\begin{equation}\label{eq:fisher-bar-pointwise-comparable}
\frac{m_I}{M_I}\,v^\top I(\theta)v
\;\le\;
v^\top \bar I_\rho v
\;\le\;
\frac{M_I}{m_I}\,v^\top I(\theta)v .
\end{equation}
\end{lemma}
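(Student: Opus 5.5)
The two-sided bound \eqref{eq:fisher-uniform-two-sided} comes from reusing the softmax covariance representation \eqref{eq:fisher-softmax-cov} together with \cref{lem:sep-implies-fisher}; the statements about $\bar I_\rho$ then follow by averaging and a ratio argument.

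\emph{Upper bound.} Fix $\theta\in\Theta$ and $v\in H$. By \eqref{eq:softmax-score}, the score is $s_\theta(x,y)=\psi_\theta(x,y)-\E_{y'}[\psi_\theta(x,y')]$, where $\psi_\theta=\nabla_\theta f_\theta(\phi(x,y))$. \cref{ass:model-bounded} gives $\|\psi_\theta\|_2\le\alpha_1$, hence $\|s_\theta\|_2\le 2\alpha_1$. Therefore
\[
v^\top I(\theta)v=\E_x\E_y[(v^\top s_\theta)^2]\le 4\alpha_1^2\|v\|_2^2 .
\]
Tighter, this is a variance bounded by $\alpha_1^2\|v\|_2^2$, but the constant is irrelevant. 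We take $M_I:=4\alpha_1^2$.

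\emph{Lower bound.} Here we invoke \cref{lem:sep-implies-fisher} directly. It uses \cref{ass:feature-sep} and the probability floor $p_{\min}=e^{-2\alpha_0}/d_{\max}$ obtained from $|f_\theta|\le\alpha_0$. It yields $v^\top I(\theta)v\ge \underline\mu\|v\|_2^2$ uniformly in $\theta$, and we set $m_I:=\underline\mu$. The only point needing care is the order of quantifiers: feature separation must be applied for each prompt $x$ separately. Then the per-prompt variance bound $\Var(Z)\ge p_{y_1}p_{y_2}(z_{y_1}-z_{y_2})^2$ survives the outer expectation over $x$, which is exactly how that lemma is proved. This step is the main substantive ingredient. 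Since it is already established, and \cref{ass:model-bounded} implicitly provides $d_{\max}<\infty$ because candidate pools are finite, no new obstacle arises.

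\emph{Averaged statements.} Loewner inequalities are preserved under expectation, because $\theta\mapsto v^\top I(\theta)v$ is bounded and measurable. Integrating $m_I\|v\|^2\le v^\top I(\theta)v\le M_I\|v\|^2$ against $\rho$ gives \eqref{eq:fisher-bar-uniform-two-sided}.

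\emph{Pointwise comparability.} For \eqref{eq:fisher-bar-pointwise-comparable} we chain the bounds:
\[
v^\top\bar I_\rho v\le M_I\|v\|^2\le (M_I/m_I)\,v^\top I(\theta)v ,
\]
and symmetrically
\[
v^\top\bar I_\rho v\ge m_I\|v\|^2\ge (m_I/M_I)\,v^\top I(\theta)v ,
\]
both valid for every $\theta\in\Theta$ and $v\in H$.
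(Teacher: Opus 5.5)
Your proposal is correct and follows the paper's proof essentially step for step. You take $M_I=4\alpha_1^2$ from the $2\alpha_1$ score bound and $m_I=\underline\mu$ from \cref{lem:sep-implies-fisher}, then average over $\rho$ and chain the two uniform bounds. One attribution needs fixing: the uniform bound $d_{\max}$ does not come from \cref{ass:model-bounded}. It comes from the setup, where each pool has $d$ completions, and \cref{lem:sep-implies-fisher} also assumes $|\mathcal A(x)|\le d_{\max}$ explicitly. Finiteness of each pool on its own would not give a bound that is uniform over prompts.
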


\begin{proof}[Proof of \cref{lem:fisher-uniform-bounds}]
The lower bound in \eqref{eq:fisher-uniform-two-sided} is exactly the uniform nondegeneracy on $H$ proved in \cref{lem:sep-implies-fisher}, so we may take $m_I=\underline\mu$.

For the upper bound, recall
\[
I(\theta)
=
\E_x\E_{y\sim\pi_\theta(\cdot\mid x)}
\big[s_\theta(x,y)s_\theta(x,y)^\top\big],
\qquad
s_\theta(x,y)=\nabla_\theta\log\pi_\theta(y\mid x).
\]
By the softmax form and \cref{ass:model-bounded}, the score is uniformly bounded on $H$:
for all $(x,y)$ and all $\theta\in\Theta$, $
\|s_\theta(x,y)\|_2 \le 2\alpha_1$, (here $\alpha_1$ is the uniform bound on $\|\nabla_\theta f_\theta(\phi(x,y))\|_2$).
Therefore, for any $v\in H$,
\[
v^\top I(\theta)v
=
\E_x\E_{y\sim\pi_\theta(\cdot\mid x)}
\big[(v^\top s_\theta(x,y))^2\big]
\le
\E_x\E_{y\sim\pi_\theta(\cdot\mid x)}
\big[\|v\|_2^2\|s_\theta(x,y)\|_2^2\big]
\le
4\alpha_1^2\|v\|_2^2.
\]
Hence \eqref{eq:fisher-uniform-two-sided} holds with $ M_I=4\alpha_1^2$.

Now average \eqref{eq:fisher-uniform-two-sided} with respect to $\rho$.
Since Loewner order is preserved under expectation,
\[
m_I I_H
\preceq
\E_\rho[I(\theta)]
=
\bar I_\rho
\preceq
M_I I_H,
\]
which proves \eqref{eq:fisher-bar-uniform-two-sided}.

Finally, for any $v\in H$ and any $\theta\in\Theta$,
\[
v^\top \bar I_\rho v \le M_I\|v\|_2^2
\le \frac{M_I}{m_I} v^\top I(\theta)v,
\]
and similarly
\[
v^\top \bar I_\rho v \ge m_I\|v\|_2^2
\ge \frac{m_I}{M_I} v^\top I(\theta)v.
\]
This proves \eqref{eq:fisher-bar-pointwise-comparable}.
\end{proof}

\begin{lemma}\label{lem:prior-absorb}
Suppose \cref{ass:design-coverage-b,ass:prior-vantrees} holds with $\mathcal{R}=\supp{(\rho)}$, and define
\[
J(\rho)
\;\doteq\;
\int_\Theta
\nabla \log \rho(\theta)\,\nabla \log \rho(\theta)^\top\,
\rho(\theta)\,d\theta,
\qquad
\bar\Sigma_D \;\doteq\; \E_{\theta\sim\rho}\!\big[\Sigma_D(\theta)\big].
\]
Let
\[
T_\rho \;\doteq\; \tr\!\big(J(\rho)\big)
=
\int_\Theta \|\nabla \log \rho(\theta)\|_2^2\,\rho(\theta)\,d\theta
<\infty.
\]
As in~\eqref{eq:n-prior-def}, let
\[n_{\rm prior}
\;\doteq\;
\left\lceil \frac{8\,T_\rho}{\lambda_D} \right\rceil.\]
Then for all $n\ge n_{\rm prior}$,
\begin{equation}\label{eq:prior-absorbed-lemma}
J(\rho)
\;\preceq\;
\frac{n}{8}\,\bar\Sigma_D.
\end{equation}
\end{lemma}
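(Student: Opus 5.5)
The plan is to compare both sides of \eqref{eq:prior-absorbed-lemma} with a multiple of the identity on the identifiable subspace $H$. The prior information $J(\rho)$ will be bounded above by $T_\rho$ times the identity, and $\bar\Sigma_D$ will be bounded below by $\lambda_D$ times the identity. Throughout I read $\lambda_D$ as $\lambda_{\min}^H(\bar\Sigma_D)$, which is at least $\mu_{\mathcal R}$. The threshold $n_{\rm prior}$ is chosen exactly so that these two scalar bounds match.

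\textbf{Step 1 (where the matrices live).} Since $\rho$ is supported on $\Theta\subseteq\theta^\star+H$ (\cref{ass:identifiable-space}), I interpret $\rho$ as a density on this affine subspace. Accordingly, $\nabla\log\rho(\theta)$ is the gradient taken within $H$, so $\nabla\log\rho(\theta)\in H$. Hence $J(\rho)$ is a PSD operator with range contained in $H$. Likewise, each $\Sigma_D(\theta)$ is PSD, and the Loewner comparison only needs to be checked for $v\in H$; the orthogonal complement contributes zero on both sides. This bookkeeping is the one step I expect to need care. It requires stating precisely that the Van Trees prior and its score live on $H$, and it is the same convention already used implicitly in \cref{lem:vt-lb-trace}.

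\textbf{Step 2 (upper bound on the prior information).} For $v\in H$, $v^\top J(\rho)v=\int_\Theta (v^\top\nabla\log\rho)^2\rho\,d\theta\le \|v\|_2^2\int_\Theta\|\nabla\log\rho\|_2^2\rho\,d\theta=T_\rho\|v\|_2^2$, by Cauchy--Schwarz. Here $T_\rho$ is finite by \cref{ass:prior-vantrees}(iii). Equivalently, $J(\rho)\preceq\lambda_{\max}(J(\rho))I_H\preceq\tr(J(\rho))I_H=T_\rho I_H$.

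\textbf{Step 3 (lower bound on the averaged design matrix).} By \cref{ass:design-coverage-b} with $\mathcal R=\supp(\rho)$, we have $v^\top\Sigma_D(\theta)v\ge\mu_{\mathcal R}\|v\|_2^2$ for every $\theta$ in the support and every $v\in H$. Integrating against $\rho$ preserves the Loewner order, so $\bar\Sigma_D\succeq\mu_{\mathcal R}I_H$, and thus $\lambda_D\ge\mu_{\mathcal R}>0$.

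\textbf{Step 4 (conclusion).} For $n\ge n_{\rm prior}=\lceil 8T_\rho/\lambda_D\rceil$, we get $\frac n8\bar\Sigma_D\succeq\frac{n\lambda_D}{8}I_H\succeq T_\rho I_H\succeq J(\rho)$ on $H$, which is \eqref{eq:prior-absorbed-lemma}. If one prefers the threshold expressed through $\mu_{\mathcal R}$ directly, as in \cref{thm:informal}, then $\lambda_D$ may be replaced by $\mu_{\mathcal R}$. This only enlarges $n_{\rm prior}$, so the conclusion is unchanged.
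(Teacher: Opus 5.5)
Your proposal is correct and follows the paper's proof essentially step for step: bound $J(\rho)\preceq \tr(J(\rho))\,I = T_\rho I$, average the uniform coverage bound over $\rho$ to get $\bar\Sigma_D\succeq\mu_{\mathcal R} I$, and compare the two scalar multiples at $n\ge n_{\rm prior}$. The paper's proof actually uses $\mu_{\mathcal R}$ in place of the undefined $\lambda_D$, which matches your closing remark, and your explicit bookkeeping that the comparison is carried out on $H$ is a welcome clarification that the paper leaves implicit.
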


\begin{proof}[Proof of \cref{lem:prior-absorb}]
By \cref{ass:prior-vantrees}(3),
\[
T_\rho
=
\tr(J(\rho))
=
\int_\Theta \|\nabla \log \rho(\theta)\|_2^2\,\rho(\theta)\,d\theta
<\infty.
\]
Since $J(\rho)\succeq 0$, its largest eigenvalue is bounded by its trace, hence
\begin{equation}\label{eq:prior-matrix-trace-bound-lemma}
J(\rho)\preceq \tr(J(\rho))\,I = T_\rho I.
\end{equation}

Next, by \cref{ass:design-coverage-b}, for all $\theta\in\supp{(\rho)}$,
\[
\Sigma_D(\theta)\succeq \mu_\rho I.
\]
Taking expectation with respect to $\rho$ yields
\begin{equation}\label{eq:barSigma-lower-lemma}
\bar\Sigma_D
=
\E_{\theta\sim\rho}[\Sigma_D(\theta)]
\succeq \mu_\rho I.
\end{equation}

Now let $n\ge n_{\rm prior}$, where $n_{\rm prior}=\left\lceil \frac{8T_\rho}{\mu_\rho}\right\rceil$.
Then  $\frac n8\,\mu_\rho \ge T_\rho.$ Combining this with \eqref{eq:prior-matrix-trace-bound-lemma} and
\eqref{eq:barSigma-lower-lemma},
\[
J(\rho)
\;\preceq\;
T_\rho I
\;\preceq\;
\frac n8\,\mu_\rho I
\;\preceq\;
\frac n8\,\bar\Sigma_D.
\]
\end{proof}

\section{Experiment implementation details and hyperparameters}

\subsection{Implementation Details for the IMDb Experiment}
\label{app:imdb-implementation}

The IMDb experiments were run on GPU workers equipped with H100-class GPUs. Depending on the curation setting and dataset size, each DPO fine-tuning run took from several minutes to roughly one hour, while the \(D^\ast\) construction and evaluation steps were substantially lighter.

\paragraph{SFT model and reference policy.}
We follow the IMDb setup in \citet{rafailov2023direct}. We first fine-tune GPT-2-large on reviews from the training split of the IMDb dataset using supervised fine-tuning (SFT). The resulting SFT model is used as the reference policy \(\pi_0\) for all subsequent DPO experiments. In all DPO runs, the policy is initialized from this SFT model, and the reference policy is fixed to the same SFT model.

\paragraph{DPO training configuration.}
For each curated preference dataset, we run full-parameter DPO training. We use RMSprop as the optimizer, with learning rate \(10^{-5}\). The per-device training batch size is \(16\), and we use gradient accumulation of \(2\), giving an effective batch size of \(32\). We use a cosine learning-rate scheduler with warmup ratio \(0.1\). The maximum sequence length is set to \(256\). For the reward--KL frontier experiments, each point in the frontier corresponds to the final checkpoint of one DPO training run under a specific DPO regularization parameter \(\beta\). We sweep
\[
\beta \in \{0.05, 0.1, 0.2, 0.5, 1, 2, 5\}.
\]
For evaluation, we generate responses from the trained policy and compute both the reward and the KL divergence relative to the SFT reference policy. Following \citet{rafailov2023direct}, we use the sentiment classifier \texttt{siebert/sentiment-roberta-large-english} as the reward model. The KL is estimated on generated responses as the sequence-level log-probability difference between the trained policy and the reference policy,
\[
\log \hat\pi(y\mid x)-\log \pi_0(y\mid x)
=
\sum_{t=1}^{|y|}
\left[
\log \hat\pi(y_t\mid x,y_{<t})
-
\log \pi_0(y_t\mid x,y_{<t})
\right].
\]
This sequence-level empirical KL estimate follows the evaluation protocol used in \citet{rafailov2023direct}.

\paragraph{Candidate generation.}
For prompt selection, we sample a candidate pool of \(1{,}000\) prompts and generate two responses for each prompt from the SFT reference model. This gives one candidate comparison per prompt. We then select \(N=175\) comparisons for DPO training. For response selection, we generate \(D=8\) candidate responses for each prompt. This gives \(\binom{8}{2}\) candidate response pairs within each prompt. We select one response pair per prompt for DPO training.

\paragraph{Feature representation.}
To compute the \(D^\ast\)-based design, we represent each prompt--response pair \((x,y)\) using the last-token hidden representation from the SFT model. Let
\[
\phi(x,y) \in \mathbb R^d
\]
denote this last-token representation. For a candidate comparison
\[
e=(x,y_a,y_b),
\]
we define the feature difference
\[
g_e
=
\beta_D\left(\phi(x,y_a)-\phi(x,y_b)\right),
\]
where \(\beta_D\) is the DPO scaling parameter used in the design computation. This feature difference is used as a low-dimensional proxy for the pairwise sensitivity vector of the DPO logit.

\paragraph{Estimating the design weight matrix \(I(\theta_0)\).}
The design objective uses a feature-space approximation to \(I(\theta_0)\). We first fit a linear approximation to the SFT log-probabilities. Specifically, for each prompt \(x_i\) and candidate response \(y_{ik}\), we compute the SFT log-probability
\[
\ell_{ik} = \log \pi_0(y_{ik}\mid x_i).
\]
Since only within-prompt comparisons are relevant, we remove a prompt-specific mean and fit a ridge regression:
\[
\widehat \theta_0
\in
\arg\min_{\theta}
\sum_{i,k}
\left(
\phi(x_i,y_{ik})^\top \theta
-
\left(\ell_{ik}
-
\frac{1}{K_i}\sum_{\ell=1}^{K_i}\ell_{i\ell}
\right)
\right)^2
+
\alpha \|\theta\|_2^2 .
\]
This gives a feature-space approximation to the SFT policy. We then define a softmax distribution over the candidate responses for each prompt:
\[
\widehat \pi_{\theta_0}(y_{ik}\mid x_i)
=
\frac{
\exp\left(\phi(x_i,y_{ik})^\top \widehat \theta_0\right)
}{
\sum_{\ell=1}^{K_i}
\exp\left(\phi(x_i,y_{i\ell})^\top \widehat \theta_0\right)
}.
\]
The matrix \(I(\theta_0)\) is estimated by the average within-prompt feature covariance:
\[
\widehat I(\theta_0)
=
\frac{1}{M}
\sum_{i=1}^M
\sum_{k=1}^{K_i}
\widehat \pi_{\theta_0}(y_{ik}\mid x_i)
\left(
\phi(x_i,y_{ik})-\widehat \mu_i
\right)
\left(
\phi(x_i,y_{ik})-\widehat \mu_i
\right)^\top,
\]
where
\[
\widehat \mu_i
=
\sum_{k=1}^{K_i}
\widehat \pi_{\theta_0}(y_{ik}\mid x_i)
\phi(x_i,y_{ik}).
\]

\paragraph{\(D^\ast\) design computation.}
Given candidate comparisons \(e\in\mathcal E\), we compute design weights over the corresponding feature differences \(g_e\). For a nonnegative weight vector \(w\) over candidate comparisons, define the regularized information matrix
\[
A(w)
=
\lambda I
+
\sum_{e\in\mathcal E}
w_e g_e g_e^\top .
\]
The implemented \(D^\ast\) design minimizes the feature-space \(I\)-optimal objective
\[
\Phi(w)
=
\operatorname{tr}\left(
\widehat I(\theta_0) A(w)^{-1}
\right).
\]
We solve this continuous design problem approximately using a Frank-Wolfe procedure.

\paragraph{Sampling from the design.}
In the prompt selection experiment, we first compute \(D^\ast\) weights over all \(1{,}000\) candidate prompt-level comparisons. We then sample \(N=175\) comparisons without replacement according to the normalized \(D^\ast\) weights. The benchmark selects the first \(175\) prompt-level comparisons.

In the response selection experiment, we compute \(D^\ast\) weights over all within-prompt candidate response pairs. For each prompt, we normalize the weights over its \(\binom{8}{2}\) candidate response pairs and sample one pair without replacement from this within-prompt distribution. The benchmark always compares the first two generated responses for each prompt.

\paragraph{Monte Carlo replications and error bars.}
For each dataset-generation seed, we independently resample the candidate pool, regenerate candidate responses, recompute the \(D^\ast\) design weights, reconstruct the curated preference training set, and retrain the DPO policy from the same SFT reference model. Error bars in the reward--KL frontier plots report Monte Carlo standard errors across these independent runs. In the prompt selection experiment, error bars are computed over \(30\) independent runs; in the response selection experiment, error bars are computed over \(80\) independent runs.

\subsection{Implementation Details for the Anthropic-HH Experiment}
\label{app:hh-implementation}
The Anthropic-HH experiments were run on H100-class GPU workers. The trace-design construction and response generation used one GPU, while each Pythia-2.8B DPO fine-tuning run used four GPUs. Across the Anthropic-HH budgets considered, each DPO fine-tuning run took about one hour. GPT-based evaluation was performed through API calls and did not require GPU computation.

\paragraph{Dataset and reference policy.}
We further evaluate our dataset curation method on the Anthropic Helpful--Harmless (HH) preference dataset. We use the default train/test splits of the Anthropic HH-RLHF dataset, which contain preference comparisons from both helpfulness and harmlessness data. Each example consists of a prompt, a chosen response, and a rejected response. Following the DPO pipeline in \citet{rafailov2023direct}, we first perform supervised fine-tuning (SFT) on Pythia-2.8B using the chosen responses in the HH training split. The resulting SFT model is used as both the initialization of the DPO policy and the fixed reference policy \(\pi_0\). Each HH training example consists of a prompt \(x\), a chosen response \(y^+\), and a rejected response \(y^-\), which directly defines one candidate preference comparison.

\paragraph{Candidate pool and benchmark.}
We construct a candidate pool from the HH training split. In the full-pool experiment, the candidate pool contains \(M=160{,}800\) preference comparisons. Given a training budget \(N\), the benchmark dataset consists of the first \(N\) comparisons in this candidate pool. The \(D^\ast\)-curated dataset uses the same budget \(N\), but selects comparisons according to the optimized \(D^\ast\) design weights. Both methods therefore use the same number of preference comparisons for DPO training.

\paragraph{Feature construction.}
For each candidate comparison \(e_i=(x_i,y_i^+,y_i^-)\), we use the SFT model to extract hidden representations. Specifically, we feed the full sequences \(x_i \circ y_i^+\) and \(x_i \circ y_i^-\) into the SFT model and take the last-token hidden states. We denote the resulting representations by \(\phi_i^+\) and \(\phi_i^-\). We then use the difference
\[
    g_i = \beta(\phi_i^+ - \phi_i^-)
\]
as the design vector for this comparison, where \(\beta=0.1\) matches the DPO regularization parameter used in training. To make the design computation numerically tractable, we standardize the hidden representations and apply PCA to obtain a \(128\)-dimensional feature representation.

\paragraph{\(D^\ast\) design computation.}
The \(D^\ast\)-based design is computed using the same feature-space approximation procedure as in the IMDb experiment. We first fit a local parameter \(\theta_0\) by ridge regression from the feature representations to the centered SFT log-likelihoods of the chosen and rejected responses. Using this fitted \(\theta_0\), we compute a Fisher-type matrix \(I(\theta_0)\) over the candidate pool. We then solve the trace-design problem
\[
    \min_{q\in\Delta_M}
    \operatorname{tr}\left[
    \left(
        \lambda I + \sum_{i=1}^M q_i g_i g_i^\top
    \right)^{-1}
    I(\theta_0)
    \right],
\]
where \(q\) is a distribution over candidate comparisons and \(\lambda=10^{-3}\) is a ridge parameter. We solve this optimization problem using Frank--Wolfe. The resulting optimized design weights are used as sampling probabilities. For a fixed budget \(N\), we sample \(N\) comparisons \textbf{without replacement} according to these weights.

\paragraph{DPO training configuration.}
For each curated HH preference dataset, we perform DPO starting from the SFT Pythia-2.8B model, with the same SFT model fixed as the reference policy. We use the same DPO training configuration for the \(D^\ast\)-curated dataset and the benchmark dataset. In the reported runs, we use \(\beta=0.1\), batch size \(64\), RMSprop optimizer with learning rate \(10^{-6}\), and a linear learning-rate warmup over the first \(150\) optimization steps. Each model is trained for one epoch. This ensures that differences in downstream performance are attributable to the curated preference data rather than to training hyperparameters.

\paragraph{Evaluation protocol.}
To evaluate the trained policies, we generate responses on prompts from the HH test split. For each test prompt, we compare the generated response with the corresponding HH chosen response. We use sampling temperatures \(0.25\), \(0.7\), and \(1.0\), and evaluate \(500\) test prompts for each method-temperature pair. GPT-4.1 is used as an automatic judge. The judge compares the model-generated response against the HH chosen response and returns whether the model response is better, worse, or tied. We report the win rate of the trained policy's generated response against the HH chosen response, counting a tie as \(0.5\). For all methods, we use the same test prompts and the same judging protocol.

\end{document}